\documentclass[twoside]{article}

\usepackage[accepted]{aistats2026}
\usepackage{amsmath,amsfonts,bm}

\def\eqref#1{equation~\ref{#1}}

\def\1{\bm{1}}

\DeclareMathAlphabet{\mathsfit}{\encodingdefault}{\sfdefault}{m}{sl}
\SetMathAlphabet{\mathsfit}{bold}{\encodingdefault}{\sfdefault}{bx}{n}

\newcommand{\R}{\mathbb{R}}

\usepackage{hyperref}
\usepackage{url}
\usepackage{tikz}
\usepackage{amsmath}
\usepackage{amssymb}
\usepackage{mathtools}
\usepackage{amsthm}
\usepackage{dsfont}
\usepackage{wrapfig}
\usepackage{microtype}
\usepackage{multirow}
\usepackage{graphicx}
\usepackage[small]{caption}
\usepackage{subcaption}
\usepackage[nice]{nicefrac}
\usepackage[capitalize,noabbrev]{cleveref}
\usepackage{algorithm}
\usepackage{algorithmic}
\usepackage{enumerate}
\usepackage{xcolor,colortbl}
\usepackage{nicefrac}
\usepackage[most]{tcolorbox}

\definecolor{mydarkblue}{rgb}{0,0.08,0.45}
\hypersetup{
  colorlinks=true,
  linkcolor=mydarkblue,
  citecolor=mydarkblue,
  urlcolor=mydarkblue
}

\theoremstyle{plain}
\newtheorem{theorem}{Theorem}
\newtheorem{proposition}{Proposition}
\newtheorem{lemma}{Lemma}
\newtheorem{corollary}{Corollary}
\theoremstyle{definition}
\newtheorem{definition}{Definition}
\newtheorem{assumption}{Assumption}
\theoremstyle{remark}
\newtheorem{remark}{Remark}

\newcommand{\bw}{\mathbf{w}}
\newcommand{\bP}{\mathbf{P}}
\newcommand{\bx}{\mathbf{x}}
\newcommand{\bX}{\mathbf{X}}
\newcommand{\bZ}{\mathbf{Z}}

\newcommand{\ba}{\mathbf{a}}
\newcommand{\bv}{\mathbf{v}}
\newcommand{\by}{\mathbf{y}}
\newcommand{\bW}{\mathbf{W}}
\newcommand{\bJ}{\mathbf{J}}
\newcommand{\bH}{\mathbf{H}}
\newcommand{\bI}{\mathbf{I}}

\newcommand{\bb}{\mathbf{b}}
\newcommand{\bR}{\mathbf{R}}

\DeclareMathOperator{\img}{img}

\definecolor{snsblue}{RGB}{52, 152, 219}
\definecolor{snsorange}{RGB}{230, 126, 34}

\usepackage{enumitem}

\newcommand{\ourtitle}{On the Interplay of Priors and Overparametrization\\ in Bayesian Neural Network Posteriors}
\runningtitle{On the Interplay of Priors and Overparametrization in Bayesian Neural Network Posteriors}

\usepackage[round]{natbib}
\renewcommand{\bibname}{References}
\renewcommand{\bibsection}{\subsubsection*{\bibname}}

\begin{document}

\runningauthor{Julius Kobialka, Emanuel Sommer, Chris Kolb, Juntae Kwon, Daniel Dold, David R\"ugamer}

\twocolumn[

\aistatstitle{\ourtitle}

\aistatsauthor{Julius Kobialka$^{*}$\\LMU Munich, MCML \And Emanuel Sommer$^{*}$\\LMU Munich, MCML \AND Chris Kolb\\LMU Munich, MCML \And Juntae Kwon\\LMU Munich \And Daniel Dold\\HTWG Konstanz \And David R\"ugamer\\LMU Munich, MCML}
\aistatsaddress{ }
]

\begin{abstract}
Bayesian neural network (BNN) posteriors are often considered impractical for inference, as symmetries fragment them, non-identifiabilities inflate dimensionality, and weight-space priors are seen as meaningless. In this work, we study how overparametrization and priors together reshape BNN posteriors and derive implications allowing us to better understand their interplay. We show that redundancy introduces three key phenomena that fundamentally reshape the posterior geometry: balancedness, weight reallocation on equal-probability manifolds, and prior conformity. We validate our findings through extensive experiments with posterior sampling budgets that far exceed those of earlier works, and demonstrate how overparametrization induces structured, prior-aligned weight posterior distributions.
\end{abstract}

\section{INTRODUCTION}

Bayesian neural networks (BNNs) have been a central tool for facilitating uncertainty quantification in deep learning by placing distributions over weights and approximating their posterior. As the posterior of such networks is usually intractable, one has to rely on simplifying variational assumptions \citep[e.g.,][]{blundell2015weight} or other approximation techniques \citep[e.g.,][]{daxberger_2021_LaplaceReduxa} to estimate these distributions. However, these approximations are rather crude and often insufficient, raising doubts about whether BNNs are truly appropriate for uncertainty quantification.

    \begin{figure}[t]
    \centering
    \vspace{0.5em}
    \setlength{\tabcolsep}{8pt}
    \renewcommand{\arraystretch}{1}
    \begin{tabular}{c}
        \hspace{-1em} \includegraphics[width=0.44\linewidth]{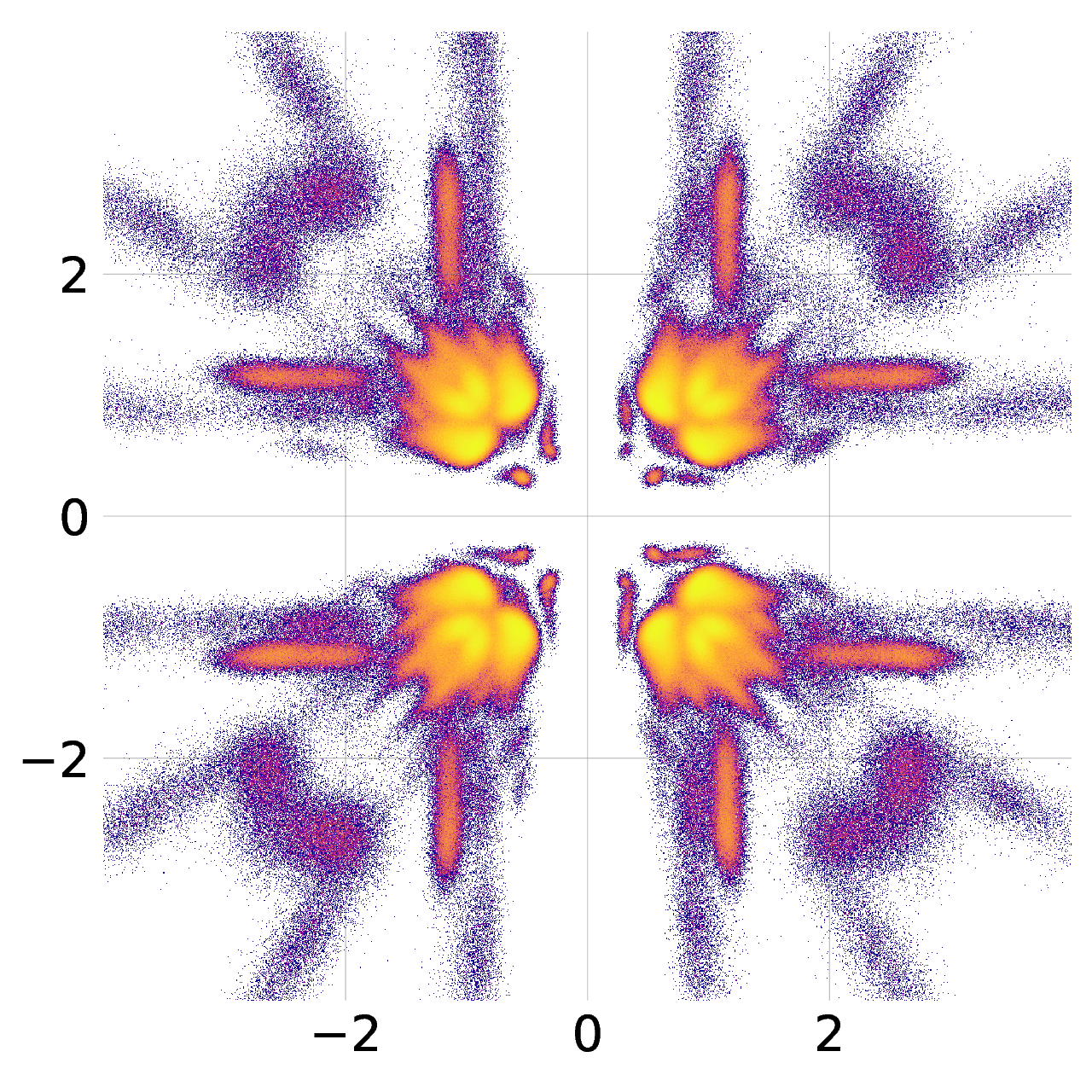} 
        \hspace{-0.3em} \includegraphics[width=0.44\linewidth]{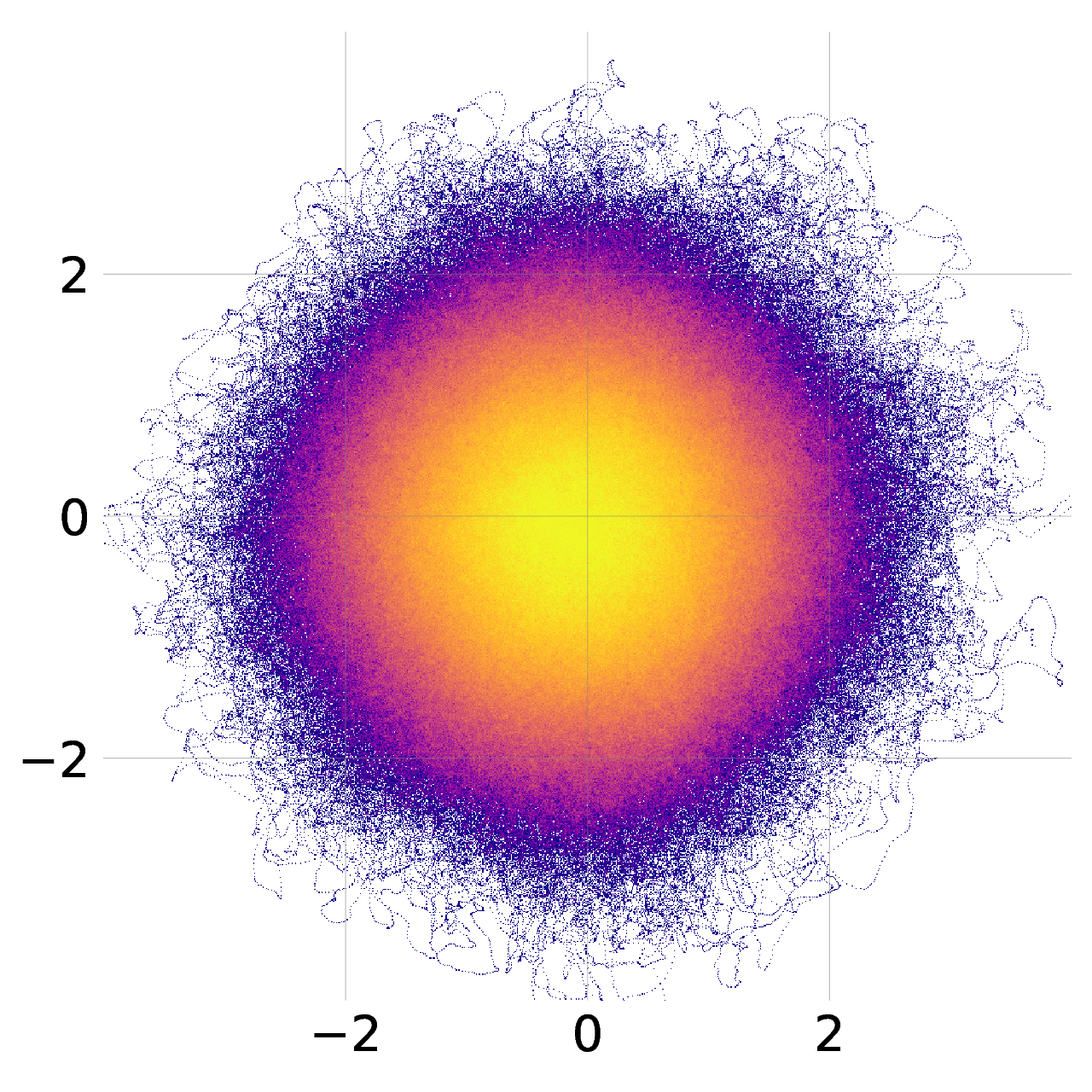}  \\
        \scriptsize Underparametrized \hspace{4.5em} Overparametrized   
    \end{tabular}
    \caption{
    Sampled marginal bivariate posterior densities of two weights (axes, lighter means higher density) for an underparametrized (left) and an overparametrized model (right) on the same task using 10 million posterior samples.
    }
    \label{fig:rohrschach}
\end{figure}
Sampling-based inference for BNNs, an alternative approach that estimates the posterior without restrictive distributional assumptions, has gained traction thanks to recent advances. These include methods to scale to large datasets \citep{adasghmc} and large models \citep{MILE}, as well as approaches improving acceptance rates \citep[e.g.,][]{papamarkou2023}, prior understanding \citep[e.g.,][]{vladimirova2019understanding}, initialization \citep{sommer2024connecting}, and scalable software implementations \citep{duffield2025scalable}. Several of these works report superior performance compared to variational methods and other approximations.

Nevertheless, criticisms in the community remain \citep[see, e.g.,][]{pmlr-v235-papamarkou24b}, questioning the adequacy of (sampling-based) inference for BNNs: \textit{Why does the posterior landscape appear less and less fragmented (cf.~\cref{fig:rohrschach}) in overparametrized models?} \textit{How does the non-identifiability affect posterior inference?} \textit{Are there meaningful priors in weight space?} \textit{What can be said about the individual weight distributions?}

In this paper, we make important advances toward answering these questions. In particular, we identify \emph{overparametrization} as the common property that shapes BNN posteriors and helps to explain the recent success of sampling.

\paragraph{Our Contributions} In this work, we theoretically and empirically investigate how overparametrization and priors shape the posterior and how this affects sampling-based inference. 
Specifically, we
\begin{itemize}[leftmargin=1.5em]
    \item translate properties from the optimization literature into statements about prior choices and posterior shape; 
    \item show how overparametrization affects the posterior by creating equal-probability manifolds;
    \item show that overparametrization induces what we call \emph{prior conformity}, yielding an easy-to-encode a priori understanding of weight distributions;
    \item conduct extensive experimental evaluations to analyze the effect of changing posterior shapes on sampling-based inference, with sampling budgets largely exceeding existing work.
\end{itemize}

\section{RELATED LITERATURE} \label{sec:relwork}

Sampling-based inference for BNNs typically relies on Markov Chain Monte Carlo (MCMC) methods. It is often considered a gold standard in the Bayesian deep learning (BDL) community \citep[see, e.g.,][]{farquhar_2020_LibertyDeptha}, theoretically allowing to sample from the true posterior (in the limit). While devising MCMC methods that can handle the high dimensionality and intricate structure of parameter spaces in modern neural networks has remained a challenge \citep{papamarkou_2022_ChallengesMarkova, wenzel_2020_HowGooda}, recent advances have made progress toward more efficient mixing of Markov chains \citep{sen2024bayesian}, scaling samplers for larger datasets \citep{paulin2025sampling,chen2014stochastic,zhang_2020_CyclicalStochastic}, and parameter spaces \citep{sommer2024connecting,MILE,SMILE}. 

\paragraph{Non-Identifiabilities and Symmetries} A particular challenge for sampling-based BNN inference lies in the fact that contemporary neural networks contain a vast number of neurons and connections that induce a high degree of redundancy in model parameters \citep{papamarkou_2022_ChallengesMarkova}. As such, the function learned by a neural network is invariant to a range of parameter transformations, which subsequently induces loss invariance \citep{ziyin2024symmetry}. In the literature, these invariant transformations are also viewed as symmetries, with the most common examples introduced in \cref{sec:background}. In the BNN literature, symmetries have been repeatedly discussed with the general premise of being harmful in posterior inference as they introduce redundancy and, thus, cause inefficiencies in posterior approximation \citep{papamarkou_2022_ChallengesMarkova, wiese2023towards, laurent2024a}. 

Proposals to deal with symmetries in neural networks include bias sorting \citep{pourzanjani_2017_ImprovingIdentifiabilityb}, assigning individual parameter offsets \citep{ziyin2024removesymmetriescontrolmodel}, or skip connections \citep{kurle2021symmetries} to remove permutation invariances; using invariant networks \citep[e.g.,][]{maron2018invariant,navon_2023_EquivariantArchitecturesa}, removing scaling symmetries via regularization \citep{laurent2024a}, or computing a model average over the elements of a symmetry orbit \citep{gelberg2024variational}. While symmetries are also known to slow down sampling \citep{nalisnick_priors_2018, papamarkou_2022_ChallengesMarkova, wiese2023towards}, only a few papers have studied symmetries in sampling-based inference systematically.

\paragraph{Bayesian Neural Networks} Likewise, there are only a few works that elucidate the connection between posterior structure, symmetries, and overparametrization in BNNs. Earlier large-scale empirical studies of sampling-based posterior approximations were able to uncover posterior mode connectivity that can be exploited by samplers \citep{izmailov_2021_WhatArea}, in line with the mode connectivity phenomenon in deep NNs \citep{garipov_2018_LossSurfacesb, draxler_2018_EssentiallyNo} and BDL methods exploiting this phenomenon, such as subspace inference \citep{izmailov_2020_SubspaceInferencea, dold2024, dold2025}. In contrast to the optimization community, where the influence of overparametrization is hypothesized to make the loss landscape more benign, work such as \citet{trippe2017overpruning} expressed concern that a potential pathology for variational approximations of overparametrized BNN posteriors can paradoxically lead to worse performance and induce conditional independence between parameters and data.

We refer the reader to \cref{app:rellit} for further discussion of related literature.

\section{THE INTERPLAY BETWEEN OVERPARAMETRIZATION AND SYMMETRIES} \label{sec:background}

We start with an instructive example to explain the interplay between overparametrization, priors, and symmetries in a simplified setup and define our notation. 

\paragraph{Setup and Notation} First, let $f(\bx;\bw)$ denote a neural network with input $\bx\in\mathcal{X}$ and parameters $\bw \in \mathcal{W} \equiv \mathbb{R}^d$, where $\bw$ stacks all weights across all layers. Throughout the paper, we will work with various architectures, but focus on homogeneous networks to better convey our findings and demonstrate the properties induced by network components. We will also use $f(\bw)$ if discussing a property that holds for all $x\in\mathcal{X}$. When considering fully-connected architectures, we will use ``$M_1$-$M_2$-$\ldots$-$M_L$'' as a short form for a network with $M_l$ neurons in layers $l\in[L]:=\{1,\ldots,L\}$ Given training data $\mathcal{D}=\{(\bx_i,\by_i)\}_{i=1}^n$, we define the empirical (unregularized) loss
   $\mathcal{L}(\bw) \;:=\; -\textstyle\sum_{i=1}^n \ell(f(\bx_i;\bw),\by_i)$,    
where $\ell$ is a pointwise loss function. In this paper, we will study BNNs that have a corresponding regularized risk learning problem
\begin{equation*}
    \arg\min_{\bw} \mathcal{L}_\lambda := \arg\min_{\bw} \Big( \mathcal{L}(\bw) + \lambda \mathcal{R}(\bw) \Big),    
\end{equation*}
with regularizer $\mathcal R(\bw)$ and tuning parameter $\lambda>0$.

From a Bayesian perspective, we interpret $\mathcal{L}(\bw)$ as the negative log-likelihood of the data given the weights $\mathcal{L}(\bw) \;=\; -\sum_{i=1}^n \log p(\by_i \mid f(\bx_i;\bw))$ with $f$ typically encoding the mean of the distribution, and assume a prior $p(\bw) \propto \exp(-\lambda\mathcal{R}(\bw))$. Then the posterior $p(\bw|\mathcal{D}) \propto \exp(-\mathcal{L}_\lambda)$. A standard choice we will not only use throughout this paper but also advocate for is the isotropic Gaussian prior $p(\bw) \propto \exp\!\left(-\tfrac{1}{2\tau^2}\|\bw\|_2^2\right)
$ with $\lambda = 1/(2\tau^{2})$. If prior variances differ across layers $l\in[L]$ of an $L$-layer network, we will make this explicit by writing $\mathcal{L}_{{\bm{\tau}}}$ with $\bm{\tau} = \{\tau_l\}_{l\in[L]}$. We will use the Bayesian and regularized risk point of view often interchangeably when talking about properties of the weight space.

\paragraph{Overparametrization} Given the recent success of warmstarted sampling-based inference \citep[see, e.g.,][]{MILE}, we will investigate BNNs that have more hidden neurons than necessary to represent the underlying data-generating process. This is in line with other works studying overparametrization, such as \citet{nguyen2019connected} or \citet{kim2025exploring}. An alternative layerwise property to define overparametrization is to assume that the activation matrix of the $l$-th layer has rank strictly smaller than its number of neurons. Formal definitions are given in \cref{app:defs}.

\subsection{Instructive Overparametrization Example}

In the following instructive example, we will provide some intuition on how overparametrization interacts with symmetries and introduce concepts required for our more general exposition in subsequent sections. 
As an extension of \textit{single-neuron linear networks} \citep{kunin2024get, azulay2021implicit}, we consider a linear neural network for scalar inputs given by
\begin{equation} \label{eq:oneMone}
f(x;\bw) = \textstyle\sum_{m=1}^M w_{1,m} w_{2,m} x, \quad x\in\mathbb{R},
\end{equation}
which can be interpreted as a shallow network with $M$ hidden units (a \emph{1}-$M$-\emph{1} network). It is intuitively clear why $f$ is \emph{overparametrized}: the space of functions that can be represented by $f$ is the hypothesis space $\{h: h(x) = x\beta,  \beta\in\mathbb{R}, x\in\mathcal{X}\}$, i.e., a univariate linear model. Yet, $f$ uses $2M$ instead of just one parameter. As a result, $f$ admits various symmetries: 
\begin{enumerate}[leftmargin=1.5em]
    \item a (positive) rescaling symmetry as $\forall c>0$ and $\forall m\in[M]$, we have that $f(\bw) = f(\tilde{\bw})$ for $\tilde{\bw} = (w_{1,1}, w_{2,1},\ldots, c \cdot w_{1,m}, c^{-1} \cdot w_{2,m},\ldots,w_{1,M},w_{2,M})$; 
    \item as a special case of the above, a sign-flip symmetry as $f(\bw) = f(\tilde{\bw})$ for $\tilde{\bw}$ containing at least one pair of flipped signs, i.e., $\tilde{\bw} = (\ldots, - w_{1,m}, -w_{2,m},\ldots)$; \item a permutation symmetry, since $f(\bP\bw) = f(\bw)$ for any block-permutation matrix $\bP$ that permutes hidden units as pairs; 
    \item a rotation symmetry, since $f$ only depends on inner products. In particular, for any orthogonal matrix $\bR$, $f(\bw) = \bw_1^\top \bw_2 x = (\bR \bw_1)^\top (\bR \bw_2) x = f(\tilde{\bw})$ with $\bw_i = (w_{i,1},\ldots,w_{i,M})$ and $\tilde{\bw}$ being the vector $\bw$ containing the rotated components.
\end{enumerate}

\paragraph{The Bayesian Perspective} From a Bayesian point of view, the existence of symmetries in $f$ implies invariance to certain transformations. If the inferential target is merely $f$, symmetry-respecting priors are the principled baseline encoding knowledge about $f$.
More specifically, the matching prior for rescaling symmetries must encode a Cartesian product of scale-invariant improper prior distributions over the product of weights for every weight combination that admits rescaling symmetries, e.g., $p(w_j,w_{j'}) \propto (w_j w_{j'})^{-1}$. This is a rather atypical choice in BNNs, and as a consequence, most posterior densities will not be scale invariant. 
In contrast, every symmetric prior admits sign-flip symmetries. If $f$ is sign-flip invariant, we have $p(w_{j}|\mathcal{D}) = p(-w_{j}|\mathcal{D})$, and thus $\mathbb{E}(w_{j}|\mathcal{D}) = 0$ for every marginal posterior of $w_{j}$. Permutation symmetries imply that the (co-)variance structure of the permuted weights is identical.
In particular, heterogeneous weight priors imply a permutation variant posterior density. Finally, rotations, which can be viewed as a continuous generalization of permutations, are challenging to encode in general. For specific examples as shown in the next section, however, an independent Gaussian distribution $\bw \sim \mathcal{N}(0,\tau^2 \bI)$ preserves such symmetries. 

\subsection{The Influence of Regularization} 

The previously described conformity between symmetries and weights can also be regarded from a regularization perspective. While the \emph{1}-$M$-\emph{1} network might admit previously discussed symmetries, this is not necessarily the case for the loss function $\mathcal{L}_\lambda$, in particular for $\lambda>0$ and rescaling symmetries. In contrast to $f$, the regularizer $\mathcal{R}$ in $\mathcal{L}_\lambda$ is typically not invariant under rescaling. For $L_2$-regularization, we obtain for the example above that for any $\bw \in \mathbb{R}^{2M}$ there exists a rescaled version 
$\tilde\bw\in\mathbb{R}^{2M}$ for which $\mathcal{L}_\lambda(\tilde\bw) \;\leq\; \mathcal{L}_\lambda(\bw)$.
That is, the penalty breaks the rescaling invariance and prefers \emph{balancedness}. For $M=1$, this means that it must hold $w_{1,1}^2 = w_{2,1}^2 = |{\beta}|$ with $\beta = w_{1,1}w_{2,1}$, as the regularizer would otherwise be larger while $f$ stays constant. This can be seen by applying the AM-GM inequality: $\mathcal{R}(\bw) = \frac{1}{2} (w_{1,1}^2 + w_{2,1}^2) \geq \sqrt{w_{1,1}^2 w_{2,1}^2} = |\beta|$, where equality holds iff $|w_{1,1}| = |w_{2,1}| = \sqrt{|\beta|}$. In general, for any homogeneous unit, we obtain this balancedness at optimality \citep{parhi2023deep}. 

For $M>1$ and $\beta = \bw_1^\top \bw_2$, however, exact balancedness is only enforced between product factors, i.e., $|w_{1,m}| = |w_{2,m}| \,\forall m\in[M]$ and not across neurons. This can be understood as a result of the function's rotation symmetry, yielding the following global minimizers of $\mathcal{R}(\bw)$ for a fixed $\beta$: $$\bw_1=\pm\sqrt{|\beta|} \cdot \bm{v},\, \bw_2=\pm\text{sign}(\beta)\sqrt{|\beta|} \cdot \bm{v}$$ with $\bm{v} \in \mathbb{S}^{M-1} := \{\bm{v} \in \mathbb{R}^M: ||\bm{v}||_2 = 1\},$ using $\text{sign}(\beta)$ for $\bw_2$ without loss of generality. 

\paragraph{The Bayesian Perspective} The zero-centered isotropic Gaussian prior as the equivalent of an $L_2$ regularization does not preserve scaling symmetries. However, it preserves sign-flip symmetries (and rotation symmetry in this example). As we will see in the following sections, this trait and the balancedness among weight norms imply important properties.

\subsection{The Influence of Overparametrization} \label{sec:infloverparam}

In \cref{fig:oneMone:a}, we plot the resulting posterior from a bivariate marginal point of view for two components $w_{1,m},w_{2,m}$ for different $M$. This reveals an interesting effect: While the scaling and permutation symmetries are clearly visible for $M=1$, producing a butterfly-shaped posterior when the penalty (prior) is not too strong, increasing $M$ causes the rotation symmetry to dominate, pushes the posterior mode closer to zero, and decreases the anisotropy of the distribution. 
\begin{figure}[!htb]
    \centering
        \includegraphics[width=0.95\linewidth]{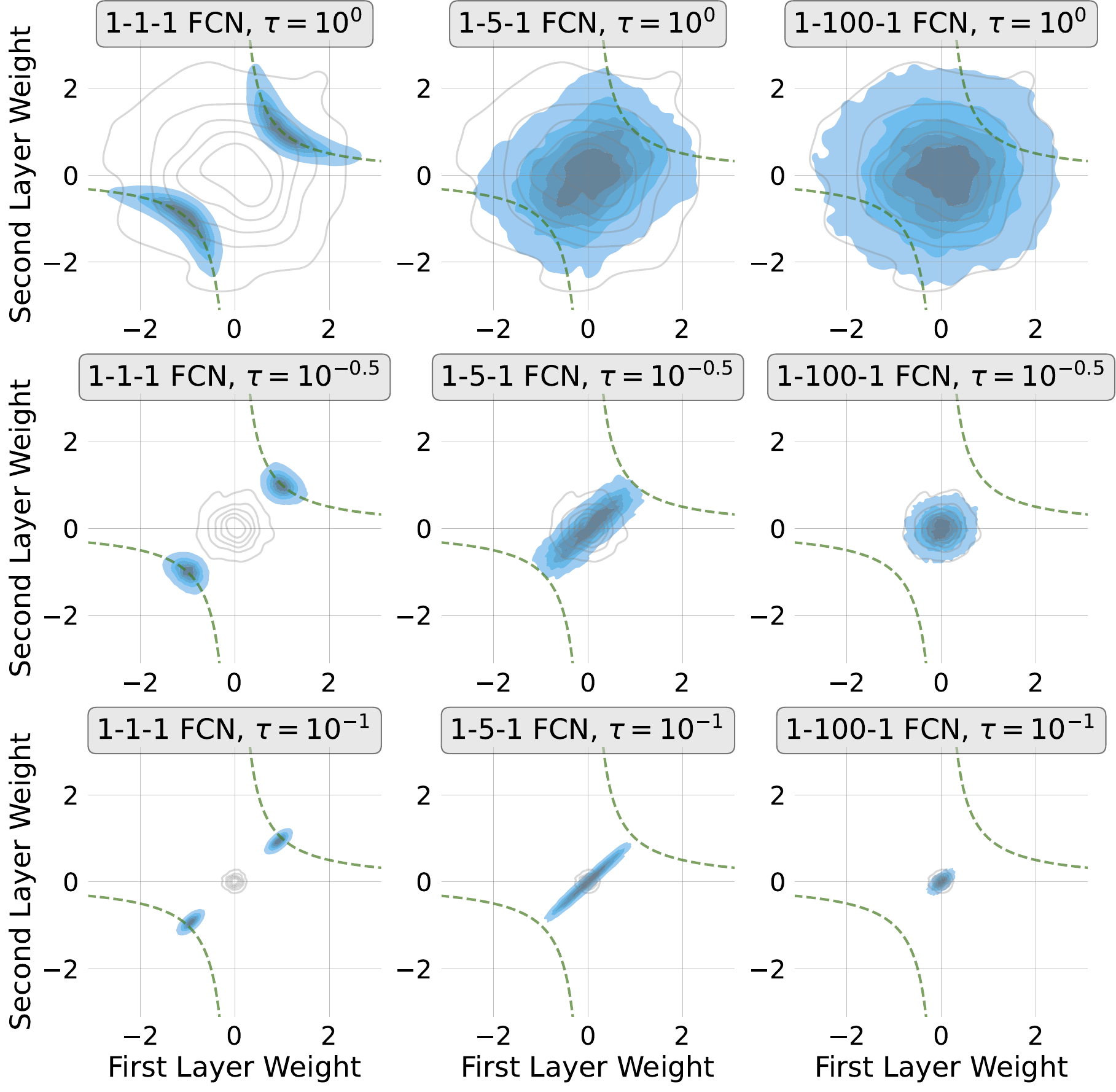}
    \caption{Posterior marginals of \emph{1}-$M$-\emph{1} fully-connected network (FCN) with zero-mean Gaussian priors and different variances $\tau^2$ and data $y = x +\epsilon$, $\epsilon \sim N(0, 0.1)$. }
    \label{fig:oneMone:a}
\end{figure}
More specifically, under the constraint $\sum_{m} u_{m} := \sum_{m=1}^M w_{1,m} w_{2,m}  = \beta$ , i.e., an optimal likelihood, and using the prior $w_{j,m}\sim\mathcal{N}(0,\tau^2)$, we have $\mathbb{E}(w_{j,m} | \sum_m u_m = \beta) = 0$, $\operatorname{Var}(w_{j,m} | \sum_m u_m = \beta) = \tau^2 + O(M^{-2})$, and 
\begin{equation*} \label{eq:covu}
\text{Cov}(w_{j,m},w_{j',m}|\textstyle\sum_m w_{1,m} w_{2,m} = \beta) = \beta/M.
\end{equation*}
We will formalize this result in the next section.

\paragraph{The Bayesian Perspective} From a Bayesian perspective, increasing the overparametrization in this linear network case implies that when in high likelihood regions, posterior variances approach the prior variances, and the covariance between two weights vanishes, as can be nicely seen in \cref{fig:oneMone:a}.

\section{GENERAL RELU NETWORKS} \label{sec:global}

Moving from the previous instructive example to more general ReLU networks, we investigate the existence of these properties in more realistic architectures. As before, we analyze BNNs that obtain constant high likelihood values, which allows us to reason about their local posterior geometry in regions that ultimately matter the most,  which is again in line with recently proposed warmstarting approaches. In contrast to the previous section, where this was formalized by assuming an underlying linear model, we here generalize the idea by contrasting an overparametrized model $f$ with excessive hidden neurons with an interpolating network $f^\ast$ that achieves minimum norm (i.e., the smallest model that can still fit the data and adheres to the prior). Due to non-identifiabilities, many models $f$ can achieve the same loss as $f^\ast$, but not all are minimum norm solutions. We therefore define the following:

\begin{definition}[Manifold of minimum norm interpolators] \label{def:mnman}
    Given an $L$-layer network $f$ with weights $\bw$ and interpolating $L$-layer network $f^\ast$ with weights $\bw^\ast$ attaining the minimum norm among all interpolants of $L$-layer networks, the \emph{minimum norm manifold} is defined by 
    \begin{equation}
        \mathcal{M} \!=\! \{\bw\in\mathbb{R}^d: \mathcal{L}_\lambda(\bw) = \mathcal{L}_\lambda(\bw^\ast)\},
    \end{equation}
\end{definition}
where we assume $\mathcal{R}$ to be a (weighted) $L_2$-regularization (i.e., a diagonal Gaussian prior). 

Overparametrization then means that each neuron $\varpi \in [M^\ast]$ of $f^\ast$ can be replicated by a group $G_\varpi \subseteq [M]$ of $k_\varpi = |G_\varpi|$ neurons in $f$, as described by a surjective assignment map $\varsigma: [M] \to [M^\ast]$. The key quantity of interest becomes the vector of reallocation coefficients $\bm{\rho}^{(\varpi)}$, which describes how the norm of neuron $\varpi$ in $f^\ast$ is distributed among its copies in $f$, which is the higher-dimensional analogue of the equal spreading of weight norm observed in \cref{sec:infloverparam}. That is, each neuron $m \in G_\varpi$ in $f$ takes the form $\bm\omega_m = \sqrt{\rho_m}\,\bm\omega_{\varpi}^\ast$, where $\rho_m \geq 0$ denotes the fraction of the squared norm of neuron $\varpi$ allocated to copy $m$, with $\sum_{m \in G_\varpi} \rho_m = 1$. Since ReLU is positive homogeneous, this rescaling preserves the function (see \cref{lem:simplex}).

As we cannot condition the posterior on $\mathcal{M}$ in the usual way, we define an induced (conditional) law via shrinking tubes around $\mathcal{M}$ (an $\varepsilon$-neighborhood $\mathcal{M}^\varepsilon$). This yields a way to study the ambient space around the minimum-norm manifold $\mathcal{M}$.

\begin{definition} \label{def:mnman_tube}
For any $\varepsilon > 0$, define the $\varepsilon$-tube around a manifold
$\mathcal{M} \subseteq \mathcal{W}$ as
\[
\mathcal{M}^{\varepsilon} := \bigl\{\mathbf{w} \in \mathbb{R}^d :
\operatorname{dist}(\mathbf{w}, \mathcal{M}) \leq \varepsilon \bigr\},
\]
where the distance is measured in the Euclidean norm. 
\end{definition}

For each $\varepsilon > 0$, the tube has positive Lebesgue measure, admitting:
\[
\mathbb{P}_n^{\varepsilon}(A) := \mathbb{P}_n\bigl(\mathbf{w} \in A
\mid \mathbf{w} \in \mathcal{M}^{\varepsilon}\bigr), \qquad A \subseteq
\mathcal{M},
\]
where $\mathbb{P}_n(d\mathbf{w}) = p(\mathbf{w} \mid \mathcal{D}_n)\, d\mathbf{w}$
is the ordinary posterior given $n$ observations.

\subsection{Two-Layer ReLU Networks} \label{sec:twolayer}

The general shallow $p$-$M$-\emph{1} ReLU network $f(\mathbf x) = \sum_{m=1}^M  w_{2,m}^\top \phi (\mathbf w_{1,m}^\top \mathbf{x})$ with ReLU activation functions $\phi$ and multivariate input $\bx \in \mathbb{R}^p$
has been intensively studied in the literature and admits useful properties such as convexifiability \citep[see, e.g.,][]{mishkin2022fast} and connectedness of solution sets in the overparametrized regime \citep{kim2025exploring}. To generalize previously made observations of balanced weight behavior as $M$ increases, we obtain the following:
\begin{theorem}[informal] \label{th:simplex}
    Under \cref{ass:interpol} (an overparametrized $p$-$M$-\emph{1} model $f$ and a minimal norm cost interpolant $f^\ast$ with  $M^\ast < M$ neurons), the coefficients $\bm{\rho}^{(\varpi)}\in\Delta^{k_{\varpi}-1}$ of any norm-preserving reallocation $\varsigma:[M] \to [M^\ast]$ of the weight strength of neuron $\varpi\in[M^\ast]$ of $f^\ast$ to a group of weights $m \in G_{\varpi}$ with $k_\varpi = |G_\varpi|\geq 1$ neurons from a model $f$ on $\mathcal{M}$ follows a symmetric Dirichlet distribution.
\end{theorem}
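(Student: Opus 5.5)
We parametrize a neighbourhood of the minimum-norm manifold $\mathcal{M}$ by the reallocation coefficients and compute the conditional law obtained from the shrinking tubes $\mathcal{M}^\varepsilon$.

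\textbf{Step 1: parametrize $\mathcal{M}$.} Take $f^\ast$ with neuron parameters $\bm\omega^\ast_\varpi=(\bw^\ast_{1,\varpi},w^\ast_{2,\varpi})$ and a surjective assignment $\varsigma$. By \cref{lem:simplex}, positive homogeneity of ReLU shows that every $\bw$ with $\bm\omega_m=\sqrt{\rho_m}\,\bm\omega^\ast_{\varsigma(m)}$ and $\bm\rho^{(\varpi)}\in\Delta^{k_\varpi-1}$ represents the same function as $f^\ast$. It also has the same squared norm, since
\[
\sum_{m\in G_\varpi}\rho_m\|\bm\omega^\ast_\varpi\|^2=\|\bm\omega^\ast_\varpi\|^2 .
\]
Hence $\mathcal{L}_\lambda(\bw)=\mathcal{L}_\lambda(\bw^\ast)$, so $\bw\in\mathcal{M}$. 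Under \cref{ass:interpol} (minimality and, implicitly, non-collinear neurons in $f^\ast$), I would argue that these are, up to permutations and sign flips, all points of $\mathcal{M}$. This gives a chart $\Phi:\prod_\varpi\Delta^{k_\varpi-1}\to\mathcal{M}$.

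\textbf{Step 2: the density is constant on $\mathcal{M}$.} The posterior density $\exp(-\mathcal{L}_\lambda)$ takes the same value at every point of $\mathcal{M}$. Therefore, as $\varepsilon\to 0$, $\mathbb{P}_n^\varepsilon$ converges to the normalized Hausdorff measure on $\mathcal{M}$, weighted by the inverse normal curvature of $\mathcal{L}_\lambda$. I would handle this with a coarea/Laplace argument: write the tube volume element as $\mathrm{d}\mathcal{H}\times\mathrm{d}(\text{normal})$ and expand $\mathcal{L}_\lambda$ quadratically in the normal directions. The limit law in $\bm\rho$ then has density proportional to the Jacobian $\sqrt{\det(D\Phi^\top D\Phi)}$, times the normal Gaussian factor $\det(\bH_\perp)^{-1/2}$.

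\textbf{Step 3: compute the Jacobian.} Within a group $G_\varpi$, the copy $\bm\omega_m=\sqrt{\rho_m}\,\bm\omega^\ast$ moves along a fixed direction as $\rho_m$ varies, with speed $\|\bm\omega^\ast\|/(2\sqrt{\rho_m})$. So the pullback metric on the simplex is the Fisher–Rao-type metric $\mathrm{diag}(1/(4\rho_m))$ restricted to $\sum_m\rho_m=1$. Its volume element is $\propto\prod_m\rho_m^{-1/2}$. This is the symmetric $\mathrm{Dir}(\tfrac12,\dots,\tfrac12)$ density, and the groups factorize independently across $\varpi$.

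Equivalently, the map $\rho_m=u_m^2$ with $u\in\mathbb{S}^{k_\varpi-1}$ pushes the uniform measure on the sphere to $\mathrm{Dir}(\tfrac12,\dots)$. This matches the rotation picture of \cref{sec:infloverparam}.

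\textbf{Main obstacle.} The main difficulty is showing that the normal factor $\det(\bH_\perp)$ does not depend on $\bm\rho$. Otherwise it would tilt the law away from a symmetric Dirichlet. Exchangeability of copies within a group already gives symmetry, so the concentration parameter could only be altered by a nonconstant normal factor. I would first try to show that the normal Hessian is block-structured, consisting of the prior term $\lambda\bI$ plus a likelihood term. The likelihood term depends on the copies only through the aggregate neuron, so it transforms covariantly under the rotation $u\mapsto Ru$ of $\mathbb{S}^{k_\varpi-1}$, which leaves its determinant invariant. If this fails for generic data, I would weaken the claim to "symmetric Dirichlet" without specifying the concentration: the limit is exchangeable in $\bm\rho$, and a Dirichlet form follows whenever the weighting factorizes as $\prod_m g(\rho_m)$, which the product structure of the Gaussian prior suggests.
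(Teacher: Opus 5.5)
Your decisive computation is the paper's. In the proof of Theorem~\ref{th:simplex-formal}, the paper notes that likelihood and prior are constant on $\mathcal{M}_\varsigma$, so the restricted posterior is proportional to the intrinsic volume. The map $\bm\alpha^{(\varpi)}\mapsto(\alpha_m\bm\omega^\ast_\varpi)_{m\in G_\varpi}$ is linear with constant Jacobian $\|\bm\omega^\ast_\varpi\|^{k_\varpi-1}$, so $\bm\alpha^{(\varpi)}$ is uniform on $\mathbb{S}^{k_\varpi-1}_+$. Then $\rho_m=\alpha_m^2\stackrel{d}{=}v_m^2/\sum_j v_j^2$ with $\bv\sim\mathcal{N}(\bm 0,\bI)$ gives $\mathrm{Dirichlet}(\tfrac12,\dots,\tfrac12)$, independently across blocks by the product structure. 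Your Fisher--Rao volume element $\prod_m\rho_m^{-1/2}$ is the same calculation in simplex coordinates, and your sphere reformulation is exactly the paper's argument.

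The flaw is Step~2, and your ``main obstacle'' is an artifact of it. At fixed $n$ the posterior density is locally Lipschitz and equals a constant $c$ on $\mathcal{M}_\varsigma$. So on $\mathcal{M}_\varsigma^\varepsilon$ it is $c\,(1+O(\varepsilon))$ uniformly: whatever $\mathcal{L}_\lambda$ does in normal directions drops out as $\varepsilon\downarrow0$, and no $\det(\bH_\perp)^{-1/2}$ appears. That factor belongs to a Laplace regime, where normal fluctuations are integrated over a region much wider than their spread, not to the limit in Definition~\ref{def:mnman_tube}. Even there the normal expansion would have a linear term: at an exact interpolant the data gradient vanishes, but the prior gradient $2\lambda\bw\neq\bm 0$ is normal to $\mathcal{M}_\varsigma$.

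What the tube route actually requires is that the $\varepsilon$-cross-sections have $\bm\rho$-independent volume, and this is elementary here. In block $\varpi$, $\mathcal{M}_\varsigma$ is the positive orthant of a round sphere of radius $\|\bm\omega^\ast_\varpi\|$ in the subspace $\{(\alpha_m\bm\omega^\ast_\varpi)_{m\in G_\varpi}:\bm\alpha\in\mathbb{R}^{k_\varpi}\}$. The Euclidean isometries $Q\otimes\bI_{p+1}$, $Q\in O(k_\varpi)$, act transitively on that sphere. Different blocks occupy orthogonal coordinates, and the boundary layer near $\{\rho_m=0\}$ carries relative volume $O(\varepsilon)$.

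Done this way, your route justifies the step the paper merely asserts (restricted posterior $\propto$ intrinsic volume) and yields $\mathrm{Dirichlet}(\tfrac12,\dots,\tfrac12)$. Corollary~\ref{cor:alpha_discrepancy} attaches an extra $\rho_m^{p/2}$ to tube-conditioning, but that factor measures an angular neighbourhood of each neuron's ray rather than this Euclidean tube, so it does not undercut your conclusion.

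Two smaller points. Your fallback is invalid, since an exchangeable density $\prod_m g(\rho_m)$ on the simplex is Dirichlet only if $g(\rho)\propto\rho^{a-1}$. And Step~1's claim that the chart exhausts $\mathcal{M}$ is unnecessary, because the paper fixes $\varsigma$ and works on $\mathcal{M}_\varsigma$, the image of your chart. It is also not right as stated, since ReLU units have no sign-flip symmetry.
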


In order to move towards conclusions about the posterior of $\bw$, we use \cref{def:mnman_tube} to regard the ambient space of $\mathcal{M}$. For this, we first define $\bm{\omega}_m := (\bw_{1,m}, w_{2,m}) \in\mathbb{R}^{p+1}$ to be the block of weights (in- and outgoing weights) corresponding to the $m$-th neuron in the hidden layer and $\{\bm{\omega}_m\}_{m \in G_{\varpi}}$ the block of neurons in $G_\varpi$ that overparametrize neuron $\bm{\omega}_\varpi$.

\begin{corollary}\label{cor:alpha_discrepancy}
Consider a $k_\varpi$-fold split of a neuron $\bm{\omega}^\ast_{\varpi}$, $\varpi\in[M^\ast]$ into $k_\varpi$ neurons
$\bm\omega_m=\sqrt{\rho_m}\,\bm\omega_{\varpi}^\ast$ with $\bm\rho^{(\varpi)} := (\rho_m)_{m \in G_{\varpi}}\in\Delta^{k_{\varpi}-1}$.
Then the induced distribution of $\bm{\rho}^{(\varpi)}$ on $\mathcal{M}^\epsilon$ is symmetric Dirichlet,
\[
\bm{\rho}^{(\varpi)} \sim \mathrm{Dirichlet}\Big(\tfrac{p+1}{2},\ldots,\tfrac{p+1}{2}\Big),
\]
for each $\varpi=1,\dots,M^\ast$ independently under tube-conditioning as in \cref{def:mnman_tube}.
\end{corollary}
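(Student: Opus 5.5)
The plan is to compute the tube-conditioned law directly in coordinates adapted to the split. Fix $\varpi$ and write the $k := k_\varpi$ neuron blocks $\bm\omega_m\in\mathbb{R}^{p+1}$, $m\in G_\varpi$, in polar-type coordinates around the ray of $\bm\omega^\ast_\varpi$. On $\mathcal{M}$ every block equals $\sqrt{\rho_m}\,\bm\omega^\ast_\varpi$. By \cref{lem:simplex} and the positive homogeneity of ReLU, both the likelihood and the total squared norm $\sum_m\|\bm\omega_m\|^2=\|\bm\omega^\ast_\varpi\|^2$ are constant along the simplex $\bm\rho^{(\varpi)}\in\Delta^{k-1}$. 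Hence the posterior density $\exp(-\mathcal{L}_\lambda)$ is constant on $\mathcal{M}$.

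It follows that the tube-conditioned law $\mathbb{P}_n^\varepsilon$ is, as $\varepsilon\to0$, proportional to the Lebesgue volume of the $\varepsilon$-tube lying over each point of $\mathcal{M}$. The whole problem therefore reduces to a change of variables.

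The key step is the change of variables $\bm\omega_m = r_m\,\bm u_m$ with $r_m\ge0$ and $\bm u_m\in\mathbb{S}^{p}$.
\begin{itemize}
\item The Lebesgue measure factorizes as $\prod_m r_m^{p}\,dr_m\,d\sigma(\bm u_m)$.
\item The tube condition pins each $\bm u_m$ to an $O(\varepsilon/r_m)$-cap around $\bm\omega^\ast_\varpi/\|\bm\omega^\ast_\varpi\|$. Each cap has $\sigma$-measure of order $(\varepsilon/r_m)^{p}$, up to boundary effects handled in the limit.
\item Alternatively, one can avoid this cancellation issue by comparing with the Gaussian prior directly. Under the isotropic prior, $\|\bm\omega_m\|^2/\tau^2\sim\chi^2_{p+1}$, i.e. a $\mathrm{Gamma}(\tfrac{p+1}{2})$ variable, and the radial densities are $r_m^{p}e^{-r_m^2/2\tau^2}$.
\item Set $s_m=r_m^2$, so that $r_m^{p}dr_m\propto s_m^{(p+1)/2-1}ds_m$. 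Conditioning on the constant total $\sum_m s_m = \|\bm\omega^\ast_\varpi\|^2$ (shrinking the radial tube) and normalizing $\rho_m=s_m/\sum s$ gives the classical Gamma-to-Dirichlet fact: independent $\mathrm{Gamma}(\alpha)$ variables normalized by their sum are $\mathrm{Dirichlet}(\alpha,\dots,\alpha)$ and independent of the sum.
\end{itemize}
With $\alpha=\tfrac{p+1}{2}$, this gives the claim.

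More carefully, the procedure is as follows.
\begin{enumerate}
\item Show that in the $\varepsilon$-tube the directional components factor out. Their contribution is a product of terms depending only on direction deviations; these are treated as i.i.d. Gaussian fluctuations orthogonal to $\mathcal{M}$, whose volume factor combines with $r_m^{p}$ into the $\chi^2_{p+1}$ radial law. This is the sense in which the full $(p+1)$-dimensional block, not just the radius, contributes the exponent.
\item Take $\varepsilon\to0$ with dominated convergence to identify the limiting law on the simplex.
\item Note that distinct groups $G_\varpi$ involve disjoint coordinates, and the diagonal Gaussian prior factorizes across blocks. The joint limiting density is therefore a product over $\varpi$, giving independence.
\end{enumerate}

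The main obstacle is step 1: justifying that the tube-conditioned volume element yields exactly the exponent $\tfrac{p+1}{2}$ rather than $\tfrac12$. The concern is that a naive tube around the rays would cancel the $r_m^{p}$ factor against the cap size $(\varepsilon/r_m)^{p}$.

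I would first try to fix the tube width relative to the prior scale, i.e. to condition the prior on the event that the function and the total norm match the target, interpreting the directional freedom as the prior's isotropy. This is the reading under which \cref{th:simplex} is the informal counterpart. I would make that convention explicit, stating that the Dirichlet parameter is inherited from the $\chi^2_{p+1}$ law of each block's squared norm under $\mathcal{N}(0,\tau^2\bI)$. If a strictly Euclidean tube is required, I expect an extra Jacobian factor $\prod_m\rho_m^{-p/2}$ to appear, which would have to be checked against the stated parameter.
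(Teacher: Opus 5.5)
Your computation is essentially the paper's. The paper starts from the Dirichlet$(\tfrac12)$ law of \cref{th:simplex-formal}, i.e.\ the uniform measure on the amplitude sphere $\sum_m r_m^2=\|\bm\omega^\ast_\varpi\|^2$. It multiplies this by a ``cross-sectional volume'' $\prod_m r_m^{p}\propto\prod_m\rho_m^{p/2}$, justified by the surface measure of $\mathbb{S}^p(r_m)$ scaling like $r_m^p$. The resulting $\prod_m\rho_m^{-1/2}\rho_m^{p/2}$ is exactly your polar factor $r_m^p\,dr_m\propto s_m^{(p+1)/2-1}ds_m$, restricted to the fixed block norm. Your Gamma-normalization route reaches the same density more cleanly, and it gives independence from the block norm and across blocks without a separate argument.

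The obstacle you flag is real, and the paper's proof does not overcome it. Assigning the directional perturbations of neuron $m$ a measure scaling like $r_m^p$ means using a cap of \emph{fixed angular aperture} on $\mathbb{S}^p(r_m)$. That is your prior-isotropy convention, not a Euclidean $\varepsilon$-tube.

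For the literal tube of \cref{def:mnman_tube}, the factor $\prod_m\rho_m^{-p/2}$ you expected does appear. The cap has Euclidean radius $\approx\varepsilon$, so its measure is $r_m^p(\varepsilon/r_m)^p=\varepsilon^p$. By the leading term of Weyl's tube formula, the normal cross-section then has $\bm\rho$-independent volume. The limit is therefore the normalized intrinsic measure on $\mathcal{M}_\varsigma$, i.e.\ Dirichlet$(\tfrac12,\dots,\tfrac12)$ as in \cref{th:simplex-formal}. Your own first reduction gives the same thing if carried through.

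A concrete check: take $k_\varpi=2$ and $p=1$. Then $\mathcal{M}_\varsigma$ is a quarter circle of radius $R=\|\bm\omega^\ast_\varpi\|$ in $\mathbb{R}^4$, and the tube volume over $d\theta$ is $\approx R\cdot\tfrac43\pi\varepsilon^3\,d\theta$. So $\theta$ is uniform and $\rho_1=\cos^2\theta\sim\mathrm{Beta}(\tfrac12,\tfrac12)$, not the uniform law predicted by $\mathrm{Dirichlet}(1,1)$.

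Step 1 of your refined plan therefore cannot be proved for a Euclidean tube; the exponent $\tfrac{p+1}{2}$ is the convention itself. Your fallback is the right move. State explicitly as a hypothesis that the neighbourhood restricts each block's direction to a window of fixed angular size, so that direction and radius decouple under the isotropic prior; this is what the paper's cross-section step implicitly assumes. With that hypothesis, your Gamma-to-Dirichlet argument, the $\varepsilon\to0$ limit, and the block factorization give a complete proof of the stated parameter.
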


Although the $\varepsilon$-tube $\mathcal{M}^\epsilon \subset \mathbb{R}^d$ resides within the ambient parameter space $\mathcal{W}$, the Dirichlet result strictly isolates the marginal distribution of the reallocation coefficients $\bm{\rho}^{(\varpi)}$ near $\mathcal{M}$.

If we assume the setup from \cref{th:simplex} and consider the minimum-norm manifold $\mathcal{M}_\varsigma \subseteq \mathcal{M}$ for a specific assignment or reallocation map $\varsigma$, we can define the induced posterior with $\varepsilon$-tube:
\begin{corollary} \label{cor:simplexprob}
    On $\mathcal{M}_\varsigma$, let $\mathbb{P}_n := \lim_{\varepsilon \downarrow 0}\, \mathbb{P}_n^{\varepsilon}$, assuming this weak limit exists. It holds (1) $\mathbb{E}_{\mathbb{P}_n}(w) = O(M^{-1/2})$, (2) $\operatorname{Cov}_{\mathbb{P}_n}(w,w') = O(M^{-2})$.
\end{corollary}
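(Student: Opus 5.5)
The plan is to reduce everything to moments of the symmetric Dirichlet law from \cref{cor:alpha_discrepancy}, after writing each weight on $\mathcal{M}_\varsigma$ as a deterministic component of $\bm\omega^\ast$ times $\sqrt{\rho_m}$, possibly times an independent sign.

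\textbf{Step 1: parametrize the weights.} Fix the assignment $\varsigma$. Every $\bw\in\mathcal{M}_\varsigma$ has neuron blocks $\bm\omega_m=\sqrt{\rho_m}\,\bm\omega^\ast_{\varsigma(m)}$. By the positive homogeneity of ReLU (\cref{lem:simplex}), a generic weight $w$ is therefore a coordinate of some $\bm\omega_m$ and equals $w=\sqrt{\rho_m}\,c$, with $c$ a fixed entry of $\bm\omega^\ast_\varpi$. If the sign-flip symmetry of the output weight is included as in the instructive example, $c$ also carries an independent Rademacher sign.

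By \cref{cor:alpha_discrepancy}, under $\mathbb{P}_n=\lim_{\varepsilon\downarrow0}\mathbb{P}_n^\varepsilon$ the vector $\bm\rho^{(\varpi)}$ is $\mathrm{Dirichlet}(a,\dots,a)$ with $a=(p+1)/2$. The vectors are independent across $\varpi$. I assume the weak limit commutes with these finitely many bounded moments, which is legitimate since $\rho_m\in[0,1]$.

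\textbf{Step 2: the mean.} I would use the Dirichlet moment formula
\[
\mathbb{E}\big[\rho_m^{s}\big]=\frac{\Gamma(a+s)\,\Gamma(ka)}{\Gamma(a)\,\Gamma(ka+s)} .
\]
With $s=1/2$ and Gautschi/Wendel's inequality for the Gamma ratio, this gives $\mathbb{E}\sqrt{\rho_m}\asymp (ka)^{-1/2}$. Hence $|\mathbb{E}w|\le|c|\,\mathbb{E}\sqrt{\rho_m}=O(k_\varpi^{-1/2})$, and the bound is exactly zero when a sign symmetry is present.

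To convert $k_\varpi$ into $M$, I would use the regime implicit in the statement: $M^\ast$ is fixed and each group grows proportionally, $k_\varpi\ge cM$. This holds, for example, under a balanced assignment, and I would state it explicitly as the interpretation of \cref{ass:interpol}.

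\textbf{Step 3: the covariance.} I would split into three cases.
\begin{itemize}
\item Weights in the same neuron: $\mathrm{Cov}=c\,c'\,\mathrm{Var}(\sqrt{\rho_m})$. Then $\mathbb{E}\rho_m=1/k$ and $(\mathbb{E}\sqrt{\rho_m})^2=(1/k)\,(1-O(1/(ka)))$ by the second-order expansion of the Gamma ratio. Their difference is $O(k^{-2})$.
\item Weights in different neurons of the same group: $\mathrm{Cov}=c\,c'\big(\mathbb{E}\sqrt{\rho_m\rho_{m'}}-\mathbb{E}\sqrt{\rho_m}\,\mathbb{E}\sqrt{\rho_{m'}}\big)$. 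Both terms have leading order $1/k$. I would expand them with the mixed Dirichlet moment $\Gamma(a+\tfrac12)^2\Gamma(ka)/(\Gamma(a)^2\Gamma(ka+1))$ and show that the $1/k$ terms cancel, leaving $O(k^{-2})$. This uses the negative-association structure of the Dirichlet, where $\mathrm{Cov}(\rho_m,\rho_{m'})=-1/(k^2(ka+1))$ is already even smaller.
\item Weights in different groups: the covariance vanishes by the independence across $\varpi$.
\end{itemize}

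\textbf{Main obstacle.} The hard part is the exact cancellation in the second-order Gamma-ratio expansions, i.e.\ showing that the $O(1/k)$ parts of $\mathbb{E}\rho_m$ and $(\mathbb{E}\sqrt{\rho_m})^2$, and of the mixed moments, agree exactly so that only $O(k^{-2})$ survives.

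I would first try the expansion
\[
\frac{\Gamma(x+s)}{\Gamma(x)}=x^{s}\Big(1+\frac{s(s-1)}{2x}+O(x^{-2})\Big)
\]
at $x=ka$. This yields $\mathbb{E}\sqrt{\rho_m}=C_a\,(ka)^{-1/2}\big(1+\tfrac{1}{8ka}+O(k^{-2})\big)$ with $C_a=\Gamma(a+\tfrac12)/\Gamma(a)$.

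One caveat must be checked here. $C_a^2\ne a$ in general, so the same-neuron variance is actually of order $1/k$, not $1/k^2$, unless the weights carry random signs, in which case this term is a variance rather than a cross-covariance. I would therefore read (2) as concerning distinct weights $w\neq w'$, and verify that the cross-neuron cancellation does hold at order $k^{-2}$.
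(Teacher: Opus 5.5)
Your proposal follows the paper's proof, and it is correct in its final form. The steps match: on $\mathcal{M}_\varsigma$ you write $w=\sqrt{\rho_m}\,c$; you get the mean from the Dirichlet moment $\mathbb{E}\sqrt{\rho_m}=C_a\,\Gamma(ka)/\Gamma(ka+\tfrac12)=\Theta(k^{-1/2})$; you get the within-block covariance by comparing the exact mixed moment $\mathbb{E}\sqrt{\rho_m\rho_{m'}}=C_a^2/(ka)$ with $(\mathbb{E}\sqrt{\rho_m})^2=C_a^2(ka)^{-1}\big(1+\tfrac{1}{4ka}+O(k^{-2})\big)$; and you use independence across blocks plus $k_\varpi\ge cM$. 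The comparison leaves $-C_a^2/(4a^2k^2)+O(k^{-3})$, which is the cancellation the paper only asserts.

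Your closing caveat is right and should replace the first bullet of Step 3, which is false as written. Strict log-convexity of $\Gamma$ gives $C_a^2<a$, so two weights of the same neuron have covariance $cc'\operatorname{Var}(\sqrt{\rho_m})=\Theta(k^{-1})$. Hence (2) holds only for weights of distinct neurons $m\neq m'$, which is the case the paper's proof of \cref{cor:simplexprob_again} actually computes. The Rademacher-sign aside is moot: ReLU networks have no sign-flip symmetry, and \cref{lem:simplex} fixes the signs on $\mathcal{M}_\varsigma$.
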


These results follow from \cref{th:simplex} and imply that the posterior around this manifold will move closer and closer to zero and induce increasingly less dependence (covariance) between weights the more the hidden layer ($M$) grows. 
This statement holds uniformly across assignments $\varsigma$ with comparable group sizes, together with the plausible assumption that the posterior places substantial mass near $\mathcal{M}$. Concurrently, because the likelihood remains nearly constant in the immediate neighborhood of the manifold, the $L_2$ regularization (Gaussian prior) induces approximately quadratic fluctuations strictly in the directions normal to the manifold.  This may therefore explain why highly redundant, one-dimensional weight marginals resemble zero-centered Gaussian distributions, as can be seen in \cref{fig:univariate_overparam_density_concrete}. We emphasize, however, that this analysis is local to the neighborhood of $\mathcal{M}$ and does not by itself determine the shape of the full marginal posterior.

While the preceding analysis is specific to two-layer networks, the phenomena it predicts, namely prior-like marginals, vanishing covariance, and modes near zero, are clearly visible in deeper architectures as well (cf.\ \cref{fig:univariate_overparam_density_concrete,fig:ionospherefireballgrid_main}). The following section explains why these patterns persist beyond the shallow setting.

\begin{figure}[t]
    \vspace{-0.2cm}
    \centering
    \setlength{\tabcolsep}{2pt}
    \renewcommand{\arraystretch}{0}
    \begin{tabular}{cc}
        \includegraphics[width=0.44\linewidth]{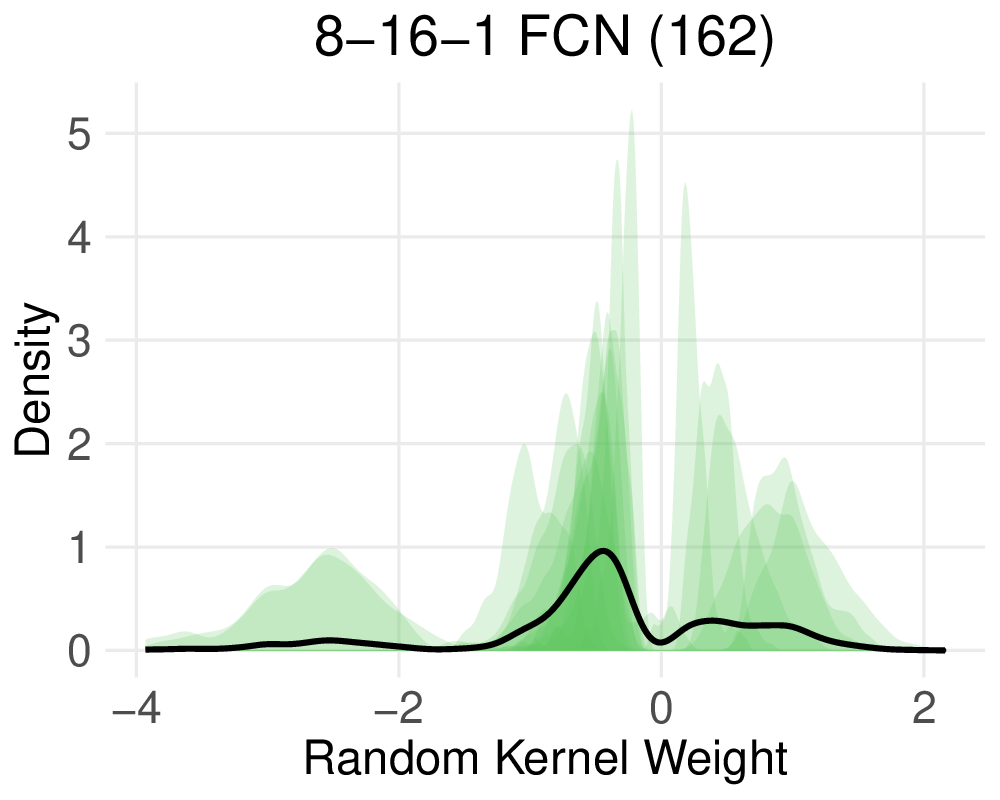} &
        \includegraphics[width=0.44\linewidth]{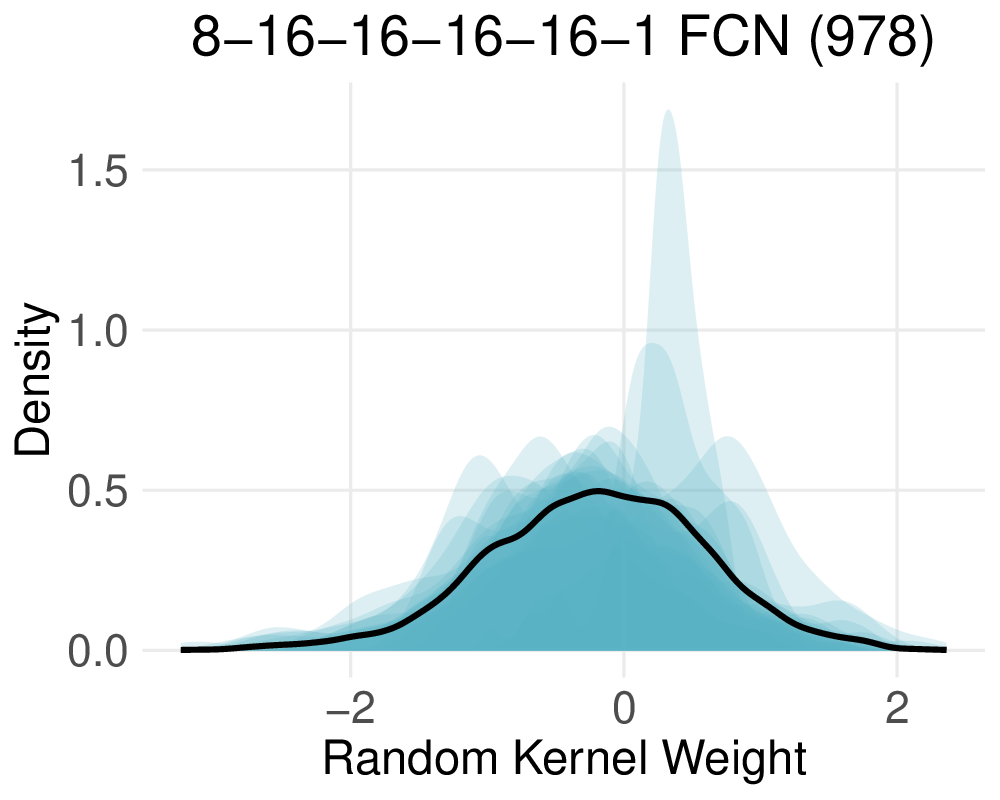} \\
    \end{tabular}
    \vspace{-0.2cm}
    \caption{Univariate marginal distributions of selected first hidden layer weights for sampled BNNs and their change with varying network size (number of parameters in brackets) on the \texttt{concrete} regression task. The 20 chain-wise distributions are overlaid with transparency, and the aggregated joint marginal is shown as a solid black line. Despite similar downstream performance, the marginals differ markedly between the two network parameterizations.}
    \label{fig:univariate_overparam_density_concrete}
\end{figure}

\subsection{Properties of Deeper Networks} \label{sec:deep}

As minimum-norm manifolds become harder to characterize explicitly, we will move from the tube-based analysis to a more general treatment of BNNs based directly on the posterior $\pi$.   

\paragraph{Balancedness} \label{sec:balance}

The following exposition assumes 
\begin{equation} \label{eq:f_balance}
    f(\bx) = \bW_L \phi(\bW_{L-1}\phi(\cdots(\bW_1\bx)))
\end{equation}
with ReLU activation functions $\phi$ and arbitrary depth $L$. Let $\bw=(\bW_1,\dots,\bW_L)$ collect the layer-wise weights and $d_l:=|\text{vec}(\bW_l)|$, the posterior $\pi:= p(\bw|\mathcal{D}) \propto \exp(-\mathcal{L}_{\bm{\tau}}(\bw))$, and assume zero-mean Gaussian priors with variances $\tau_l^2$ for layer $l \in [L]$. 
One can view depth in such homogeneous networks as a form of \emph{multiplicative} overparametrization: positive homogeneity of ReLU induces a positive rescaling symmetry, introducing one continuous degree of freedom per hidden neuron per layer boundary. Whereas \cref{sec:twolayer} characterized how the Gaussian prior organizes \emph{additive} redundancy, the following balancedness result relates to the analogous mechanism for multiplicative redundancy, where the isotropic Gaussian prior breaks the rescaling invariance by penalizing unequal norms across adjacent factors.

For this, we utilize a structural property of homogeneous networks derived in \citet{du2018algorithmic}. In their analysis of gradient flow, the authors establish the following pointwise algebraic identity for all $\bw$ where the gradient exists:
\begin{equation*}
\langle \bW_l, \nabla_{\bW_l}\mathcal{L}(\bw)\rangle_F = \langle \bW_{l+1},\nabla_{\bW_{l+1}}\mathcal{L}(\bw)\rangle_F.
\end{equation*}
Since this equality holds for all $\bw$ due to the 1-homogeneity of the ReLU activation and the chain rule, it necessarily holds a.s.\ in expectation over the stationary distribution $\pi$ assuming it exists and fulfills standard regularity conditions:
\begin{equation*}
\mathbb{E}_\pi[\langle \bW_l, \nabla_{\bW_l}\mathcal{L}(\bw)\rangle_F]=\mathbb{E}_\pi[\langle \bW_{l+1},\nabla_{\bW_{l+1}}\mathcal{L}(\bw)\rangle_F].
\end{equation*}

\begin{remark}
If the continuous-time weight vector $\bw(t)$, e.g., follows the overdamped Langevin diffusion
and is initialized at stationarity, then $\bw(t)\sim\pi$ for all $t$ and the above identities hold for expectations of
observables evaluated at any fixed time. If, in addition, the diffusion is ergodic, then time averages along a single trajectory converge to the corresponding $\pi$-expectations.
\end{remark}
Using the equality of expectations, we have the following balancedness result, which is also demonstrated visually for a 4-hidden layer BNN in \cref{fig:ionospherefireballgrid_main}.
\begin{figure}[t]
    \centering
    \setlength{\tabcolsep}{0pt}
    \renewcommand{\arraystretch}{0}
    
    \begin{tabular}{c@{}c@{}c@{}c@{}c@{}c}
        &
        \includegraphics[width=0.2\linewidth]{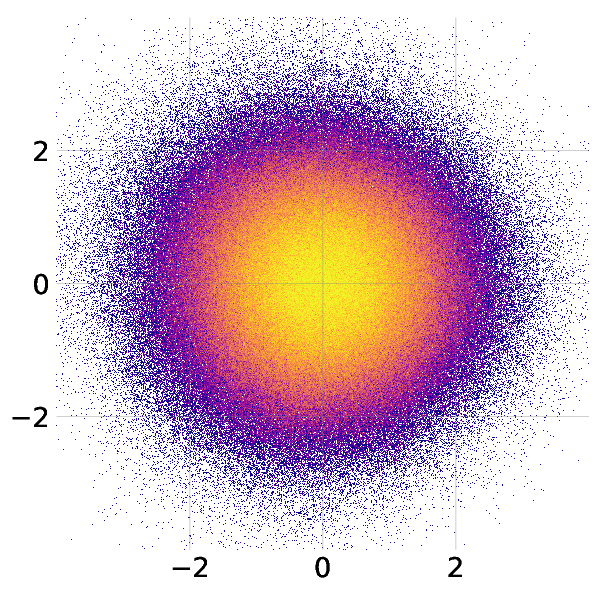} &
        \includegraphics[width=0.2\linewidth]{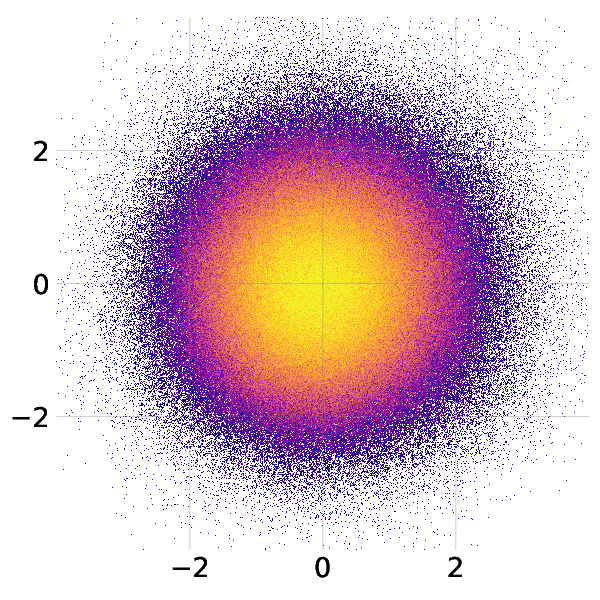} &
        \includegraphics[width=0.2\linewidth]{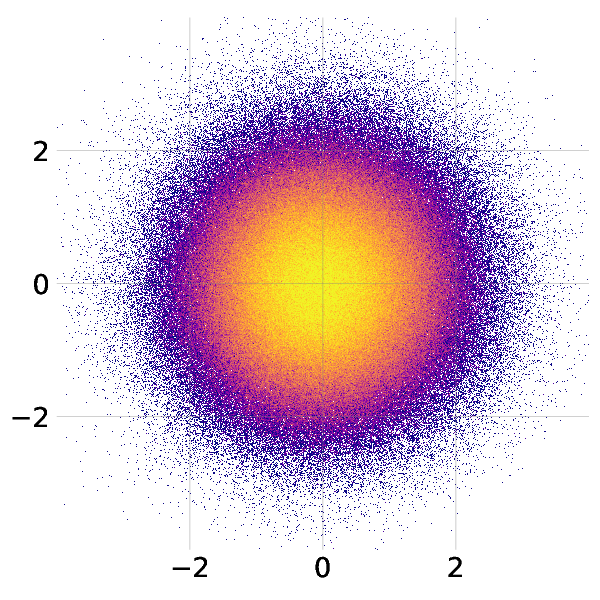} &
        \includegraphics[width=0.2\linewidth]{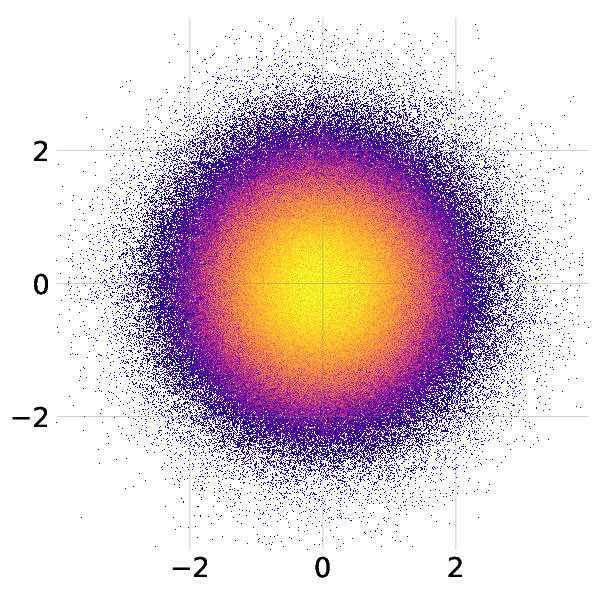} &
        \includegraphics[width=0.2\linewidth]{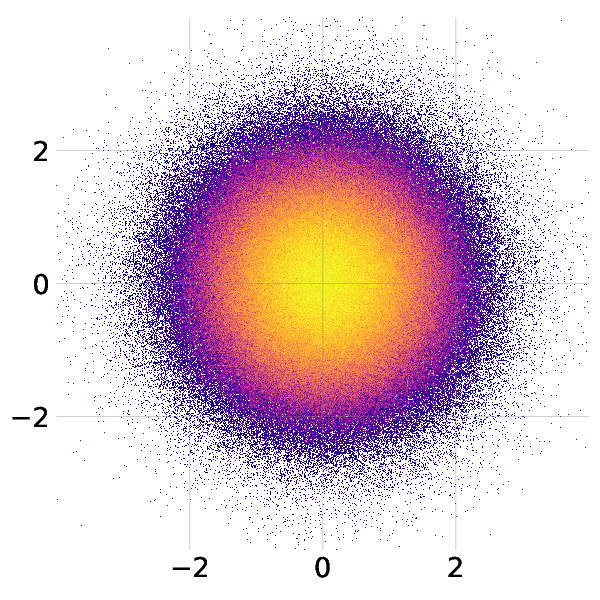} \\
        &  \tiny First &  \tiny 1st Hidden &  \tiny 2nd Hidden &  \tiny 3rd Hidden &  \tiny Last
    \end{tabular}
    \caption{Bivariate marginal posterior densities of a 4-hidden layer BNN fitted on the \texttt{ionosphere} dataset. The grid visualizes the empirical densities of 8 million posterior samples obtained from 8,000 independent chains. Columns display representative densities of different layer weights.}
    \label{fig:ionospherefireballgrid_main}
\end{figure}
\begin{theorem} \label{th:balance}
For $f$ as defined in (\ref{eq:f_balance}) and every adjacent pair $(l,l{+}1)$ it holds at stationarity that
\[
\tau_l^{-2}\,\mathbb E_\pi\!\big[\|\bW_l\|_F^2\big]
\;-\;
{\tau_{l+1}^{-2}}\,\mathbb E_\pi\!\big[\|\bW_{l+1}\|_F^2\big]
\;=\;
d_l - d_{l+1}.
\]
\end{theorem}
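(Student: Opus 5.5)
The plan is to combine the algebraic identity of \citet{du2018algorithmic}, in its expected form under $\pi$, with a Stein-type integration-by-parts identity that holds for any density of the form $\pi\propto\exp(-U)$. Here the potential is $U(\bw)=\mathcal{L}_{\bm\tau}(\bw)=\mathcal{L}(\bw)+\sum_{l}\tfrac{1}{2\tau_l^2}\|\bW_l\|_F^2$, so
\[
\nabla_{\bW_l}U=\nabla_{\bW_l}\mathcal{L}+\tau_l^{-2}\bW_l .
\]

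\textbf{Step 1 (Stein identity per layer).} For each layer $l$, consider the vector field $\bw\mapsto(0,\dots,\bW_l,\dots,0)$, whose divergence is $d_l$. Integration by parts against $\pi$ gives
\[
\mathbb{E}_\pi\big[\langle \bW_l,\nabla_{\bW_l}U(\bw)\rangle_F\big]=\mathbb{E}_\pi\big[\operatorname{div}_{\bW_l}\bW_l\big]=d_l ,
\]
provided the boundary terms vanish. This is a virial or equipartition statement. Equivalently, it is the stationarity of $\mathbb{E}_\pi[\|\bW_l\|_F^2]$ under the overdamped Langevin generator $\mathcal{A}g=-\langle\nabla U,\nabla g\rangle+\Delta g$ (with unit temperature, matching $\pi\propto e^{-U}$). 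Substituting the gradient of $U$ then yields
\[
\mathbb{E}_\pi[\langle \bW_l,\nabla_{\bW_l}\mathcal{L}\rangle_F]+\tau_l^{-2}\,\mathbb{E}_\pi[\|\bW_l\|_F^2]=d_l .
\]

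\textbf{Step 2 (cancel the likelihood terms).} Write the Step 1 identity for $l$ and for $l+1$ and subtract. By the expected version of the \citet{du2018algorithmic} identity stated just before the theorem, the two likelihood terms are equal and cancel. What remains is
\[
\tau_l^{-2}\mathbb{E}_\pi\|\bW_l\|_F^2-\tau_{l+1}^{-2}\mathbb{E}_\pi\|\bW_{l+1}\|_F^2=d_l-d_{l+1},
\]
which is the claim.

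\textbf{Main obstacle: justifying the integration by parts.} Two issues arise.

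First, the boundary terms. The Gaussian prior makes $e^{-U}$ decay like $e^{-c\|\bw\|^2}$, provided the likelihood factor $e^{-\mathcal{L}}$ is bounded, as it is for standard losses such as Gaussian or Bernoulli negative log-likelihoods. Then $\|\bw\|\,e^{-U}\to0$ at infinity, and $\mathbb{E}_\pi\|\bw\|^2<\infty$. We also need $\mathbb{E}_\pi|\langle\bW_l,\nabla\mathcal{L}\rangle|<\infty$. Homogeneity gives a bound polynomial in $\|\bw\|$ times the derivative of the pointwise loss, and this is integrable against the Gaussian tails. These are the ``standard regularity conditions'' the paper already assumes.

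Second, ReLU makes $\mathcal{L}$ only piecewise smooth. I would handle this by noting that $\mathcal{L}$ is locally Lipschitz, so $e^{-U}$ is locally Lipschitz and the integration by parts holds in the weak (Sobolev) sense. Alternatively, one can restrict to the complement of the measure-zero nondifferentiability set, on whose pieces $U$ is smooth; the jumps in the gradient carry no boundary contribution because $e^{-U}$ itself is continuous. The \citet{du2018algorithmic} identity only needs to hold almost everywhere, which it does.
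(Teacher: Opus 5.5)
Your proposal is correct and follows the paper's proof essentially step for step. The paper likewise applies Stein's identity with the test function $\bW_l$ to get $\mathbb E_\pi[\langle\bW_l,\nabla_{\bW_l}\mathcal{L}_{\bm\tau}(\bw)\rangle_F]=d_l$, splits off the prior gradient $\tau_l^{-2}\bW_l$, and subtracts the identities for $l$ and $l+1$ so that the likelihood terms cancel via the expected \citet{du2018algorithmic} identity; your treatment of boundary terms, integrability, and ReLU's piecewise smoothness just makes explicit what the paper calls ``standard regularity conditions.''
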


As a direct consequence, we obtain the following.

\begin{corollary} \label{cor:equalvar}
If $\tau_l^2\equiv\tau^2 \,\forall l\in[L]$, then $\mathbb E_\pi\!\big[\|\bW_l\|_F^2\big] - \mathbb E_\pi\!\big[\|\bW_{l+1}\|_F^2\big] =\tau^2\,(d_l-d_{l+1})\quad\forall l\in[L-1]$.
In particular, if also $d_l=d_{l+1}$, then $\mathbb E_\pi\!\big[\|\bW_l\|_F^2\big]=\mathbb E_\pi\!\big[\|\bW_{l+1}\|_F^2\big]$. Furthermore, if $\mathbb{E}_\pi[\bW_l] = \mathbb{E}_\pi[\bW_{l+1}] =  \bm{0}$, we have $\operatorname{tr}\!\big(\operatorname{Cov}_\pi(\operatorname{vec}(\mathbf W_l))\big)
=\operatorname{tr}\!\big(\operatorname{Cov}_\pi(\operatorname{vec}(\mathbf W_{l+1}))\big)$.
\end{corollary}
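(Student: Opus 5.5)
The corollary follows from \cref{th:balance} by direct substitution; no new analytic input is needed beyond the theorem itself.

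\emph{First claim.} Set $\tau_l=\tau_{l+1}=\tau$ in \cref{th:balance}. This gives
\[
\tau^{-2}\big(\mathbb E_\pi\|\bW_l\|_F^2-\mathbb E_\pi\|\bW_{l+1}\|_F^2\big)=d_l-d_{l+1}.
\]
Multiplying both sides by $\tau^2$ yields the stated identity for every $l\in[L-1]$. The only requirement is that the second moments entering \cref{th:balance} are finite. This is already implicit in the theorem's hypotheses: the stationary distribution exists and satisfies the regularity conditions. It is also automatic here, because the Gaussian prior factor makes $\pi$ have Gaussian-type tails whenever the likelihood is bounded above.

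\emph{Second claim.} If additionally $d_l=d_{l+1}$, the right-hand side is $\tau^2\cdot 0=0$. Hence the expected squared Frobenius norms of the two layers coincide.

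\emph{Third claim.} Use the elementary identity $\mathbb E\|\bv\|_2^2=\operatorname{tr}(\operatorname{Cov}(\bv))+\|\mathbb E\bv\|_2^2$, applied to $\bv=\operatorname{vec}(\bW_l)$, noting that $\|\bW_l\|_F=\|\operatorname{vec}(\bW_l)\|_2$. Under $\mathbb E_\pi[\bW_l]=\mathbb E_\pi[\bW_{l+1}]=\bm 0$ the mean terms vanish, so each trace of the covariance equals the corresponding expected squared norm.

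Combining this with the second claim gives equality of the traces. This step implicitly assumes $d_l=d_{l+1}$ as well. For $d_l\neq d_{l+1}$, the traces instead differ by $\tau^2(d_l-d_{l+1})$ by the first claim.

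The only point needing care is this reading of the "Furthermore" clause. I would state it as holding under the preceding equal-dimension assumption, and note the general difference formula otherwise. Everything else is routine substitution.
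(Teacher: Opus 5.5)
Your proposal is correct and follows the paper's route: the paper presents the corollary as a direct consequence of \cref{th:balance}, obtained by setting $\tau_l=\tau_{l+1}=\tau$, multiplying through by $\tau^2$, and, for the trace statement, using $\mathbb E_\pi\|\bW_l\|_F^2=\operatorname{tr}\big(\operatorname{Cov}_\pi(\operatorname{vec}(\bW_l))\big)+\|\mathbb E_\pi[\bW_l]\|_F^2$. You are right to read the ``Furthermore'' clause as also assuming $d_l=d_{l+1}$, since without it the zero-mean traces differ by $\tau^2(d_l-d_{l+1})$.
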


Even when not setting variances to the same value across layers, we obtain an expected balancedness across the network (cf.~\cref{cor:balanced} in \cref{app:proofs_derivations}).

\paragraph{Overparametrization and Prior Conformity} \label{sec:prior_curvature}

A second phenomenon that arose in the previous exposition in single-hidden layer networks was what one could call \emph{prior conformity}, i.e., the tendency for (marginal) weight distributions to resemble the (marginal) prior.
We can try to explain this phenomenon by data-independent directions in the posterior using a local approximation, similar to recent results such as \citet{roy2024reparameterization}. For this, assume a locally optimal reference parameter $\bw^\text{ref}\in\mathcal{W}$ 
and Laplace approximation based on a generalized Gauss-Newton precision matrix
\[
p(\bw\mid\mathcal D)\;\approx\;\mathcal N\!\big(\bw^\text{ref},\,(\bH^\star)^{-1}\big),
\quad \bH^\star := \bH_{\text{data}}^\star + 2\lambda \bI.
\]
Further, let $\bJ\in\mathbb{R}^{n\times d}$ be the Jacobian of the network outputs w.r.t.\ $\bw$ at $\bw^\text{ref}$ (stacked over all $n$ training samples), and $\bH^\star \approx \bJ^\top \bm\Upsilon \bJ + 2\lambda \bI$ for a positive semidefinite weight matrix $\bm\Upsilon$ (e.g., $\bm\Upsilon = \sigma^{-2} \bI$ for $L_2$-loss). If layer $l$ is overparametrized, then 
$
\ker(\bJ)\;\neq\;\{\bm{0}\}
$ and 
$$(\bH^\star)^{-1}\big|_{\ker(\bJ)} \;=\; (2\lambda)^{-1} \bI = \tau^2 \bI.
$$
Hence, the posterior covariance equals the prior covariance along likelihood-flat directions. This can be interpreted as a \emph{prior conformity} of the posterior as the Gaussian prior ``fills in'' singularities in redundant subspaces. Connected to this result are approximate rotational invariances. 

Let $\mathcal{W}=\ker(\bJ)\oplus \img(\bJ)$ be the orthogonal decomposition at $\bw^\text{ref}$ and $\bP_{\ker(\bJ)}$ the projection matrix projecting into $\ker(\bJ)$. 
Further, let $\bZ$ contain the orthonormal columns spanning $\ker(\bJ)$, i.e., $\bP_{\ker(\bJ)} = \bZ\bZ^\top$.
Then, it trivially holds
\[
\bZ^\top(\bw-\bw^\text{ref})\;\sim\;\mathcal N\!\big(\bm 0,\,\tau^2\bI\big),
\]
independent of the data. 
To confirm these statements, we can compute $\bZ$ and project all posterior samples onto $\ker(\bJ)$ to check whether the distribution of the projected and centered weights aligns with the prior (as demonstrated in~\cref{fig:kerJ}). 

When specialized to the shallow ReLU setting, $\ker(\bJ)$ at a minimum-norm solution contains the tangent directions of $\mathcal{M}$ along which neurons can be rebalanced without changing the function, but also includes additional directions that preserve the function only to first order without necessarily preserving the norm. The Laplace argument thus captures a broader class of redundant directions than the Dirichlet results, though with stronger assumptions.

\begin{figure}[t]
    \centering
    \includegraphics[width=0.9\linewidth]{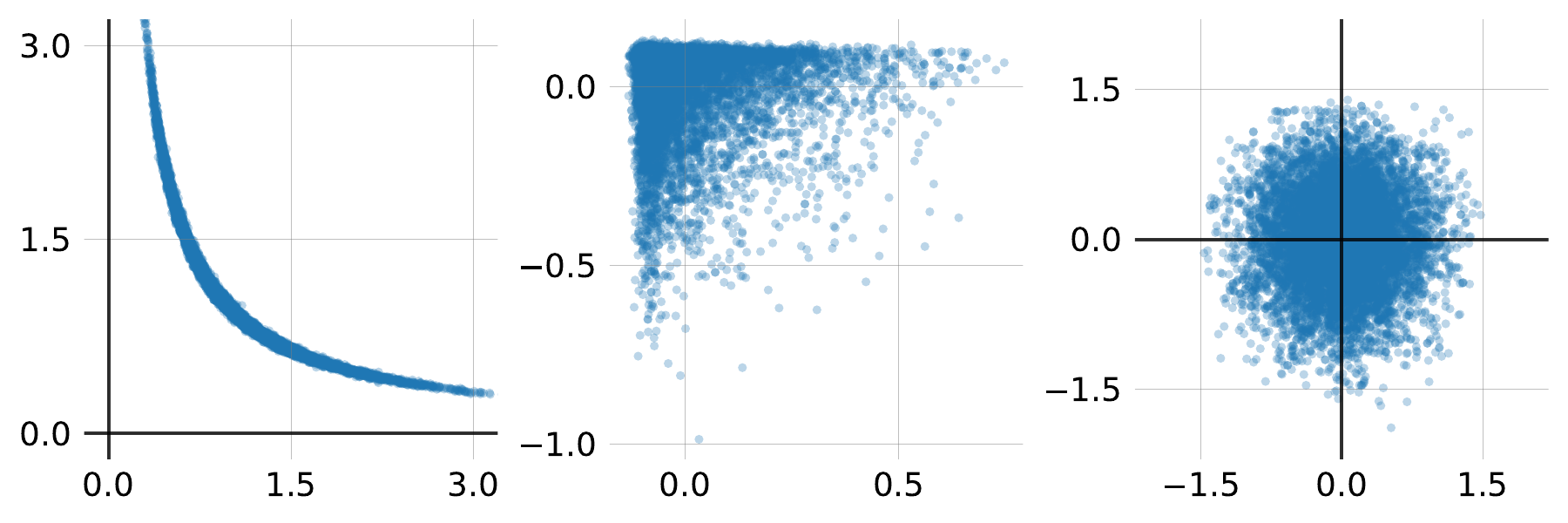}
    \caption{Samples from a 1-2-1 ReLU network on a linear dataset (left) and its separation into image (center) and null space of $\bJ$ (right).}
    \label{fig:kerJ}
\end{figure}

\subsection{The Role of the Bias} \label{sec:deep_bias}

In the previous sections, our exposition mainly focused on layers without a bias. 
When incorporating a bias into layers, e.g., $f(\bx) = \bw_2^\top \phi(\bW_1\bx + \bb_1)$, regularization effects induced by the prior still enforce $\|(\bW_{1})_{[i,:]}\|_2 \;=\; |w_{2i}|$ at optimality. This is, however, only the case if $\bb_1$ is not regularized \citep{parhi2023deep}, i.e., when using an improper flat prior $p(\bb_1) \propto 1$. In general, homogeneity is broken at the entry and exit points of the network: the first layer receives a shift in its input space, and the last layer produces an affine rather than homogeneous mapping, e.g., encoding $\mathbb{E}(y|\bx=\bm{0})$ in a regression setting. As a consequence, the induced posterior distributions, typically in the first and last layers, deviate from the balanced patterns described above and display different marginal shapes (cf.~\cref{fig:maingrid}). With suitable activations in the first layer, however, bias expectations will be zero in subsequent layers, and the ReLU nonlinearity can continue to be positively homogeneous, implying the same rescaling invariances as before.

\begin{figure}[t]
    \centering
    \setlength{\tabcolsep}{0pt}
    \renewcommand{\arraystretch}{0}
    \begin{tabular}{c@{}c@{}c@{}c@{}c@{}c}
        \rotatebox{90}{\quad\, \tiny Kernel} &
        \includegraphics[width=0.19\linewidth]{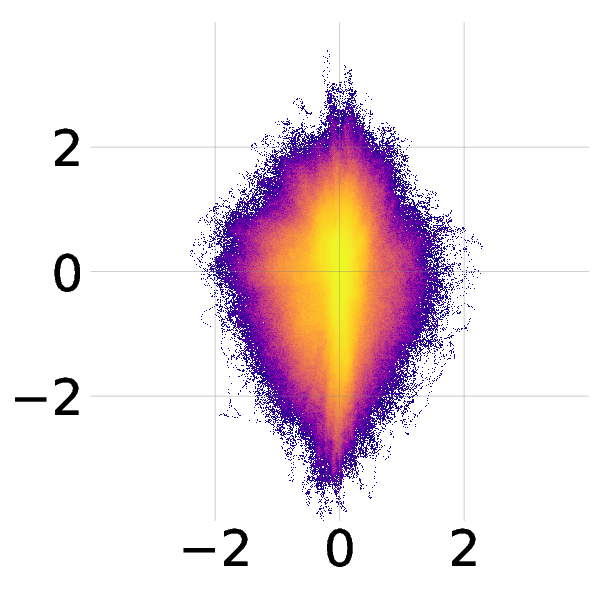} &
        \includegraphics[width=0.19\linewidth]{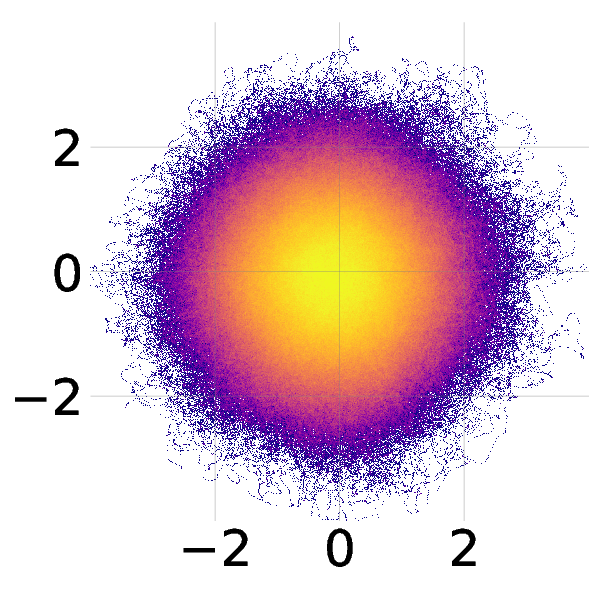} &
        \includegraphics[width=0.19\linewidth]{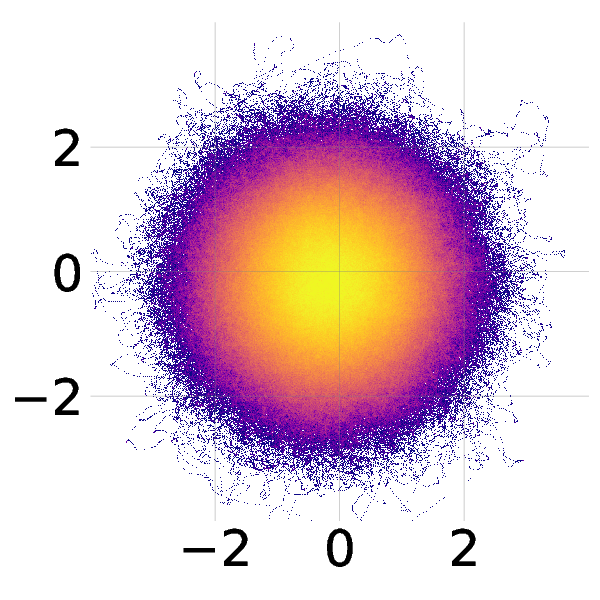} &
        \includegraphics[width=0.19\linewidth]{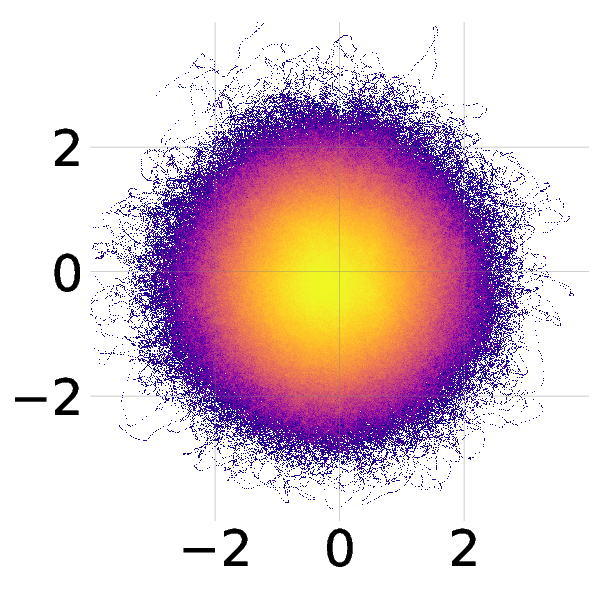} &
        \includegraphics[width=0.19\linewidth]{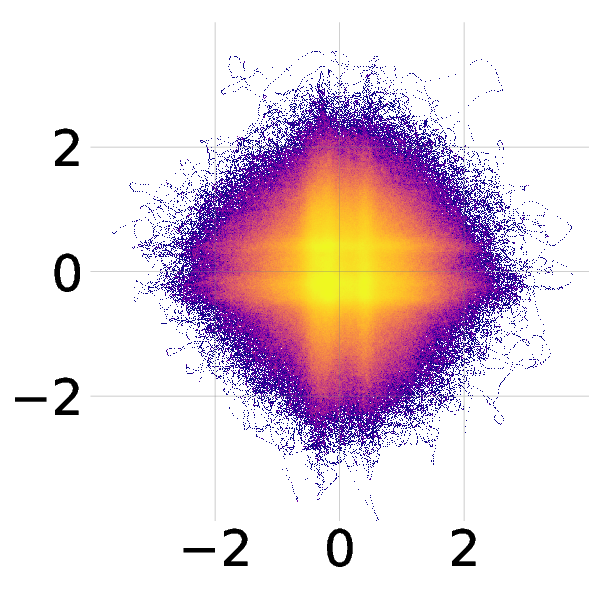} \\

        \rotatebox{90}{\quad\,\, \tiny Bias} &
        \includegraphics[width=0.19\linewidth]{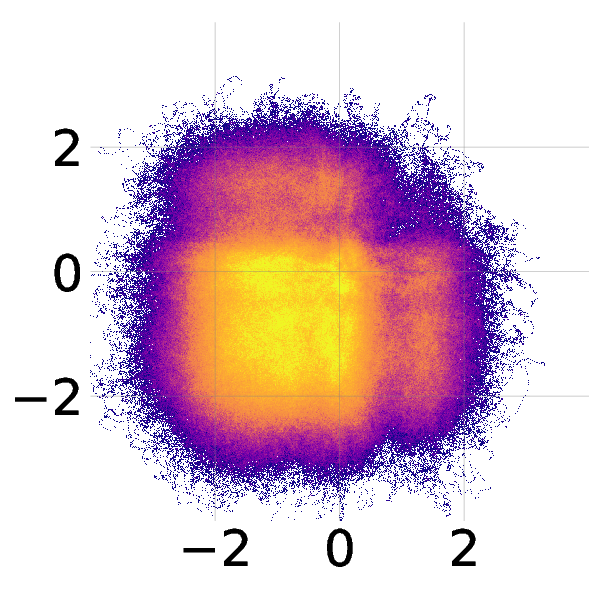} &
        \includegraphics[width=0.19\linewidth]{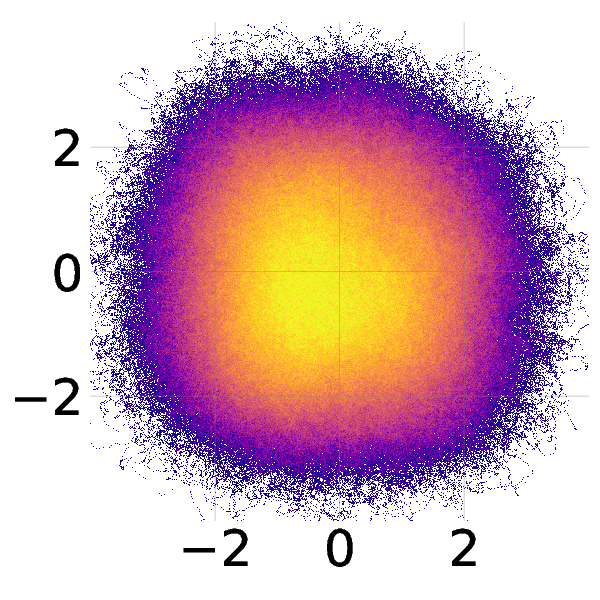} &
        \includegraphics[width=0.19\linewidth]{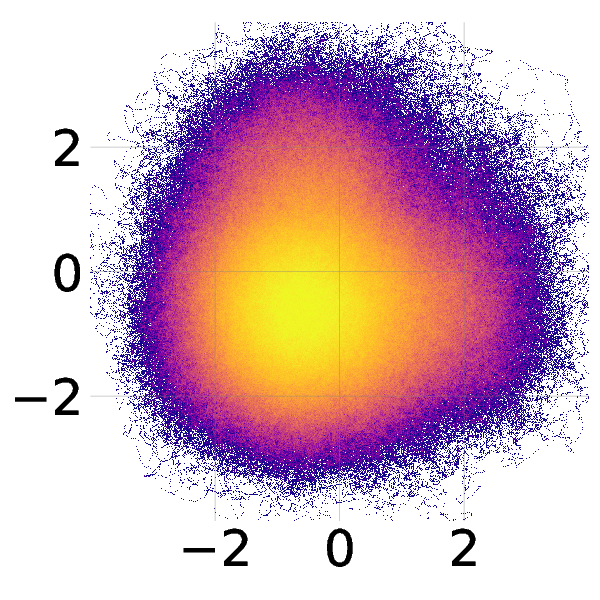} &
        \includegraphics[width=0.19\linewidth]{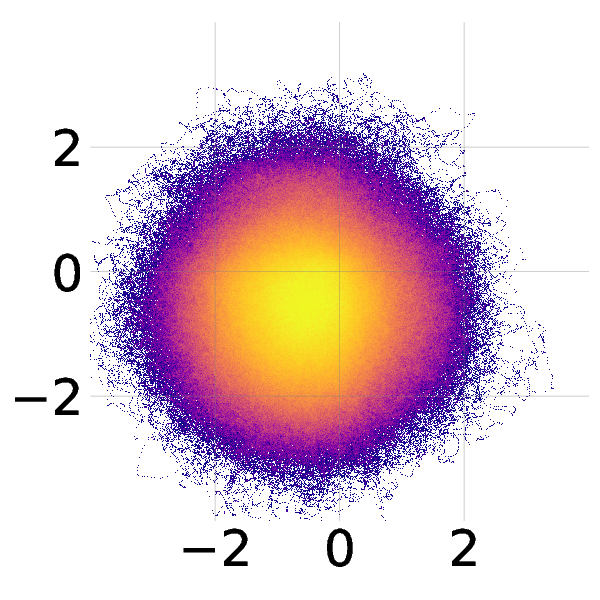} &
        \includegraphics[width=0.19\linewidth]{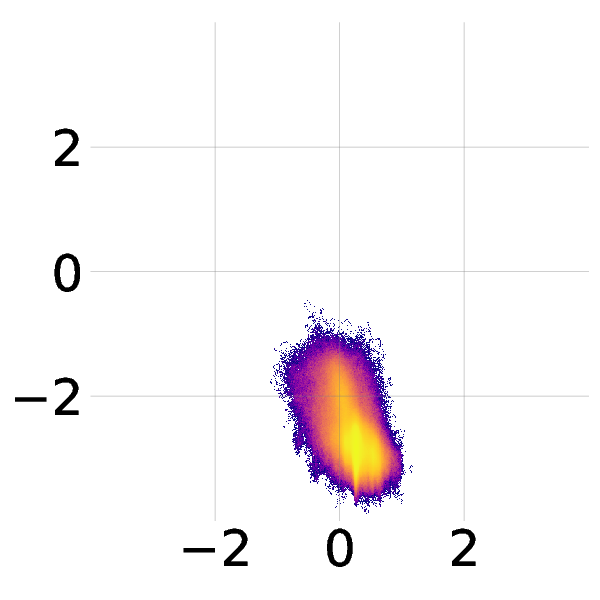} \\
        & \tiny First & \tiny 1st Hidden & \tiny 2nd Hidden & \tiny 3rd Hidden & \tiny Last
    \end{tabular}
    \caption{Bivariate marginal posterior densities of a 4-hidden layer BNN fitted on the \texttt{airfoil} dataset. The grid visualizes the empirical densities of 10M posterior samples obtained from 10k independent chains. The rows and columns display representative densities of randomly chosen layer weights.}
    \label{fig:maingrid}
\end{figure}

\section{OVERPARAMETRIZATION INFLUENCE ON SAMPLING-BASED INFERENCE} \label{sec:exp}

After studying the posterior from a more theoretical perspective, we now provide empirical evidence of how overparametrization affects sampling-based posterior approximations. While our theory focused mainly on fully-connected networks, we also provide results from experiments using other network architectures.  In particular, \cref{app:uci_benchmark} provides a variety of benchmarks to validate empirical findings in recent papers, confirming the good performance of sampling for BNN inference when sufficiently overparametrized, and analyzing their corresponding posterior shapes.

\subsection{Under- vs.\ Overparametrized Models}

We begin by analyzing the functional diversity recoverable by the sampler. As shown in \cref{fig:izmailov}, for smaller architectures the sampler remains confined to distinct modes in parameter space, while larger parameter counts make the marginals more closely aligned with the prior. 
In \cref{sec:deep}, we plot the marginal distributions of two weights per chain as well as aggregated over chains in \cref{fig:univariate_overparam_density_concrete}. The plot implies that marginalizing over differently initialized chains in overparametrized models reveals the suggested marginal posterior pattern, which is not the case for the underparametrized model. A similar pattern can be found in \cref{fig:rohrschach}. This connectedness, alas, comes at the cost of having to sample in higher dimensions. It is therefore not immediately clear whether the smoother posterior landscape helps represent the functional variance of the model more truthfully or whether this is only a consequence of the increased degrees-of-freedom in the model's parameters.

While the answer also depends on the specific sampler employed, the setup in \cref{fig:oneMone:a}, i.e., a \emph{1}-$M$-\emph{1} network, highlights one particular mechanism. In ReLU networks, samplers can become trapped in regions of the posterior landscape where the gradient vanishes \citep{sommer2024connecting}. For a positive input $x>0$ w.l.o.g., and assuming a zero-centered symmetric posterior distribution of the first-layer weights $\mathbf w_1$, the probability that all hidden pre-activations are negative is $2^{-M}$. Hence, this event becomes exponentially unlikely as the hidden width $M$ increases. Combined with smoother marginal distributions in the overparametrized regime, this may improve the effectiveness of some samplers. 

\begin{figure}[t]
    \centering
    \includegraphics[width=0.9\linewidth]{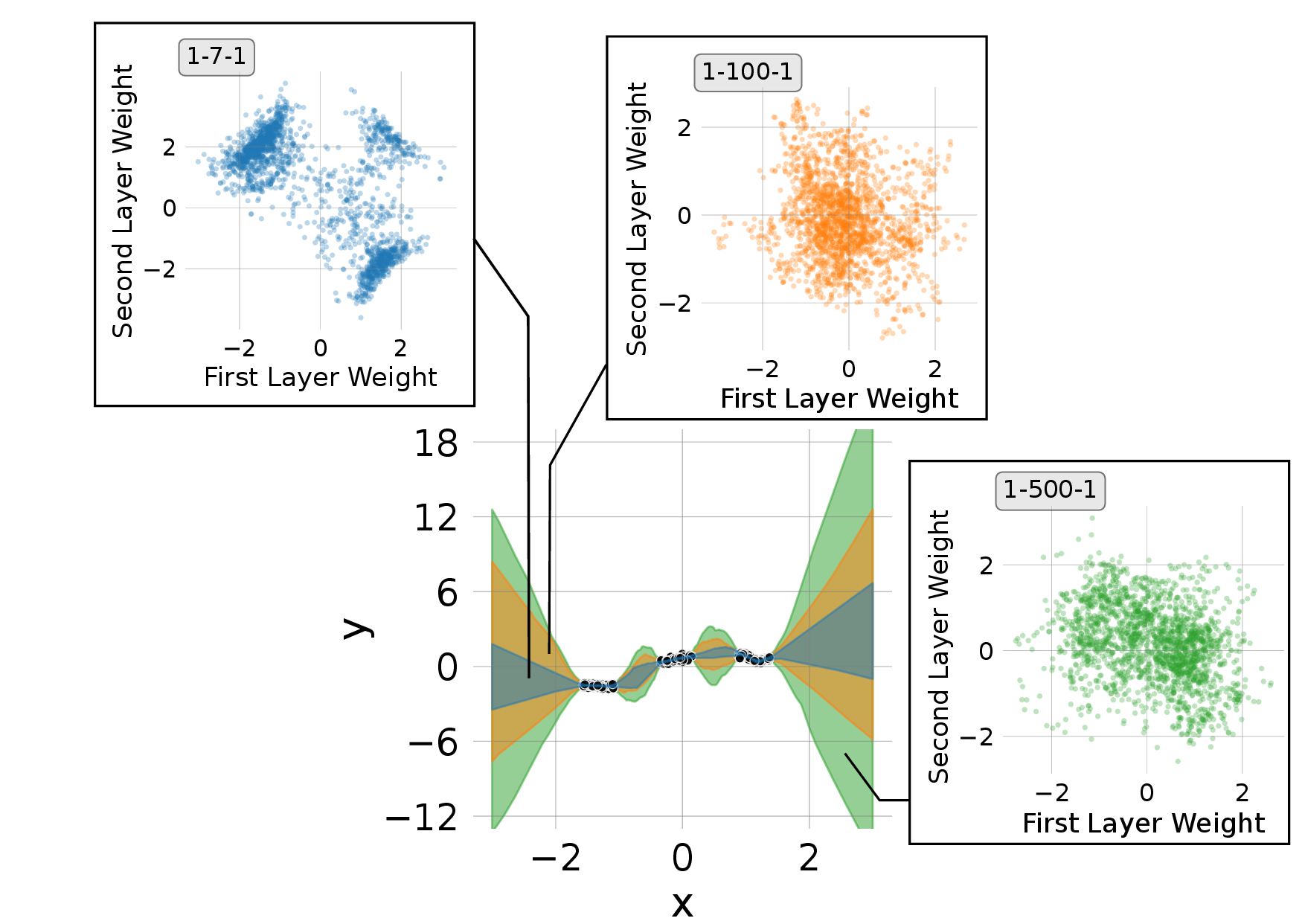}
    \caption{Empirical marginal posterior view for ReLU networks with increasing widths. As the marginal coverage of the posterior increases, the functional variance recovered by the sampler increases as well.}
    \label{fig:izmailov}
\end{figure}

\subsection{Balancedness Experiments}

\cref{fig:maingrid,fig:BIKEfireballgrid,fig:fmnistfireballgrid,fig:ionospherefireballgrid} reveal more nuanced behavior. Observed patterns depend on two factors: the layer type (first, hidden, last) and the parameter type (bias vs.\ kernel). Specifically, weights in input and output layers exhibit distinct, multimodal or highly concentrated margins, whereas intermediate-layer weights yield margins closely matching their isotropic Gaussian priors. This backs our discussion of balancedness in networks with a bias in \cref{sec:deep_bias} as well as \citet{sommer2024connecting}, which demonstrated enhanced sampler exploration in intermediate layers of BNNs.

\begin{figure}[t]
    \centering
    \vspace{-0.5em}
    \setlength{\tabcolsep}{8pt}
    \renewcommand{\arraystretch}{1}
    \begin{tabular}{c}
        \small Covariance \qquad\qquad \qquad Correlation \\
        \hline
        \includegraphics[width=0.88\linewidth]{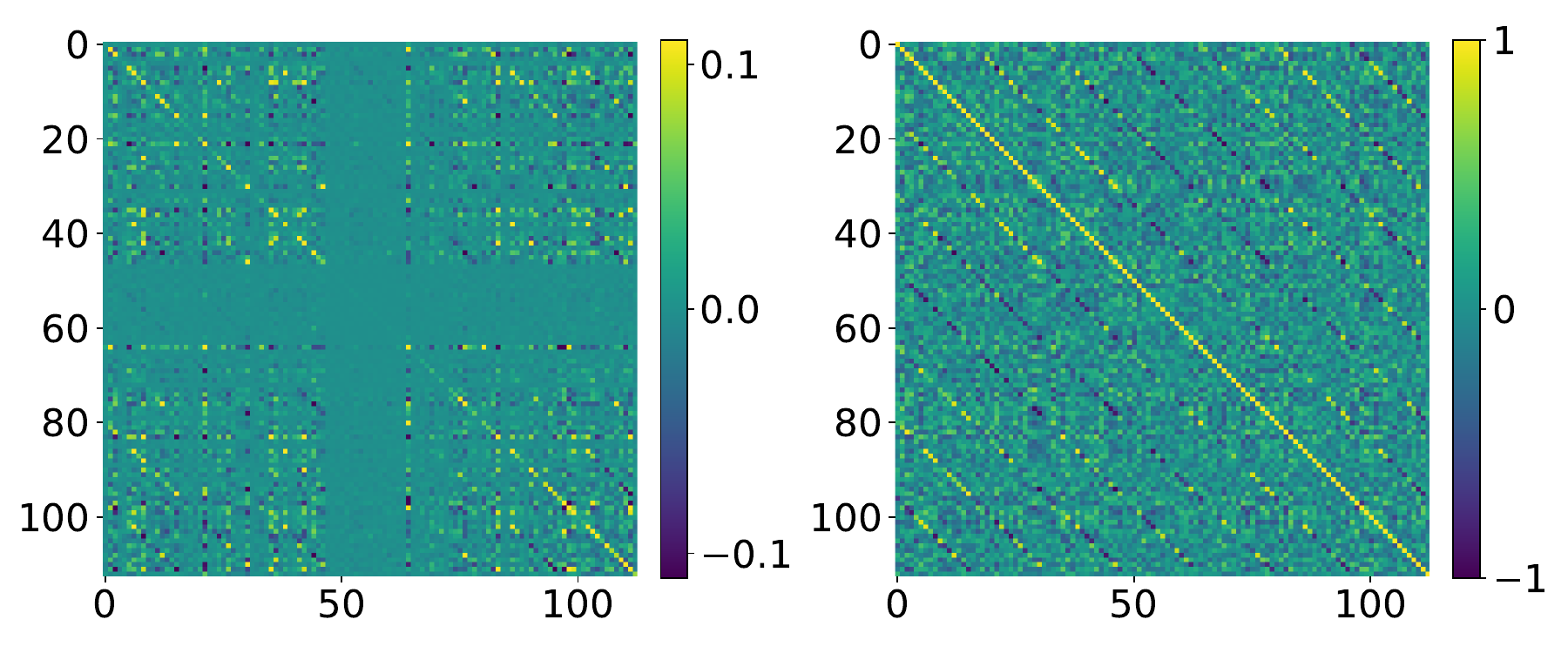} \\ 
        \scriptsize Underparametrized 5-16-1 (113) \\
        \includegraphics[width=0.88\linewidth]{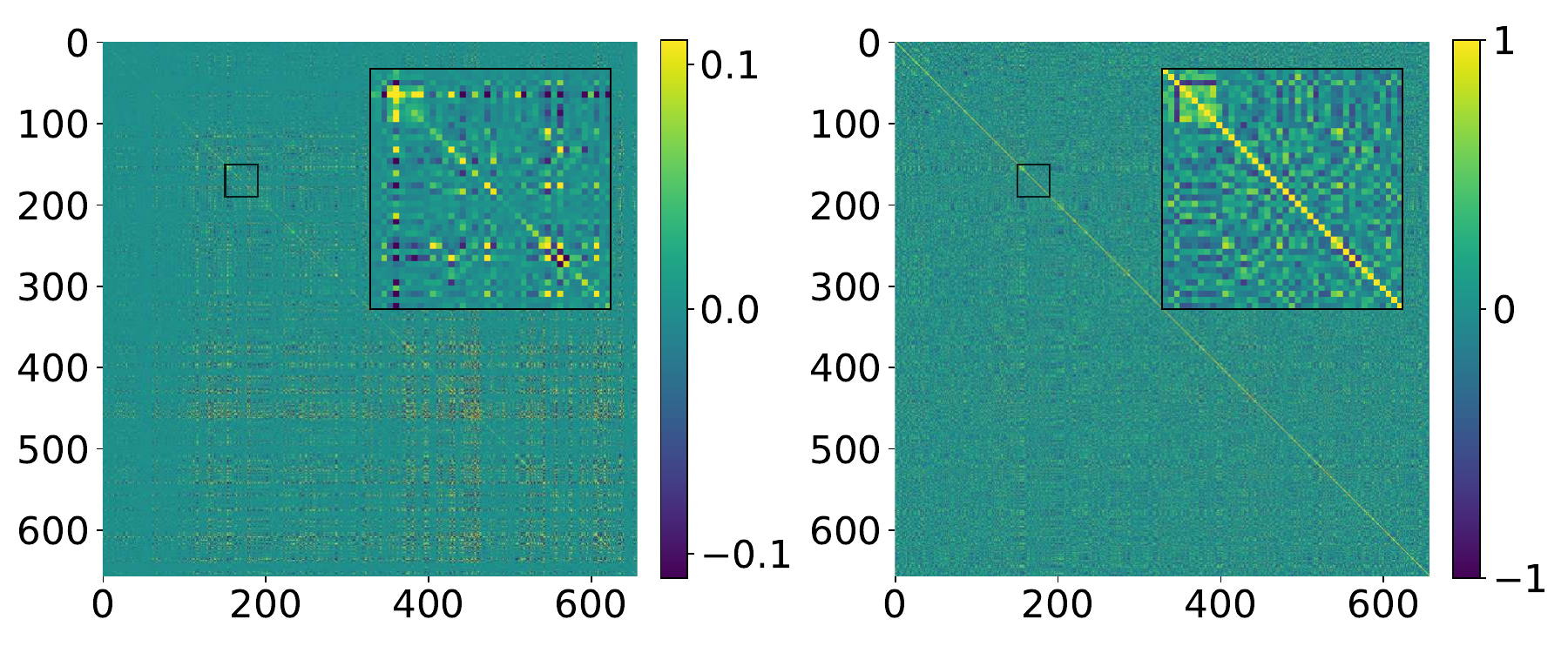} \\
        \scriptsize Overparametrized 5-16-16-16-1 (657) 
    \end{tabular}
    \caption{
    Empirical covariance and correlation of weights $\bw$ across posterior samples on the \texttt{airfoil} dataset. The sampler is able to recover an intricate correlation structure in the parameter space.
    }
    \label{fig:correlation}
    \vspace{-0.1in}
\end{figure}

\subsection{Overparametrization Does Not Imply Data-Independency}

Increasing the parameter count in deep networks can lead to problems in variational approximations of the posterior, potentially even inducing conditional independence of parameters and data \citep[see, e.g.,][]{trippe2017overpruning}. Results from \cref{sec:twolayer} and \cref{fig:oneMone:a} also suggest that the mean of the weights and their covariance decrease as the number of parameters increases, which is in line with this finding. In sampling-based inference, however, a zero-mean weight distribution does not imply a degenerate model, as it does not concentrate on a single solution as variational methods typically do. To investigate whether overparametrization can lead to diminishing covariance in larger (yet finite-width) neural networks, we run sampling-based inference for two networks of different sizes with good performance on the task. \cref{fig:correlation} visualizes the empirical covariance and correlation matrices. By the intricate patterns visible in the plots, we conclude that the sampler is able to recover a complex correlation structure in parameter space, even when navigating a higher-dimensional parameter space. We further do not observe indications of data ignorance as suggested for variational methods \citep{coker2022wide}. This suggests that while (some) marginals appear prior-like and most of them have zero mean (cf.~\cref{fig:maingrid}), there is still substantial correlation structure left, and prior conformity is much more likely a result of marginalization given the many directions with likelihood-flat regions.

\subsection{Flat Likelihood Directions and Prior Conformity}

\begin{figure}[t]
    \centering
    \includegraphics[width=0.8\linewidth]{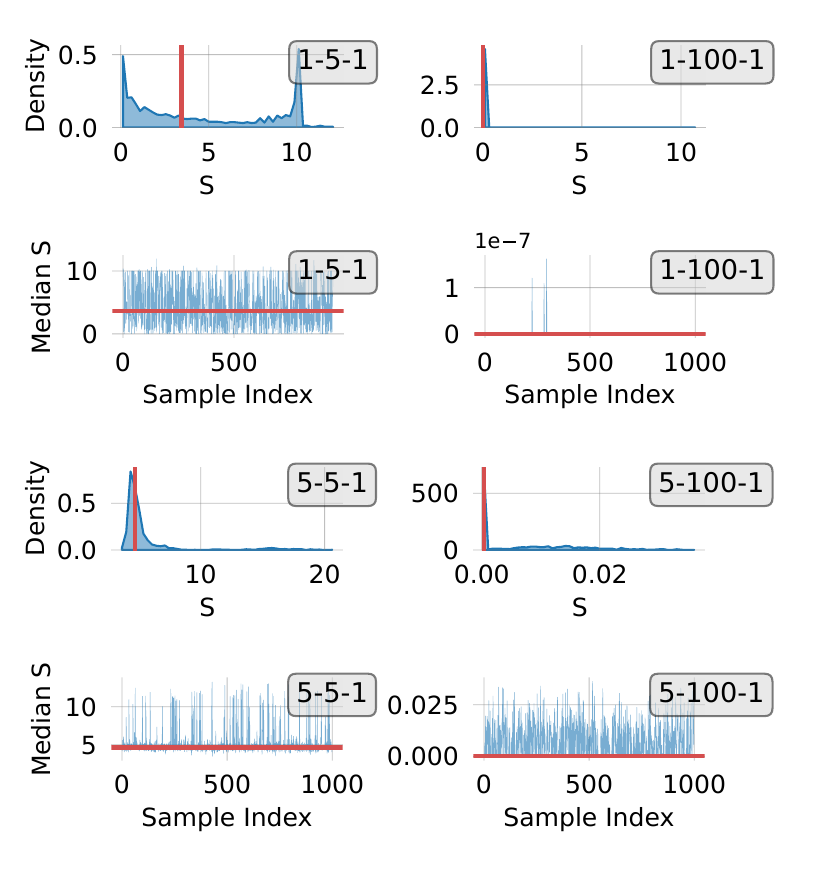}
    \caption{Minimal singular values $S$ calculated for every hidden neuron cluster across all posterior samples, as well as median minimum singular value per neuron cluster per posterior sample $\bw$. The median (first row) or the median of medians (second row) is indicated in red.}
    \label{fig:function_space_sensitivity}
\end{figure}

Following our discussion of \textit{prior conformity} (\cref{sec:deep}), we probe for inter-neuron likelihood-flat directions in single-hidden layer ReLU networks by analyzing samples from a posterior chain, extending work like \citet{ghorbani2019investigation} beyond single parameter points. For each posterior sample $\bw$, we cluster neurons with activation vectors on the training data, $\bm\Xi(\bw)$, that exhibit high cosine similarity. To identify invariant relationships, we project each cluster's activations onto the subspace orthogonal to the all-ones vector. A zero minimum singular value $S$ implies a non-trivial, zero-sum combination of neuron activations that is constant across all data points. This signals a representational redundancy and thus a continuous manifold of non-identifiable parameters producing identical model outputs. We observe in \cref{fig:function_space_sensitivity} that such redundancies, and corresponding likelihood-flat directions, become more prevalent as model size increases, appearing at nearly every posterior sample. This analysis is equivalent to examining $\ker(\bJ)$ at each sample, as depicted in \cref{fig:kerJ} and detailed in \cref{app:proofs_derivations}.

\section{DISCUSSION} \label{sec:discuss}

In this work, we showed that overparametrization reshapes BNN posteriors through the interplay of symmetries and priors. We demonstrated that network symmetries often align with specific weight priors, offering a principled notion of prior choice. Moreover, redundancy and homogeneity induce balancedness across layers, leading to equal-probability manifolds and prior conformity in redundant directions.

\paragraph{Practical Considerations}

The absence of gaps in the marginal posterior of weights and their smoothness when averaging over a large number of chains suggests a smooth connected surface, making a full characterization of the BNN posterior practically possible. To this end, we compare different architectures and BNNs in terms of their cumulative log pointwise predictive density \citep[LPPD, ][]{gelman2014a}, examining the rate at which this metric saturates (details in \cref{app:additional_results}). While a scattered posterior with many different important regions separated by ``loss barriers'' might require a large number of chains or show no signs of saturation in uncertainty metrics, \cref{fig:lppd_convergence} suggests convergence after around 10-20 chains. This result is reassuring as it shows that retrieving an approximate posterior via sampling is indeed practically feasible for BNNs.

\paragraph{Limitations}

A limitation of our work is its theoretical exposition's main focus on fully-connected ReLU networks. Yet, our empirical results indicate that the effects carry over to other architectures, including convolutional and residual networks. Another concern is that overparametrization increases dimensionality, which could, in principle, render sampling inefficient. However, recent results show that samplers like MCLMC \citep{robnik2024fluctuation} retain efficiency in high-dimensional regimes under certain assumptions, supporting the practical feasibility of sampling in large BNNs. Another limitation is the conditioning on a fixed assignment $\varsigma$. While this assumption is not overly restrictive for single-chain approaches---since a single chain is unlikely to traverse between different assignments---a complete posterior characterization requires marginalization over all possible assignments. For ensembles of chains, such marginalization provides a more appropriate theoretical foundation.

\begin{figure}[t]
    \centering
    \begin{subfigure}[b]{0.48\columnwidth}
        \centering
        \includegraphics[width=\textwidth]{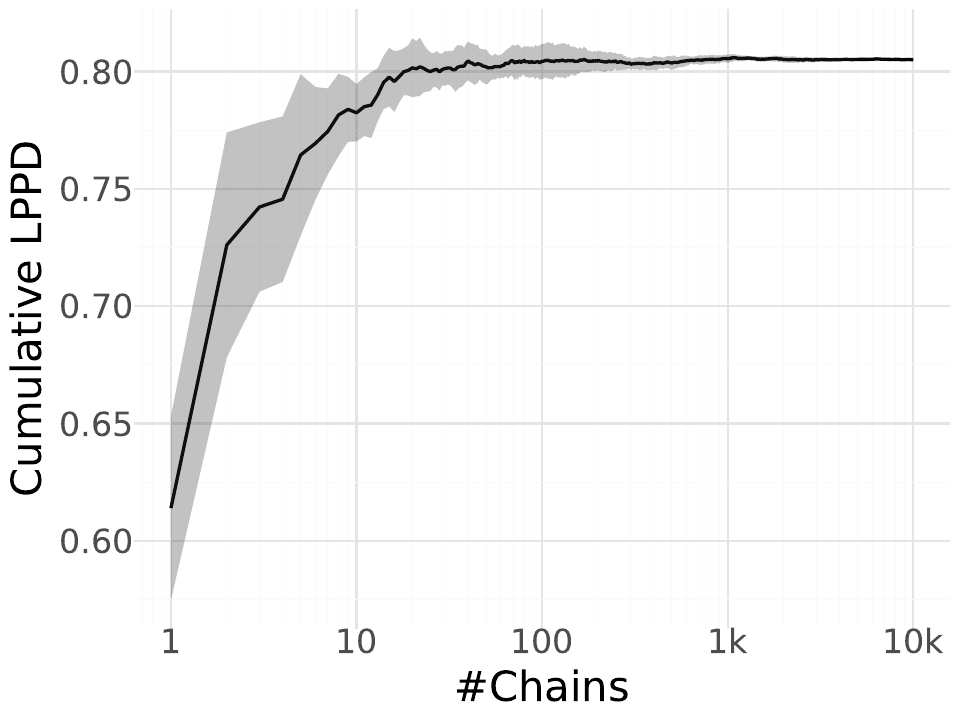}
        \caption{A fully-connected BNN on \texttt{airfoil}.}
        \label{fig:subfig1}
    \end{subfigure}
    \hfill
    \begin{subfigure}[b]{0.48\columnwidth}
        \centering
        \includegraphics[width=\textwidth]{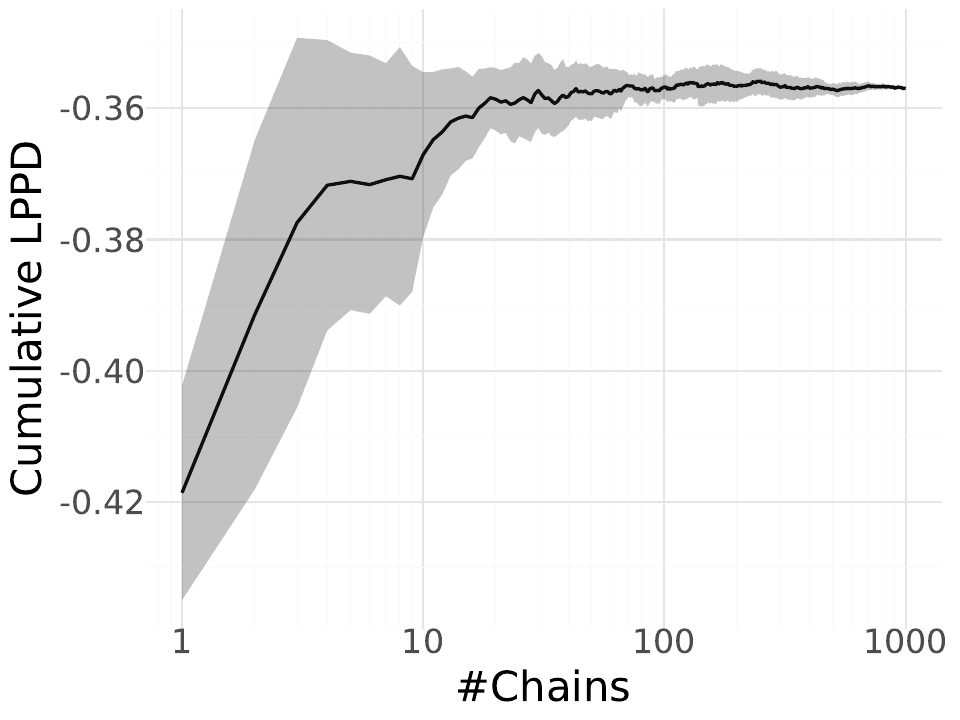}
        \caption{A convolutional BNN on \texttt{Fashion-MNIST}.}
        \label{fig:subfig2}
    \end{subfigure}
    \caption{Cumulative LPPD increase over the number of chains (standard deviation across 5 random chain orderings) for different architectures and datasets.}
    \label{fig:lppd_convergence}
\end{figure}

\bibliography{main}

@inproceedings{ziyin2024removesymmetriescontrolmodel,
      title={{Remove Symmetries to Control Model Expressivity}}, 
      author={Liu Ziyin and Yizhou Xu and Isaac Chuang},
booktitle={The Thirteenth International Conference on Learning Representations},
year={2025}
}

@inproceedings{
ziyin2024parameter,
title={{Parameter Symmetry and Noise Equilibrium of Stochastic Gradient Descent}},
author={Liu Ziyin and Mingze Wang and Hongchao Li and Lei Wu},
booktitle={The Thirty-eighth Annual Conference on Neural Information Processing Systems},
year={2024},
OPTurl={https://openreview.net/forum?id=uhki1rE2NZ}
}

@misc{corti2025microscopic,
      title={Microscopic and collective signatures of feature learning in neural networks}, 
      author={Andrea Corti and Rosalba Pacelli and Pietro Rotondo and Marco Gherardi},
      year={2025},
      eprint={2508.20989},
      archivePrefix={arXiv},
      primaryClass={cond-mat.dis-nn},
      OPTurl={https://arxiv.org/abs/2508.20989}, 
}

@article{liu2020toward,
  title={Toward a theory of optimization for over-parameterized systems of non-linear equations: the lessons of deep learning},
  author={Liu, Chaoyue and Zhu, Libin and Belkin, Mikhail},
  journal={arXiv preprint arXiv:2003.00307},
  volume={7},
  year={2020}
}

@inproceedings{adasghmc,
 author = {Springenberg, Jost Tobias and Klein, Aaron and Falkner, Stefan and Hutter, Frank},
 booktitle = {Advances in Neural Information Processing Systems},
 editor = {D. Lee and M. Sugiyama and U. Luxburg and I. Guyon and R. Garnett},
 pages = {},
 publisher = {Curran Associates, Inc.},
 title = {Bayesian Optimization with Robust Bayesian Neural Networks},
 volume = {29},
 year = {2016}
}

@inproceedings{nguyen2019connected,
  title={On connected sublevel sets in deep learning},
  author={Nguyen, Quynh},
  booktitle={International conference on machine learning},
  pages={4790--4799},
  year={2019},
  organization={PMLR}
}

@inproceedings{sommer2024connecting,
  title={{Connecting the Dots: Is Mode-Connectedness the Key to Feasible Sample-Based Inference in Bayesian Neural Networks?}},
  author={Sommer, Emanuel and Wimmer, Lisa and Papamarkou, Theodore and Bothmann, Ludwig and Bischl, Bernd and R{\"u}gamer, David},
  booktitle={Proceedings of the 41st International Conference on Machine Learning},
  year={2024},
  publisher={PMLR},
}

@inproceedings{wiese2023towards,
  title={{Towards efficient MCMC sampling in Bayesian neural networks by exploiting symmetry}},
  author={Wiese, Jonas Gregor and Wimmer, Lisa and Papamarkou, Theodore and Bischl, Bernd and G{\"u}nnemann, Stephan and R{\"u}gamer, David},
  booktitle={Joint European Conference on Machine Learning and Knowledge Discovery in Databases},
  pages={459--474},
  year={2023},
  organization={Springer}
}

@misc{jax2018github,
  author = {James Bradbury and Roy Frostig and Peter Hawkins and Matthew James Johnson and Chris Leary and Dougal Maclaurin and George Necula and Adam Paszke and Jake Vander{P}las and Skye Wanderman-{M}ilne and Qiao Zhang},
  title = {{JAX}: Composable Transformations of {P}ython+{N}um{P}y programs},
  url = {http://github.com/google/jax},
  version = {0.3.13},
  year = {2018},
}

@inproceedings{daxberger_2021_LaplaceReduxa,
  title = {Laplace {{Redux}} {\textendash} {{Effortless Bayesian Deep Learning}}},
  booktitle = {35th {{Conference}} on {{Neural Information Processing Systems}} ({{NeurIPS}} 2021)},
  author = {Daxberger, Erik and Kristiadi, Agustinus and Immer, Alexander and Eschenhagen, Runa and Bauer, Matthias and Hennig, Philipp},
  year = {2021}
}

@inproceedings{farquhar_2020_LibertyDeptha,
  title = {Liberty or {{Depth}}: {{Deep Bayesian Neural Nets Do Not Need Complex Weight Posterior Approximations}}},
  shorttitle = {Liberty or {{Depth}}},
  booktitle = {Proceedings of the 34th {{Conference}} on {{Neural Information Processing Systems}} ({{NeurIPS}} 2020)},
  author = {Farquhar, Sebastian and Smith, Lewis and Gal, Yarin},
  year = {2020}
}

@inproceedings{freeman_2017_TopologyGeometrya,
  title = {Topology and {{Geometry}} of {{Half-Rectified Network Optimization}}},
  booktitle = {Proceedings of the 5th International Conference on Learning Representations},
  author = {Freeman, C. Daniel and Bruna, Joan},
  year = {2017}
}

@inproceedings{garipov_2018_LossSurfacesb,
  title = {Loss {{Surfaces}}, {{Mode Connectivity}}, and {{Fast Ensembling}} of {{DNNs}}},
  booktitle = {Proceedings of the 32nd {{Conference}} on {{Neural Information Processing Systems}} ({{NeurIPS}} 2018)},
  author = {Garipov, Timur and Izmailov, Pavel and Podoprikhin, Dmitrii and Vetrov, Dmitry P and Wilson, Andrew G},
  year = {2018},
  pages = {10}
}

@incollection{hecht-nielsen_1990_ALGEBRAICSTRUCTUREa,
  title = {{On the Algebraic Structure of Feedforward Network Weight Spaces}},
  booktitle = {Advanced {{Neural Computers}}},
  author = {{Hecht-Nielsen}, Robert},
  year = {1990},
  pages = {129--135},
  publisher = {{Elsevier}}
}

@inproceedings{
loshchilov2018decoupled,
title={{Decoupled Weight Decay Regularization}},
author={Ilya Loshchilov and Frank Hutter},
booktitle={International Conference on Learning Representations},
year={2019},
OPTurl={https://openreview.net/forum?id=Bkg6RiCqY7},
}

@misc{cabezas2024blackjax,
      title={{BlackJAX: Composable Bayesian inference in JAX}},
      author={Alberto Cabezas and Adrien Corenflos and Junpeng Lao and Rémi Louf},
      year={2024},
      eprint={2402.10797},
      archivePrefix={arXiv},
      primaryClass={cs.MS}
}

@inproceedings{izmailov_2020_SubspaceInferencea,
  title = {Subspace {{Inference}} for {{Bayesian Deep Learning}}},
  booktitle = {Proceedings of the Conference on Uncertainty in Artificial Intelligence},
  author = {Izmailov, Pavel and Maddox, Wesley J. and Kirichenko, Polina and Garipov, Timur and Vetrov, Dmitry and Wilson, Andrew Gordon},
  year = {2020},
  pages = {1169--1179}
}

@inproceedings{izmailov_2021_WhatArea,
  title = {What {{Are Bayesian Neural Network Posteriors Really Like}}?},
  booktitle = {Proceedings of the 38th {{International Conference}} on {{Machine Learning}}, {{PMLR}} 139,},
  author = {Izmailov, Pavel and Vikram, Sharad and Hoffman, Matthew D. and Wilson, Andrew Gordon},
  year = {2021}
}

@article{kolb_2023_SmoothingEdgesa,
  title = {{Smoothing the Edges: Smooth Optimization for Sparse Regularization using Hadamard Overparametrization}},
  author = {Kolb, Chris and M{\"u}ller, Christian L. and Bischl, Bernd and R{\"u}gamer, David},
  year = {2026},
  journal = {Machine Learning},
  note = {To appear}
}

@article{
    navon_2023_EquivariantArchitecturesa,
    title={Equivariant Architectures for Learning in Deep Weight Spaces},
    author={
        Navon, Aviv and Shamsian, Aviv and Achituve, Idan and Fetaya, 
        Ethan and Chechik, Gal and Maron, Haggai
    },
    journal={arXiv preprint arXiv:2301.12780},
    year={2023}
}

@article{papamarkou_2022_ChallengesMarkova,
  title = {Challenges in {{Markov Chain Monte Carlo}} for {{Bayesian Neural Networks}}},
  author = {Papamarkou, Theodore and Hinkle, Jacob and Young, M. Todd and Womble, David},
  year = {2022},
  journal = {Statistical Science},
  volume = {37},
  number = {3},
  optIssn = {0883-4237}
}

@inproceedings{pourzanjani_2017_ImprovingIdentifiabilityb,
  title = {Improving the {{Identifiability}} of {{Neural Networks}} for {{Bayesian Inference}}},
  booktitle = {Second Workshop on {{Bayesian Deep Learning}}},
  author = {Pourzanjani, Arya A. and Jiang, Richard M. and Petzold, Linda R.},
  year = {2017}
}

@article{jacot2018neural,
  title={Neural tangent kernel: Convergence and generalization in neural networks},
  author={Jacot, Arthur and Gabriel, Franck and Hongler, Cl{\'e}ment},
  journal={Advances in neural information processing systems},
  volume={31},
  year={2018}
}

@inproceedings{simsek2021geometry,
  title = {Geometry of the {{Loss Landscape}} in {{Overparameterized Neural Networks}}: {{Symmetries}} and {{Invariances}}},
  booktitle = {Proceedings of the 38 Th {{International Conference}} on {{Machine Learning}}},
  author = {Simsek, Berfin and Ged, Fran{\c c}ois and Jacot, Arthur and Spadaro, Francesco and Hongler, Cl{\'e}ment and Gerstner, Wulfram and Brea, Johanni},
  year = {2021}
}

@inproceedings{ghorbani2019investigation,
  title={An investigation into neural net optimization via hessian eigenvalue density},
  author={Ghorbani, Behrooz and Krishnan, Shankar and Xiao, Ying},
  booktitle={International Conference on Machine Learning},
  pages={2232--2241},
  year={2019},
  organization={PMLR}
}

@article{kunin2024get,
  title={Get rich quick: exact solutions reveal how unbalanced initializations promote rapid feature learning},
  author={Kunin, Daniel and Ravent{\'o}s, Allan and Domin{\'e}, Cl{\'e}mentine and Chen, Feng and Klindt, David and Saxe, Andrew and Ganguli, Surya},
  journal={Advances in Neural Information Processing Systems},
  volume={37},
  pages={81157--81203},
  year={2024}
}

@inproceedings{azulay2021implicit,
  title={On the implicit bias of initialization shape: Beyond infinitesimal mirror descent},
  author={Azulay, Shahar and Moroshko, Edward and Nacson, Mor Shpigel and Woodworth, Blake E and Srebro, Nathan and Globerson, Amir and Soudry, Daniel},
  booktitle={International Conference on Machine Learning},
  pages={468--477},
  year={2021},
  organization={PMLR}
}

@article{liu2016stein,
  title={Stein variational gradient descent: A general purpose bayesian inference algorithm},
  author={Liu, Qiang and Wang, Dilin},
  journal={Advances in neural information processing systems},
  volume={29},
  year={2016}
}

@article{martens2020new,
  title={New insights and perspectives on the natural gradient method},
  author={Martens, James},
  journal={Journal of Machine Learning Research},
  volume={21},
  number={146},
  pages={1--76},
  year={2020}
}

@inproceedings{coker2022wide,
  title={Wide mean-field bayesian neural networks ignore the data},
  author={Coker, Beau and Bruinsma, Wessel P and Burt, David R and Pan, Weiwei and Doshi-Velez, Finale},
  booktitle={International Conference on Artificial Intelligence and Statistics},
  pages={5276--5333},
  year={2022},
  organization={PMLR}
}

@article{gelman2014a,
  author = {Gelman, A. and Hwang, J. and Vehtari, A.},
  title = {{Understanding Predictive Information Criteria for Bayesian Models}},
  volume = {24},
  pages = {997–1016},
  year = {2014},
  language = {en},
  journal = {Statistics and Computing},
  number = {6}
}

@InProceedings{wenzel_2020_HowGooda,
  title = 	 {How Good is the {B}ayes Posterior in Deep Neural Networks Really?},
  author =       {Wenzel, Florian and Roth, Kevin and Veeling, Bastiaan and Swiatkowski, Jakub and Tran, Linh and Mandt, Stephan and Snoek, Jasper and Salimans, Tim and Jenatton, Rodolphe and Nowozin, Sebastian},
  booktitle = 	 {Proceedings of the 37th International Conference on Machine Learning},
  year = 	 {2020},
  publisher =    {PMLR}
}

@InProceedings{dold2024,
title={{Semi-Structured Subspace Inference}},
author={Daniel Dold and David R{\"u}gamer and Beate Sick and Oliver D{\"u}rr},
  year={2024},
  booktitle = {Proceedings of the 27th International Conference on Artificial Intelligence and Statistics},
  series = {Proceedings of Machine Learning Research},
  publisher = {PMLR}
}

@article{pavliotis2014stochastic,
  title={Stochastic processes and applications},
  author={Pavliotis, Grigorios A},
  journal={Texts in applied mathematics},
  volume={60},
  year={2014},
  publisher={Springer}
}

@InProceedings{dold2025,
title={{Paths and Ambient Spaces in Neural Loss Landscapes}},
author={Daniel Dold and Julius Kobialka and Nicolai Palm and Emanuel Sommer and David R{\"u}gamer and Oliver D{\"u}rr},
  year={2025},
  booktitle = {Proceedings of the 28th International Conference on Artificial Intelligence and Statistics},
  series = {Proceedings of Machine Learning Research},
  publisher = {PMLR}
}

@misc{Dua.2019 ,
author = "Dua, Dheeru and Graff, Casey",
year = "2017",
title = "{{UCI Machine Learning Repository}}",
url = "http://archive.ics.uci.edu/ml",
institution = "University of California, Irvine, School of Information and Computer Sciences" }

@article{Yeh.1998,
  title={{Modeling of Strength of High-Performance Concrete Using Artificial Neural Networks}},
  author={Yeh, I-C},
  journal={Cement and Concrete research},
  volume={28},
  number={12},
  year={1998},
  publisher={Elsevier}
}

@article{Tsanas.2012,
  title={{Accurate Quantitative Estimation of Energy Performance of Residential Buildings Using Statistical Machine Learning Tools}},
  author={Tsanas, Athanasios and Xifara, Angeliki},
  journal={Energy and Buildings},
  volume={49},
  year={2012},
  publisher={Elsevier}
}

@misc{misc_bike_sharing_dataset_275,
  author       = {Fanaee-T,Hadi},
  title        = {{Bike Sharing Dataset}},
  year         = {2013},
  howpublished = {UCI Machine Learning Repository}
}

@inproceedings{zhang_2020_CyclicalStochastic,
  title = {Cyclical {{Stochastic Gradient MCMC}} for {{Bayesian Deep Learning}}},
  booktitle = {Proceedings of the {{Eighth International Conference}} on {{Learning Representations}}},
  author = {Zhang, Ruqi and Li, Chunyuan and Zhang, Jianyi and Chen, Changyou and Wilson, Andrew Gordon},
  year = {2020},
}

@inproceedings{draxler_2018_EssentiallyNo,
  title = {Essentially {{No Barriers}} in {{Neural Network Energy Landscape}}},
  booktitle = {Proceedings of the 35th {{International Conference}} on {{Machine Learning}}},
  author = {Draxler, Felix and Veschgini, Kambis and Salmhofer, Manfred and Hamprecht, Fred A.},
  year = {2018}
}

@inproceedings{bui2020functional,
  title={{Functional vs. Parametric Equivalence of ReLU Networks}},
  author={Phuong, Mary and Lampert, Christoph},
  booktitle={8th International Conference on Learning Representations},
  year={2020}
}

@inproceedings{
laurent2024a,
title={{A Symmetry-Aware Exploration of Bayesian Neural Network Posteriors}},
author={Olivier Laurent and Emanuel Aldea and Gianni Franchi},
booktitle={{The Twelfth International Conference on Learning Representations}},
year={2024},
OPTurl={https://openreview.net/forum?id=FOSBQuXgAq}
}

@article{bona2023parameter,
  title={{Parameter Identifiability of a Deep Feedforward ReLU Neural Network}},
  author={Bona-Pellissier, Joachim and Bachoc, Fran{\c{c}}ois and Malgouyres, Fran{\c{c}}ois},
  journal={Machine Learning},
  volume={112},
  number={11},
  pages={4431--4493},
  year={2023},
  publisher={Springer}
}

@article{fortuin2022priors,
  title={{Priors in Bayesian Deep Learning: A Review}},
  author={Fortuin, Vincent},
  journal={International Statistical Review},
  volume={90},
  number={3},
  pages={563--591},
  year={2022},
  publisher={Wiley Online Library}
}

@inproceedings{petzka2020notes,
  title={{Notes on the Symmetries of 2-Layer ReLU-Networks}},
  author={Petzka, Henning and Trimmel, Martin and Sminchisescu, Cristian},
  booktitle={Proceedings of the northern lights deep learning workshop},
  volume={1},
  pages={6--6},
  year={2020}
}

@inproceedings{gelberg2024variational,
  title={{Variational Inference Failures Under Model Symmetries: Permutation Invariant Posteriors for Bayesian Neural Networks}},
  author={Gelberg, Yoav and van der Ouderaa, Tycho FA and van der Wilk, Mark and Gal, Yarin},
  booktitle={ICML 2024 Workshop on Geometry-grounded Representation Learning and Generative Modeling},
year={2024}
}

@article{kurle2021symmetries,
  title={{On Symmetries in Variational Bayesian Neural Nets}},
  author={Kurle, Richard and Januschowski, Tim and Gasthaus, Jan and Wang, Yuyang Bernie},
  year={2021}
}

@inproceedings{maron2018invariant,
    author = {Maron, Haggai and Ben-Hamu, Heli and Shamir, Nadav and Lipman, Yaron},
    title = {{Invariant and Equivariant Graph Networks}},
    booktitle = {International Conference on Learning Representations},
    year = {2019},
}

@incollection{sen2024bayesian,
  title={{Bayesian Neural Networks and Dimensionality Reduction}},
  author={Sen, Deborshee and Papamarkou, Theodore and Dunson, David},
  booktitle={Handbook of Bayesian, Fiducial, and Frequentist Inference},
  year={2024},
  publisher={Chapman and Hall/CRC}
}

@phdthesis{nalisnick_priors_2018,
  title = {On {{Priors}} for {{Bayesian Neural Networks}}},
  author = {Nalisnick, Eric Thomas},
  year = {2018},
  school = {{University of California, Irvine}}
}

@inproceedings{rolnick2020reverse,
  title={{Reverse-Engineering Deep ReLU Networks}},
  author={Rolnick, David and Kording, Konrad},
  booktitle={International conference on machine learning},
  year={2020},
  organization={PMLR}
}

@InProceedings{pmlr-v235-papamarkou24b,
  title = 	 {{Position: Bayesian Deep Learning is Needed in the Age of Large-Scale AI}},
  author =       {Papamarkou, Theodore and Skoularidou, Maria and Palla, Konstantina and Aitchison, Laurence and Arbel, Julyan and Dunson, David and Filippone, Maurizio and Fortuin, Vincent and Hennig, Philipp and Hern\'{a}ndez-Lobato, Jos\'{e} Miguel and Hubin, Aliaksandr and Immer, Alexander and Karaletsos, Theofanis and Khan, Mohammad Emtiyaz and Kristiadi, Agustinus and Li, Yingzhen and Mandt, Stephan and Nemeth, Christopher and Osborne, Michael A and Rudner, Tim G. J. and R\"{u}gamer, David and Teh, Yee Whye and Welling, Max and Wilson, Andrew Gordon and Zhang, Ruqi},
  booktitle = 	 {Proceedings of the 41st International Conference on Machine Learning},
  year = 	 {2024},
  publisher =    {PMLR}
}

@inproceedings{mishkin2022fast,
  title={{Fast Convex Optimization for Two-Layer ReLU Networks: Equivalent Model Classes and Cone Decompositions}},
  author={Mishkin, Aaron and Sahiner, Arda and Pilanci, Mert},
  booktitle={International Conference on Machine Learning},
  pages={15770--15816},
  year={2022},
  organization={PMLR}
}

@inproceedings{
chrisICLR,
title={{Deep Weight Factorization: Sparse Learning Through the Lens of Artificial Symmetries}},
author={Kolb, Chris and Weber, Tobias and Bischl, Bernd and R{\"u}gamer, David},
booktitle={{The Thirteenth International Conference on Learning Representations}},
year={2025},
OPTurl={https://openreview.net/forum?id=vNdOHr7mn5}
}

@inproceedings{
kim2025exploring,
title={Exploring The Loss Landscape Of Regularized Neural Networks Via Convex Duality},
author={Sungyoon Kim and Aaron Mishkin and Mert Pilanci},
booktitle={The Thirteenth International Conference on Learning Representations},
year={2025},
OPTurl={https://openreview.net/forum?id=4xWQS2z77v}
}

@inproceedings{trippe2017overpruning,
  title={Overpruning in Variational Bayesian Neural Networks},
  author={Trippe, Brian and Turner, Richard},
  journal={Advances in Approximate Bayesian Inference — NIPS 2017 Workshop},
  year={2017}
}

@inproceedings{
MILE,
title={{Microcanonical Langevin Ensembles: Advancing the Sampling of Bayesian Neural Networks}},
author={Sommer, Emanuel and Robnik, Jakob and Nozadze, Giorgi and Seljak, Uros and R\"ugamer, David},
booktitle={The Thirteenth International Conference on Learning Representations},
year={2025}
}

@inproceedings{chen2014stochastic,
  title={{Stochastic Gradient Hamiltonian Monte Carlo}},
  author={Chen, Tianqi and Fox, Emily and Guestrin, Carlos},
  booktitle={International Conference on Machine Learning},
  pages={1683--1691},
  year={2014},
  organization={PMLR}
}

@inproceedings{
    paulin2025sampling,
    title={Sampling from Bayesian Neural Network Posteriors with Symmetric Minibatch Splitting Langevin Dynamics},
    author={Daniel Paulin and Peter A. Whalley and Neil K. Chada and Benedict J. Leimkuhler},
    booktitle={The 28th International Conference on Artificial Intelligence and Statistics},
    year={2025}
}

@inproceedings{
    roy2024reparameterization,
    title={Reparameterization invariance in approximate Bayesian inference},
    author={Hrittik Roy and Marco Miani and Carl Henrik Ek and Philipp Hennig and Marvin Pf{\"o}rtner and Lukas Tatzel and S{\o}ren Hauberg},
    booktitle={The Thirty-eighth Annual Conference on Neural Information Processing Systems},
    year={2024}
}

@misc{sigillito1989ionosphere,
  author = {Sigillito, Vincent and Wing, Scott and Hutton, Lisa and Baker, K.},
  title = {Ionosphere},
  year = {1989},
  howpublished = {UCI Machine Learning Repository},
  url = {https://doi.org/10.24432/C5W01B}
}

@online{xiao2017/online,
  author       = {Han Xiao and Kashif Rasul and Roland Vollgraf},
  title        = {{Fashion-MNIST: a Novel Image Dataset for Benchmarking Machine Learning Algorithms}},
  date         = {2017-08-28},
  year         = {2017},
  eprintclass  = {cs.LG},
  eprinttype   = {arXiv},
  eprint       = {cs.LG/1708.07747},
}

@article{krizhevsky2009learning,
  title={Learning multiple layers of features from tiny images},
  author={Krizhevsky, Alex and Hinton, Geoffrey and others},
  year={2009},
  publisher={Toronto, ON, Canada}
}

@inproceedings{
    duffield2025scalable,
    title={Scalable Bayesian Learning with posteriors},
    author={Samuel Duffield and Kaelan Donatella and Johnathan Chiu and Phoebe Klett and Daniel Simpson},
    booktitle={The Thirteenth International Conference on Learning Representations},
    year={2025}
}

@inproceedings{cui2023bayes,
  title={Bayes-optimal learning of deep random networks of extensive-width},
  author={Cui, Hugo and Krzakala, Florent and Zdeborov{\'a}, Lenka},
  booktitle={International Conference on Machine Learning},
  pages={6468--6521},
  year={2023},
  organization={PMLR}
}

@inproceedings{ziyin2024symmetry,
  title={Symmetry Induces Structure and Constraint of Learning},
  author={Ziyin, Liu},
  booktitle={International Conference on Machine Learning},
  pages={62847--62866},
  year={2024},
  organization={PMLR}
}

@article{papamarkou2023,
	author  = {T. Papamarkou},
	title   = {Approximate blocked {G}ibbs sampling for {B}ayesian neural networks},
	journal = {Statistics and Computing},
	year    = {2023},
	volume  = {33},
	issue   = {5},
	pages   = {}
}

@inproceedings{vladimirova2019understanding,
  title={Understanding priors in Bayesian neural networks at the unit level},
  author={Vladimirova, Mariia and Verbeek, Jakob and Mesejo, Pablo and Arbel, Julyan},
  booktitle={International Conference on Machine Learning},
  pages={6458--6467},
  year={2019},
  organization={PMLR}
}

@inproceedings{blundell2015weight,
  title={Weight uncertainty in neural network},
  author={Blundell, Charles and Cornebise, Julien and Kavukcuoglu, Koray and Wierstra, Daan},
  booktitle={International conference on machine learning},
  pages={1613--1622},
  year={2015},
  organization={PMLR}
}

@article{ziyin2025neural,
  title={Neural Thermodynamics I: Entropic Forces in Deep and Universal Representation Learning},
  author={Ziyin, Liu and Xu, Yizhou and Chuang, Isaac},
  journal={arXiv preprint arXiv:2505.12387},
  year={2025}
}

@article{du2018algorithmic,
  title={Algorithmic regularization in learning deep homogeneous models: Layers are automatically balanced},
  author={Du, Simon S and Hu, Wei and Lee, Jason D},
  journal={Advances in neural information processing systems},
  volume={31},
  year={2018}
}

@inproceedings{robnik2024fluctuation,
  title={Fluctuation without dissipation: Microcanonical Langevin Monte Carlo},
  author={Robnik, Jakob and Seljak, Uros},
  booktitle={Symposium on Advances in Approximate Bayesian Inference},
  pages={111--126},
  year={2024},
  organization={PMLR}
}

@article{parhi2023deep,
  title={Deep learning meets sparse regularization: A signal processing perspective},
  author={Parhi, Rahul and Nowak, Robert D},
  journal={IEEE Signal Processing Magazine},
  volume={40},
  number={6},
  pages={63--74},
  year={2023},
  publisher={IEEE}
}

@misc{SMILE,
      title={Can Microcanonical Langevin Dynamics Leverage Mini-Batch Gradient Noise?}, 
      author={Emanuel Sommer and Kangning Diao and Jakob Robnik and Uros Seljak and David R\"ugamer},
      year={2026},
      eprint={2602.06500},
      archivePrefix={arXiv},
      primaryClass={cs.LG},
      OPTurl={https://arxiv.org/abs/2602.06500}, 
}

@inproceedings{
kolb2025differentiable,
title={{Differentiable Sparsity via D-Gating: Simple and Versatile Structured Penalization}},
author={Chris Kolb and Laetitia Frost and Bernd Bischl and David R{\"u}gamer},
booktitle={The Thirty-ninth Annual Conference on Neural Information Processing Systems},
year={2025},
OPTurl={https://openreview.net/forum?id=8OGTkEJrmb}
}

\section*{Checklist}

\begin{enumerate}

  \item For all models and algorithms presented, check if you include:
  \begin{enumerate}
    \item A clear description of the mathematical setting, assumptions, algorithm, and/or model. Yes, provided in \cref{sec:background,sec:global,app:proofs_derivations,app:additional_results}.
    \item An analysis of the properties and complexity (time, space, sample size) of any algorithm. Not Applicable.
    \item (Optional) Anonymized source code, with specification of all dependencies, including external libraries. Yes, link provided in \cref{app:exp_setup}.
  \end{enumerate}

  \item For any theoretical claim, check if you include:
  \begin{enumerate}
    \item Statements of the full set of assumptions of all theoretical results. Yes, in \cref{app:defs,app:proofs_derivations}.
    \item Complete proofs of all theoretical results. Yes, in \cref{app:proofs_derivations}.
    \item Clear explanations of any assumptions. Yes, in \cref{sec:background,sec:global,app:defs,app:proofs_derivations}.     
  \end{enumerate}

  \item For all figures and tables that present empirical results, check if you include:
  \begin{enumerate}
    \item The code, data, and instructions needed to reproduce the main experimental results (either in the supplemental material or as a URL). Yes, in \cref{app:additional_results}.
    \item All the training details (e.g., data splits, hyperparameters, how they were chosen). Yes, in \cref{app:additional_results}.
    \item A clear definition of the specific measure or statistics and error bars (e.g., with respect to the random seed after running experiments multiple times). Yes, in \cref{app:additional_results} and all respective experiment descriptions.
    \item A description of the computing infrastructure used. (e.g., type of GPUs, internal cluster, or cloud provider). Yes, in \cref{app:additional_results}.
  \end{enumerate}

  \item If you are using existing assets (e.g., code, data, models) or curating/releasing new assets, check if you include:
  \begin{enumerate}
    \item Citations of the creator if your work uses existing assets.  Yes, we use code and reference it in \cref{app:additional_results}.
    \item The license information of the assets, if applicable. Not Applicable. All code used is open-source under common licenses that allow usage and extension.
    \item New assets either in the supplemental material or as a URL, if applicable. Not Applicable.
    \item Information about consent from data providers/curators. Not Applicable.
    \item Discussion of sensible content if applicable, e.g., personally identifiable information or offensive content. Not Applicable.
  \end{enumerate}

  \item If you used crowdsourcing or conducted research with human subjects, check if you include:
  \begin{enumerate}
    \item The full text of instructions given to participants and screenshots. Not Applicable.
    \item Descriptions of potential participant risks, with links to Institutional Review Board (IRB) approvals if applicable. Not Applicable.
    \item The estimated hourly wage paid to participants and the total amount spent on participant compensation. Not Applicable.
  \end{enumerate}

\end{enumerate}

\clearpage
\appendix
\thispagestyle{empty}

\onecolumn
\aistatstitle{\ourtitle: \\
Supplementary Materials}

\section{ADDITIONAL RELATED WORK} \label{app:rellit}

Previous works related to sampling-based inference frame symmetries as equioutput parameter states, where different parameter sets $\bw$ in the weight space $\mathcal{W}$ of the neural network lead to the same functional mapping \citep{hecht-nielsen_1990_ALGEBRAICSTRUCTUREa, wiese2023towards}, i.e., $\exists \bw,\tilde\bw\in\mathcal{W}, \bw\neq\tilde\bw: f_{\bw}(\bx) = f_{\tilde\bw}(\bx) \,\forall\bx\in\mathcal{X}$, that is, the same unnormalized log-posterior value if $\bw, \tilde{\bw}$ have the same prior probability which is shown for the special case of permutation symmetries in \citet{wiese2023towards}. Showing functional equivalence of networks when parameters admit an equivalence relationship, i.e., $\bw \sim \tilde\bw \Rightarrow f_{\bw}(\bx) = f_{\tilde\bw}(\bx)$ is more straightforward  \citep{bona2023parameter,petzka2020notes,bui2020functional,pourzanjani_2017_ImprovingIdentifiabilityb}, while deriving parameter equivalence from equivalent outputs, i.e., $f_{\bw}(\bx) = f_{\tilde\bw}(\bx) \Rightarrow \bw \sim \tilde\bw$ requires stronger and often impractical assumptions \citep{bona2023parameter,bui2020functional,rolnick2020reverse} which is commonly summarized as the non-identifiability of parameters in neural networks. \citet{ziyin2024symmetry} abstracts symmetries into a common mirror reflection symmetry that structures the loss landscape \citep{ziyin2024parameter}. Each symmetry carves out a low-capacity subspace defined by a linear constraint (e.g., $O^\top\bw=0$) that acts as an ``absorbing state" for optimization dynamics. As detailed in \citet{ziyin2024symmetry, ziyin2024parameter, ziyin2025neural,ziyin2024removesymmetriescontrolmodel}, symmetries in combination with sufficient $L_2$ regularization strength $\tau > \tau_0$ for the penalty $\tau ||\bw||_2^2$ affect SGD optimization by partitioning the parameter space into symmetry-aligned as well as symmetry-orthogonal subspaces. In multiplicative overparametrization, $L_2$ regularization then forces weights to be balanced across groups of shared neurons \citep{kolb2025differentiable,chrisICLR,kolb_2023_SmoothingEdgesa}.

\paragraph{Overparametrization Induces Connectedness}

The optimization landscape of neural networks is profoundly affected by overparametrization. In under-parameterized models, the loss surface can be fraught with poor local minima, but as the number of parameters increases, this landscape often simplifies, facilitating optimization. Seminal work has shown that for sufficiently wide networks without regularization, the sublevel sets of the loss function become connected, effectively eliminating ``bad local valleys" and ensuring that all global minima reside within a single, large basin \citep{freeman_2017_TopologyGeometrya, simsek2021geometry, nguyen2019connected, kim2025exploring}. This connectivity is largely driven by the inherent symmetries of neural networks. For instance, the permutation symmetry, which creates numerous equivalent discrete minima in a minimally-sized network, can generate a single connected manifold of global minima with the addition of just one extra neuron per layer \citep{simsek2021geometry}. This transition to a more benign landscape, which may not be locally convex but often satisfies favorable conditions like the Polyak-Lojasiewicz condition \citep{liu2020toward}, is a key reason why gradient-based optimization succeeds in large-scale models. Adding regularization fundamentally alters the loss landscape compared to the unregularized case. For instance, $L_2$ regularization (i.e., weight decay) breaks the positive homogeneity of ReLU units, which can reduce the number of optimal solutions from infinite to finite and create a more structured set of optimizers \citep{kim2025exploring}. For models trained with mixed $L_1$ and $L_2$ regularization, asymptotic connectivity of sublevel sets has also been shown with the loss barrier to connect two minima shrinking as overparametrization increases \citep{freeman_2017_TopologyGeometrya}.

Next to the trivial case of the infinite-width limit of overparametrization \citep[i.e., lazy regime,][]{jacot2018neural}, the ``proportional regime'' where the ratio of the number of observations and the number of hidden neurons remains fixed as both increase, $\frac{n}{M} = \alpha$ as $n, M \rightarrow \infty$, has been considered in the literature \citep{cui2023bayes, corti2025microscopic}.  As opposed to the lazy regime, here, training elicits complex correlations between the parameters, fundamentally changing the model's internal structure to adapt to the data while resembling a Gaussian process in the function space \citep{corti2025microscopic}. This also relates to \citet{trippe2017overpruning}, analyzing a potential pathology where more expressive variational approximations of the BNN posterior can paradoxically lead to worse performance as the optimization process induces conditional independence between parameters and data.

\paragraph{BNN Priors} Prior choice is a widely debated issue in BDL, with opinions ranging from the claim that priors in weight space are meaningless due to high dimensionality and identifiability concerns, to the assertion that meaningful priors do exist, though not in the form of the commonly used isotropic Gaussian priors with constant variance \citep{fortuin2022priors, vladimirova2019understanding}.

\section{OMITTED DEFINITIONS AND ASSUMPTIONS} \label{app:defs}

We start by stating our assumptions and defining overparametrization as a network with excessive neurons that overparametrizes a minimum norm interpolant. This is the basis for our results in \cref{sec:twolayer}.

\begin{assumption}[Overparametrization] \label{ass:interpol}
    Let $f(\bx)=\sum_{m=1}^M w_{2,m} \phi(\bw_{1,m}^\top \bx)$ and assume there is a $p$-$M^\ast$-\emph{1} interpolant $
    f^\ast(\bx)=\sum_{m=1}^{M^\ast} w^\ast_{2,m} \phi(\bw_{1,m}^{\ast\top} \bx)$
    that attains the minimal norm cost among all exact interpolants of width $\le M^\ast$. Assume the feature vectors $\Phi(\bw_{1,m}^\ast)\in\mathbb R^n$ with $\Phi(\bv)=\{\phi(\langle \bv,\bx_i\rangle)\}_{i=1}^n$ are nonzero, linearly independent (i.e., the network is not overparametrized) and identifiable up to permutations.
\end{assumption}

Furthermore, in our experiments, we use the following layerwise overparametrization notion.

\begin{definition}[Layerwise overparametrization]
Let layer $l$ have $M_l$ hidden units and define the layer’s feature matrix on the training inputs $\bX=(\bx_i)_{i=1}^n$ by
\[
\bm{\Xi}_l(\bw)\;:=\;\big(\xi_{l,m}(\bx_i;\bw)\big)_{i\in[n],\,m\in[M_l]}\in\mathbb R^{n\times M_l},
\]
where $\xi_{l,m}$ is the contribution of hidden unit $m$ to the next layer. We call layer $l$ \emph{overparametrized} if $\operatorname{rank}\big(\bm{\Xi}_l(\bw^\text{ref})\big)=:M_l^\ast < M_l$ at a reference parameter $\bw^\text{ref}$. The rank deficiency is $r_l:=M_l-M_l^\ast>0$.
\end{definition}

Overparametrization means that the column span of $\bm{\Xi}_l$ has dimension $M_l^\ast$ while $M_l$ units parameterize it. Any two parameter vectors with the same $\bm{\Xi}_l$ produce the same next-layer input up to the linear map $W_{l+1}$.

For transparency, we collect all assumptions underlying the results in \cref{sec:global}. \cref{cor:alpha_discrepancy} requires \cref{ass:regularity} and \cref{ass:factorization}. \cref{cor:simplexprob} further requires \cref{ass:weaklimit}. The experimental results do not rely on any of these assumptions and serve as independent validation.

\begin{assumption}[Manifold regularity] \label{ass:regularity}
For a fixed assignment $\varsigma$, the minimum-norm manifold $\mathcal{M}_\varsigma$ is a closed embedded $C^2$ manifold of $\mathbb{R}^d$ with positive reach, i.e., every point in $\mathcal{M}_\varsigma^\varepsilon$ has a unique nearest point on $\mathcal{M}_\varsigma$.
\end{assumption}

\begin{assumption}[Volume factorization] \label{ass:factorization}
In the $\varepsilon$-tube around $\mathcal{M}_\varsigma$, the contribution of directions orthogonal to the reallocation coordinates $\bm\rho^{(\varpi)}$ to the tube volume is independent of $\bm\rho^{(\varpi)}$.
\end{assumption}

\begin{assumption}[Existence of the weak limit] \label{ass:weaklimit}
The tube-conditioned posterior $\mathbb{P}_n^\varepsilon$ (\cref{def:mnman_tube}) converges weakly as $\varepsilon \downarrow 0$ to a well-defined limit $\mathbb{P}_n$ on $\mathcal{M}_\varsigma$. 
\end{assumption}

\cref{ass:weaklimit} is required for the moment bounds in \cref{cor:simplexprob}. Under \cref{ass:regularity}, this is expected to hold, but we do not verify it for specific ReLU network architectures.

\section{OMITTED THEORETICAL RESULTS, PROOFS AND DERIVATIONS} \label{app:proofs_derivations}

The statements made in this section, referring to statements and proofs of \cref{sec:twolayer}, assume the following overparametrization.

\begin{definition}
We define the surjective map $\varsigma: [M] \to [M^\ast]$, referred to as the assignment or reallocation map, such that for every $\varpi\in[M^\ast]$, $\sum_{m\in G_\varpi} \rho_m = 1$ with coefficients $(\rho_m)_{m\in[G_\varpi]} \in \Delta^{k_\varpi-1}$ on a simplex with dimension defined by $k_\varpi := |G_\varpi|$ and $G_\varpi := \{m\in[M]:\varsigma(m) = \varpi\}$. Further, let $\Delta := \prod_{\varpi\in[M^\ast]} \Delta^{k_\varpi -1}$.
\end{definition}

\begin{lemma} \label{lem:simplex}
    Under fixed assignment map $\varsigma$, \cref{ass:interpol} and according to \cref{def:mnman},
    \begin{align*} \label{eq:wmanifold}
       \mathcal{V}_\varpi := \{ & w_{2,m}, \bw_{1,m}, m\in [M] :\\  &\bw_{1,m} = \sqrt{\rho_m} \bw_{1,\varsigma(m)}^\ast, w_{2,m} = \operatorname{sign}(w_{2,\varsigma(m)}^\ast)\sqrt{\rho_m} |w_{2,\varsigma(m)}^\ast|, (\rho_m)_{m\in[M]}\in\Delta \} \; \in \; \mathcal{M}.
    \end{align*}
\end{lemma}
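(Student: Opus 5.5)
We have to show two things: every point of $\mathcal{V}_\varpi$ computes the same function as $f^\ast$, so the likelihood term $\mathcal{L}$ is unchanged, and it has the same $L_2$ penalty as $\bw^\ast$. Together these give $\mathcal{L}_\lambda(\bw)=\mathcal{L}_\lambda(\bw^\ast)$, which is exactly membership in $\mathcal{M}$ by \cref{def:mnman}. Throughout, we fix $\varsigma$ and $(\rho_m)_{m\in[M]}\in\Delta$, and define $\bw$ by the displayed parametrization.

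\textbf{Function preservation.} We split the sum over hidden units into the groups $G_\varpi$. For $m\in G_\varpi$, positive homogeneity of the ReLU together with $\rho_m\ge 0$ gives
\[
w_{2,m}\,\phi(\bw_{1,m}^\top\bx)=\operatorname{sign}(w^\ast_{2,\varpi})\sqrt{\rho_m}\,|w^\ast_{2,\varpi}|\,\sqrt{\rho_m}\,\phi(\bw^{\ast\top}_{1,\varpi}\bx)=\rho_m\, w^\ast_{2,\varpi}\,\phi(\bw^{\ast\top}_{1,\varpi}\bx).
\]
Summing over $m\in G_\varpi$ and using $\sum_{m\in G_\varpi}\rho_m=1$ recovers the $\varpi$-th unit of $f^\ast$. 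Since $\varsigma$ is surjective, the groups partition $[M]$, and summing over $\varpi$ yields $f(\bx;\bw)=f^\ast(\bx)$ for all $\bx$. In particular $\mathcal{L}(\bw)=\mathcal{L}(\bw^\ast)$.

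\textbf{Norm preservation.} For a single $\lambda$, we compute
\[
\|\bw\|_2^2=\sum_{\varpi}\sum_{m\in G_\varpi}\rho_m\big(\|\bw^\ast_{1,\varpi}\|^2+|w^\ast_{2,\varpi}|^2\big)=\|\bw^\ast\|_2^2 .
\]
For a layerwise weighted $L_2$ penalty (different $\tau_l$), the same computation applies within each layer separately, because each layer's squared norm is rescaled by $\sum_{m\in G_\varpi}\rho_m=1$. Hence $\mathcal{R}(\bw)=\mathcal{R}(\bw^\ast)$, and therefore $\mathcal{L}_\lambda(\bw)=\mathcal{L}_\lambda(\bw^\ast)$.

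\textbf{Main obstacle: well-definedness of $\mathcal{M}$.} The delicate point is not the computation but the claim that $\mathcal{L}_\lambda(\bw^\ast)$ is the right reference level when $\bw$ lives in a width-$M$ parameter space while $\bw^\ast$ has width $M^\ast$. We would resolve this by embedding $\bw^\ast$ into $\mathbb{R}^d$ via zero-padding, which corresponds to $\rho$ being a vertex of each simplex and is itself a point of $\mathcal{V}_\varpi$. Under this reading, $\mathcal{M}$ is a level set in $\mathbb{R}^d$ and the lemma follows.

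If instead the intended meaning of $\mathcal{M}$ is the set of minimizers over width-$M$ networks, we would need the additional fact that widening cannot lower the minimal cost among interpolants. The reason is that, at optimality, the cost is $\sum_m |w_{2,m}|\,\|\bw_{1,m}\|$ (by the AM-GM balancing argument from the main text), which is invariant under splitting neurons along a common direction. We would also need that merging collinear neurons reduces any width-$M$ interpolant to width at most $M^\ast$; this step would rely on \cref{ass:interpol}. We expect this identification to be the main point requiring care; the rest is direct algebra.
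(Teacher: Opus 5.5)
Your argument is correct and is essentially the paper's proof: split the sum over the blocks $G_\varpi$, use positive homogeneity of ReLU to get $w_{2,m}\phi(\bw_{1,m}^\top\bx)=\rho_m w^\ast_{2,\varpi}\phi(\bw^{\ast\top}_{1,\varpi}\bx)$ together with $\sum_{m\in G_\varpi}\rho_m=1$ for function equality, and apply the same blockwise rescaling to obtain $\mathcal{R}(\bw)=\mathcal{R}(\bw^\ast)$; your layerwise-weighted extension is a harmless bonus. The ``main obstacle'' you raise does not come up in the paper, which treats $\mathcal{L}_\lambda(\bw^\ast)$ simply as a scalar level defining $\mathcal{M}$, and since the lemma asserts only the inclusion $\mathcal{V}_\varpi\subseteq\mathcal{M}$, the merging/reduction step you mention would not be needed under either reading.
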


\begin{proof}
We must show (1) \(f(\bx,\bw)=f^\ast(\bx,\bw^\ast)\) for all \(\bx\in\mathcal{X}\) and all \(\bw\in\mathcal V_{\varpi}\), and (2) \(\mathcal R(\bw)=\mathcal R(\bw^\ast)\), where \(\mathcal R(\bw):=\sum_{m=1}^M \|\bw_{1,m}\|_2^2 + w_{2,m}^2\).
By the definition of \(\mathcal V_{\varpi}\), for each hidden index \(m\in[M]\), we set
\[
\bw_{1,m}=\sqrt{\rho_m}\,\bw_{1,\varsigma(m)}^\ast,\qquad
w_{2,m}=\operatorname{sign}\!\big(w_{2,\varsigma(m)}^\ast\big)\,\sqrt{\rho_m}\,\big|w_{2,\varsigma(m)}^\ast\big|,
\]
with coefficients \(\rho_m\ge 0\) satisfying \(\sum_{m\in G_\varpi}\rho_m=1\) for every \(\varpi\in[M^\ast]\).
Using the 1-homogeneity of ReLU, \(\phi(c\,t)=c\,\phi(t)\) for all \(c\ge 0\), we have the following.

\smallskip
\noindent\emph{1) Function equality.}
\begin{align*}
f(\bx,\bw)
&=\sum_{m=1}^{M} w_{2,m}\,\phi(\bw_{1,m}^\top \bx)\\
&=\sum_{m=1}^{M} \operatorname{sign}\!\big(w_{2,\varsigma(m)}^\ast\big)\sqrt{\rho_m}\,|w_{2,\varsigma(m)}^\ast|\,
\phi\!\big(\sqrt{\rho_m}\,\bw_{1,\varsigma(m)}^{\ast\top}\bx\big)\\
&=\sum_{m=1}^{M} \operatorname{sign}\!\big(w_{2,\varsigma(m)}^\ast\big)\,\rho_m\,|w_{2,\varsigma(m)}^\ast|\,
\phi\!\big(\bw_{1,\varsigma(m)}^{\ast\top}\bx\big)\\
&=\sum_{\varpi=1}^{M^\ast}\operatorname{sign}(w_{2,\varpi}^\ast)\,|w_{2,\varpi}^\ast|\,
\phi\!\big(\bw_{1,\varpi}^{\ast\top}\bx\big)(\textstyle\sum_{m\in G_\varpi}\rho_m)\\
&=\sum_{\varpi=1}^{M^\ast} w_{2,\varpi}^\ast\,\phi\!\big(\bw_{1,\varpi}^{\ast\top}\bx\big)
\;=\; f^\ast(\bx,\bw^\ast),
\end{align*}
since \(\operatorname{sign}(a)\,|a|=a\) and \(\sum_{m\in G_\varpi}\rho_m=1\).

\smallskip
\noindent\emph{2) Norm equality.}
\begin{align*}
\mathcal R(\bw)
&=\sum_{m=1}^M \|\bw_{1,m}\|_2^2 + w_{2,m}^2\\
&=\sum_{m=1}^M \rho_m\,\|\bw_{1,\varsigma(m)}^\ast\|_2^2
\;+\;\rho_m\,\big|w_{2,\varsigma(m)}^\ast\big|^2\\
&=\sum_{\varpi=1}^{M^\ast}\Big(\|\bw_{1,\varpi}^\ast\|_2^2+\big|w_{2,\varpi}^\ast\big|^2\Big)
(\textstyle\sum_{m\in G_\varpi}\rho_m)\\
&=\sum_{\varpi=1}^{M^\ast}\Big(\|\bw_{1,\varpi}^\ast\|_2^2+\big|w_{2,\varpi}^\ast\big|^2\Big)
\;=\;\mathcal R(\bw^\ast).
\end{align*}

Thus \(\bw\in\mathcal M\) by Definition~\ref{def:mnman}, which proves the claim.
\end{proof}
    
\subsection{Derivation of \cref{th:simplex}}
\label{th_app:simplex_formal}

We first restate \cref{th:simplex} again using the more formal setup of \cref{lem:simplex}:

\begin{theorem} \label{th:simplex-formal}
Assume the setup of \cref{lem:simplex}. Fix any assignment $\varsigma:[M]\to[M^\ast]$. Let $\bm\rho^{(\varpi)} = (\rho_m)_{m\in G_\varpi}$. On $\mathcal{M}_\varsigma$,
\[
\bm\rho^{(\varpi)} \sim \mathrm{Dirichlet}\left(\tfrac12,\dots,\tfrac12\right)
\quad\text{for each }\varpi=1,\dots,M^\ast,
\]
and the random vectors $\bm\rho^{(1)},\dots,\bm\rho^{(M^\ast)}$ are independent.
\end{theorem}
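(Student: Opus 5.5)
The plan is to read ``distribution on $\mathcal{M}_\varsigma$'' as the measure induced by the Gaussian prior restricted to the manifold. Concretely, this is the density proportional to $\exp(-\lambda\mathcal{R}(\bw))$ times the surface (Hausdorff) measure of $\mathcal{M}_\varsigma$. We then push this measure forward to the reallocation coordinates $\bm\rho$. Since the likelihood is constant on $\mathcal{M}_\varsigma$ (function equality in \cref{lem:simplex}), the posterior is proportional to the prior there. By the norm equality in \cref{lem:simplex}, $\mathcal{R}(\bw)=\mathcal{R}(\bw^\ast)$ is also constant on $\mathcal{M}_\varsigma$. Hence the restricted density is uniform with respect to surface measure, and the whole statement reduces to a change-of-variables computation: we must find the Jacobian of the parametrization $\bm\rho\mapsto\bw(\bm\rho)$ of \cref{lem:simplex}.

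First, for a fixed $\varpi$, write $\bm\omega^\ast_\varpi\in\mathbb{R}^{p+1}$ and $c_\varpi:=\|\bm\omega^\ast_\varpi\|_2>0$. The copies are $\bm\omega_m=\sqrt{\rho_m}\,\bm\omega^\ast_\varpi$ for $m\in G_\varpi$. Set $s_m:=\sqrt{\rho_m}\ge 0$. Then the constraint $\sum_m\rho_m=1$ becomes $\sum_m s_m^2=1$. The block $\{\bm\omega_m\}_{m\in G_\varpi}$ is therefore the image of the positive orthant of the unit sphere $\mathbb{S}^{k_\varpi-1}$ under the linear isometry-up-to-scale $\bm s\mapsto \bm s\otimes\bm\omega^\ast_\varpi$. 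This map scales all lengths by $c_\varpi$, so surface measure on the block is a constant multiple of the uniform (Hausdorff) measure on the spherical orthant in $\bm s$. Second, we push the uniform measure on $\mathbb{S}^{k-1}_+$ forward through $\bm s\mapsto\bm\rho=\bm s^{2}$ (coordinatewise). This is the classical fact that, for $\bm s$ uniform on the sphere, $(s_1^2,\dots,s_k^2)\sim\mathrm{Dirichlet}(\tfrac12,\dots,\tfrac12)$. The cleanest route is to note that the squared coordinates of a normalized standard Gaussian vector $\bm g/\|\bm g\|$ are $\chi^2_1$ variables divided by their sum, i.e.\ Gamma$(\tfrac12)$ normalized. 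Equivalently, one can compute directly with the Jacobian $d s_m = \tfrac12\rho_m^{-1/2}d\rho_m$ on the simplex chart, which gives density $\propto\prod_m\rho_m^{-1/2}$. Third, the full manifold $\mathcal{M}_\varsigma$ is the Cartesian product over $\varpi$ of these blocks, each embedded in orthogonal coordinate subspaces. Surface measure on a product of orthogonally embedded manifolds is the product measure, and the density is constant. So the joint law factorizes, which gives independence of $\bm\rho^{(1)},\dots,\bm\rho^{(M^\ast)}$.

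The main obstacle is the first step, namely justifying that the relevant ``law on $\mathcal{M}_\varsigma$'' is uniform Hausdorff measure on the specific parametrized set $\mathcal{V}$. One issue is that $\mathcal{M}$ as defined by a level set of $\mathcal{L}_\lambda$ may contain more than the rescaling family, for instance other interpolants or non-colinear splits. I would handle this by restricting to $\mathcal{M}_\varsigma:=\mathcal{V}$ for the fixed $\varsigma$. I would also invoke identifiability up to permutations from \cref{ass:interpol}, which rules out copies not colinear with $\bm\omega^\ast_\varpi$ at equal norm; this follows from strict AM--GM/Cauchy--Schwarz as in the $M=1$ example. A second concern is that surface measure on $\mathcal{V}$ (intrinsic dimension $\sum_\varpi(k_\varpi-1)$) differs from tube conditioning, which is why \cref{cor:alpha_discrepancy} obtains parameter $\tfrac{p+1}{2}$ instead. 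I would therefore state explicitly that the theorem uses the intrinsic surface measure, and leave the tube-volume correction to the corollary via \cref{ass:factorization}.
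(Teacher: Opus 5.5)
Your proposal is correct and follows essentially the same route as the paper: the likelihood and the (radial) prior are both constant on $\mathcal{M}_\varsigma$, so the restricted posterior is proportional to intrinsic volume; the linear block map $\bm\alpha^{(\varpi)}\mapsto(\alpha_m\bm\omega^\ast_\varpi)_{m\in G_\varpi}$ has constant Jacobian $\|\bm\omega^\ast_\varpi\|_2^{k_\varpi-1}$, so the amplitudes are uniform on $\mathbb{S}^{k_\varpi-1}_+$; the normalized-Gaussian ($\chi^2_1$) representation then gives $\mathrm{Dirichlet}(\tfrac12,\dots,\tfrac12)$; and the product structure over blocks gives independence. Your explicit note that the theorem refers to intrinsic surface measure, with the tube-volume correction deferred to \cref{cor:alpha_discrepancy}, matches how the paper separates the two results.
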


\begin{proof}
By Lemma~\ref{lem:simplex}, for fixed $\varsigma$, the minimum–norm manifold is parametrized blockwise with blocks $G_\varpi$. Within one of these blocks $G_\varpi$, all solutions $\bm\rho^{(\varpi)} \in \Delta^{k_\varpi -1}$. 
We first prove the distribution result and then derive its origin. First, parameterize each block $G_\varpi, \varpi\in[M^\ast]$, by nonnegative amplitudes $\bm \alpha^{(\varpi)}=(\alpha_m)_{m\in G_{\varpi}}\in \mathbb S^{k_\varpi-1}_+:=\left\{(\alpha_m)_{m\in G_{\varpi}}\in \mathbb{R}_{\ge 0}^{k_\varpi}:\sum_{m\in G_\varpi} \alpha_m^2=1\right\}$ and define $\bm \rho^{(\varpi)}=(\rho_m)_{m\in G_\varpi}$, $\rho_m:={\alpha_m^2}$. The Dirichlet distribution follows from the change of measure. Let $\bv^{(\varpi)} = (v_m)_{m\in G_\varpi} \sim \mathcal N(\bm 0, \mathbf I_{k_\varpi})$. Then $\bm\alpha^{(\varpi)}\stackrel d=\frac{|\bv^{(\varpi)}|}{\|\bv^{(\varpi)}\|_2}$ and $\rho_m = \alpha_m^2=\frac{v_m^2}{\sum_{j\in G_\varpi} v_j^2}$. Since $v_m^2$ are independent $\chi^2_1$ variables, $\bm\rho^{(\varpi)}$ is $\mathrm{Dirichlet}(\tfrac12,\dots,\tfrac12)$. Independence follows from independence of elements in $\bv^{(\varpi)}$. 

It remains to justify that, conditional on the fixed assignment $\varsigma$ and on $\bw\in\mathcal M_\varsigma$,
the induced law of the amplitude vector $\bm \alpha^{(\varpi)}\in\mathbb S^{k_\varpi-1}_+$ is the uniform surface measure
(on the positive orthant), and that different blocks factorize.
By Lemma~\ref{lem:simplex}, the likelihood is constant on $\mathcal M_\varsigma$ as is the (radial) prior. Hence the posterior restricted to $\mathcal M_\varsigma$ is proportional to the intrinsic volume measure on $\mathcal M_\varsigma$.
On $\mathcal M_\varsigma$,
$\boldsymbol\omega_m=\alpha_m\boldsymbol\omega_\varpi^\ast$ with $\bm \alpha^{(\varpi)}\in\mathbb S^{k_\varpi-1}_+$.
The map $\bm \alpha^{(\varpi)}\mapsto(\boldsymbol\omega_m)_{m\in G_\varpi}$ is linear and its Jacobian has constant magnitude
$(\|\boldsymbol\omega_\varpi^\ast\|_2)^{k_\varpi-1}$, independent of $\bm \alpha^{(\varpi)}$. Therefore, the induced density of
$\bm \alpha^{(\varpi)}$ is constant on $\mathbb S^{k_\varpi-1}_+$, i.e., $\bm \alpha^{(\varpi)}$ is uniform on $\mathbb S^{k_\varpi-1}_+$.
Moreover, the parametrization is a product over blocks, so the intrinsic volume measure (hence the posterior)
factorizes across blocks and independence of $\bm \alpha^{(1)},\dots,\bm \alpha^{(M^\ast)}$ follows.
\end{proof}

\subsection{Proof of \cref{cor:alpha_discrepancy}}

\begin{proof}
For a fixed block $G_\varpi$, write each duplicated neuron as
\[
\boldsymbol\omega_m = r_m \bm u_\varpi^\ast,
\qquad
r_m := \|\boldsymbol\omega_m\|
= \sqrt{\rho_m}\,\|\boldsymbol\omega_\varpi^\ast\|,
\]
where $\bm u_\varpi^\ast := \boldsymbol\omega_\varpi^\ast/\|\boldsymbol\omega_\varpi^\ast\|$
is a unit vector.
In $\R^{p+1}$, the set of vectors with fixed norm $r_m$
forms a $p$-dimensional sphere $\mathbb S^p(r_m)$.
Hence, in a small $\varepsilon$-neighborhood of the ray
$\{t\, \bm u_\varpi^\ast : t\ge 0\}$,
the orthogonal directions correspond locally to perturbations
within this $p$-dimensional sphere.
The $p$-dimensional surface measure of $\mathbb S^p(r_m)$ scales as
\[
\mathrm{Vol}\big(\mathbb S^p(r_m)\big)
\propto r_m^{\,p}.
\]
Therefore, the cross-sectional volume in the $p$ orthogonal directions
associated with the $m$-th duplicated neuron scales as $r_m^{\,p}
= (\sqrt{\rho_m}\,\|\boldsymbol\omega_\varpi^\ast\|)^{p}
\propto \rho_m^{p/2}$.
All remaining factors are independent of $\bm\rho$ and cancel after normalization.
Therefore, the total tube-volume weight attached to a configuration $\bm\rho$ is
\[
\mathrm{Vol}(\bm\rho)
\;\propto\;
\prod_{m=1}^k \rho_m^{p/2}.
\]
To obtain the density of $\bm\rho$, we multiply this volume by the density
$\prod_{m=1}^k \rho_m^{-1/2}$ corresponding to Dirichlet$(\tfrac12)$. This yields
\[
p_{\text{tube}}(\bm\rho)
\;\propto\;
\prod_{m=1}^k
\rho_m^{-1/2}\,
\rho_m^{p/2}
=
\prod_{m=1}^k
\rho_m^{\frac{p+1}{2}-1}.
\]
This is exactly the density of a symmetric Dirichlet distribution
with concentration parameters $(\tfrac{p+1}{2},\dots,\tfrac{p+1}{2})$.
Independence across blocks follows from the product structure of the
parameterization for fixed $\varsigma$.
\end{proof}

\subsection{Proof of \cref{cor:simplexprob}}

We again state \cref{cor:simplexprob} more precisely. 

\begin{corollary}\label{cor:simplexprob_again}
Under the setup of Theorem~\ref{th:simplex-formal}, fixing an assignment $\varsigma$ and under the tube-law $\mathbb{P}_n$ induced on the corresponding manifold $\mathcal{M}_\varsigma$, let $G_\varpi$ be a block of size $k_\varpi$ and let $w_m$ and $w_{m'}$ denote two scalar weights from the same block with $w_m \neq w_{m'}$. Then
\[
\mathbb{E}_{\mathbb{P}_n}(w_m)=O(k_\varpi^{-1/2}),\qquad 
\operatorname{Cov}_{\mathbb{P}_n}(w_m,w_{m'})=O(k_\varpi^{-2}).
\]
Assuming the group size $k_\varpi$ to increase proportionally with $M$, i.e., $\exists c>0: k_\varpi>cM$, one has $\mathbb{E}_{\mathbb{P}_n}(w)=O(M^{-1/2})$ and $\operatorname{Cov}_{\mathbb{P}_n}(w_m,w_{m'})=O(M^{-2})$. If $w_m$ and $w_{m'}$ belong to different blocks, $\operatorname{Cov}_{\mathbb{P}_n}(w_m,w_{m'})=0$.
\end{corollary}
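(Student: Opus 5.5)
The plan is to reduce everything to moments of the Dirichlet law for $\bm\rho^{(\varpi)}$ from \cref{cor:alpha_discrepancy}, which holds independently across blocks on $\mathcal{M}_\varsigma$ under the tube limit (\cref{ass:weaklimit}). On $\mathcal{M}_\varsigma$, every scalar weight of neuron $m\in G_\varpi$ has the form $w_m=\sqrt{\rho_m}\,c$. Here $c$ is a fixed coordinate of $\bm\omega^\ast_\varpi$, and the sign is absorbed as in \cref{lem:simplex}. So $\mathbb{E}(w_m)=c\,\mathbb{E}\sqrt{\rho_m}$ and $\operatorname{Cov}(w_m,w_{m'})=c\,c'\operatorname{Cov}(\sqrt{\rho_m},\sqrt{\rho_{m'}})$. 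Since the constants are bounded independently of $k_\varpi$ (they come from $f^\ast$), it suffices to bound moments of $\sqrt{\rho_m}$.

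\textbf{Mean.} Write $a=\tfrac{p+1}{2}$ for the Dirichlet concentration. The marginal of $\rho_m$ is $\mathrm{Beta}(a,(k-1)a)$, so
\[
\mathbb{E}\sqrt{\rho_m}=\frac{\Gamma(a+\tfrac12)\,\Gamma(ka)}{\Gamma(a)\,\Gamma(ka+\tfrac12)}.
\]
Gautschi's inequality (equivalently $\Gamma(x+\tfrac12)/\Gamma(x)\sim x^{1/2}$) then gives $\mathbb{E}\sqrt{\rho_m}=\Theta(k^{-1/2})$. A simpler route is Jensen: $\mathbb{E}\sqrt{\rho_m}\le\sqrt{\mathbb{E}\rho_m}=k^{-1/2}$, which already yields the $O(k_\varpi^{-1/2})$ bound.

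\textbf{Covariance.} For $m\neq m'$ in the same block, use the joint Dirichlet moment formula
\[
\mathbb{E}\sqrt{\rho_m\rho_{m'}}=\frac{\Gamma(a+\tfrac12)^2\,\Gamma(ka)}{\Gamma(a)^2\,\Gamma(ka+1)}=\frac{\Gamma(a+\tfrac12)^2}{\Gamma(a)^2\,ka}.
\]
Subtract $(\mathbb{E}\sqrt{\rho_m})^2=\frac{\Gamma(a+\frac12)^2}{\Gamma(a)^2}\cdot\frac{\Gamma(ka)^2}{\Gamma(ka+\frac12)^2}$, which gives
\[
\operatorname{Cov}=\frac{\Gamma(a+\tfrac12)^2}{\Gamma(a)^2}\Bigl[\frac{1}{ka}-\frac{\Gamma(ka)^2}{\Gamma(ka+\tfrac12)^2}\Bigr].
\]
The key step, and the main obstacle, is showing that the bracket is $O(k^{-2})$ rather than $O(k^{-1})$. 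Naively each term is of order $1/(ka)$, so the bound depends on a cancellation.

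To get it, I would use the standard expansion $\Gamma(x+\tfrac12)/\Gamma(x)=x^{1/2}\bigl(1-\tfrac{1}{8x}+O(x^{-2})\bigr)$ with $x=ka$. This gives $\Gamma(x)^2/\Gamma(x+\tfrac12)^2=x^{-1}\bigl(1+\tfrac{1}{4x}+O(x^{-2})\bigr)$, so the bracket equals $-\tfrac{1}{4x^2}+O(x^{-3})=O(k^{-2})$. In fact the covariance is negative, as expected from the simplex constraint. The step needing care is that this expansion holds uniformly for $x\ge a>0$. For bounded $k$ the claim is trivial, so only large $k$ matters, and there one can invoke the Wendel or Kershaw two-sided bounds to make the $O(x^{-2})$ remainder explicit.

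\textbf{Scaling in $M$ and different blocks.} Substituting $k_\varpi>cM$ turns $O(k^{-1/2})$ and $O(k^{-2})$ into $O(M^{-1/2})$ and $O(M^{-2})$, with constants depending on $c$, $p$ and $\bw^\ast$ only. For weights in different blocks, \cref{cor:alpha_discrepancy} gives independence of $\bm\rho^{(\varpi)}$ and $\bm\rho^{(\varpi')}$. Each weight is a measurable function of its own block's coefficients times a constant, so the covariance vanishes.
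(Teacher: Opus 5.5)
Your proposal is correct and follows essentially the same route as the paper: you reduce to the $\mathrm{Dirichlet}(\tfrac{p+1}{2},\dots,\tfrac{p+1}{2})$ law from \cref{cor:alpha_discrepancy}, write $\mathbb{E}\sqrt{\rho_m}$ and $\mathbb{E}\sqrt{\rho_m\rho_{m'}}$ as Gamma ratios, subtract, and use independence across blocks. The only place you go beyond the paper is the expansion $\Gamma(x+\tfrac12)/\Gamma(x)=x^{1/2}\bigl(1-\tfrac{1}{8x}+O(x^{-2})\bigr)$ (or Kershaw's bounds), which proves the cancellation behind the $O(k^{-2})$ covariance rate that the paper only asserts.
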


\begin{proof}
In the following, we assume all expectations and covariances to be taken w.r.t.\ the distribution $\mathbb{P}_n$ and drop the corresponding index. Further, to make a statement w.r.t. $M$ while working with group size $k_\varpi$, we assume that both grow proportionally, i.e., $\exists c>0: k_\varpi>cM$.

Fix a block $G_\varpi$ of size $k:=k_\varpi$ and write the duplicate amplitudes as $a_m\ge 0$ with $\sum_{m\in G_\varpi}a_m^2=1$ and $\rho_m:=a_m^2$. Following \cref{cor:alpha_discrepancy} for fixed assignment, $\bm{\rho}:=(\rho_m)_{m\in G_\varpi}\sim\mathrm{Dirichlet}(\frac{p+1}{2},\ldots,\frac{p+1}{2})$. This implies that the marginal distribution of each reallocation is $\rho_m \sim \text{Beta}(\alpha, (k-1)\alpha)$ for $\alpha=(p+1)/2$. \\

\emph{Mean}: Since for every $w_m, m\in G_\varpi$, it holds $w_m = \sqrt{\rho_m} w_\varpi^\ast$, it follows directly
$$
\mathbb{E}[w_m] = \mathbb{E}[\sqrt{\rho_m} w^\ast_\varpi] =  \mathbb{E}[\sqrt{\rho_m}]\ w^\ast_\varpi = \mathbb{E}[\sqrt{\rho_m}]\ w^\ast_\varpi = \frac{\Gamma(\alpha+\tfrac12)\Gamma(k\alpha)}{\Gamma(\alpha)\Gamma(k\alpha+\tfrac12)} w_\varpi^\ast = \Theta(k^{-1/2}) = \mathcal{O}(M^{-1/2}).
$$
assuming $\exists c>0: k>cM$.\\

\emph{Covariance within a block}: Applying the same logic for two weights $w_m, w_{m'}\in G_\varpi$ with $w_m\neq w_{m'}$, we have
\begin{align*}
\text{Cov}(w_m,w_{m'}) &= \mathbb{E}[w_m w_{m'}] - \mathbb{E}[w_m]\mathbb{E}[w_{m'}] = (w^\ast_\varpi)^2 \mathrm{Cov}(w_m,w_{m'})\\
&=(w_\varpi^\ast)^2\left(\frac{\Gamma(\alpha+\tfrac12)^2,\Gamma(k\alpha)}{\Gamma(\alpha)^2,\Gamma(k\alpha+1)} - 
\left(\frac{\Gamma(\alpha+\tfrac12)\Gamma(k\alpha)}{\Gamma(\alpha)\Gamma(k\alpha+\tfrac12)}\right)^2
\right) =\mathcal O(k^{-2}) =\mathcal O(M^{-2}).
\end{align*} 

\emph{Covariance between blocks}: Last, since $\text{Cov}(\rho_m,\rho_{m'}) = 0$ for $\varsigma(m) \neq \varsigma(m')$, it follows $\operatorname{Cov}_{\mathbb{P}_n}(w_m,w_{m'})=0$.

\end{proof}

\subsection{Proof of \cref{th:balance}}

\begin{proof}
We first repeat that we have 
\begin{equation} \label{eq:expectduagain}
\mathbb{E}_\pi[\langle \bW_l, \nabla_{\bW_l}\mathcal{L}(\bw)\rangle_F]\;=\;\mathbb{E}_\pi[\langle \bW_{l+1},\nabla_{\bW_{l+1}}\mathcal{L}(\bw)\rangle_F],    
\end{equation}
which is based on the pointwise positive homogeneity of $f(\bx) = \bW_L \phi(\bW_{L-1}\phi(\cdots(\bW_1\bx)))$ \citep{du2018algorithmic}. This provides the inner identity of (\ref{eq:expectduagain}), which we extend under standard regularity conditions to the expectation in the stationary limit of a consistent sampler \citep{pavliotis2014stochastic}.

Stein’s identity for $\pi$ with the test function $f(\bW)=\bW_l$ gives
\[
\mathbb E_\pi\!\big[\langle\bW_l,\nabla_{\bW_l}\mathcal{L}_{\bm{\tau}}(\bw)\rangle_F\big]\;=\;d_l.
\]
Using $\nabla_{\bW_l}\mathcal{L}_{\bm{\tau}}(\bw)=\nabla_{\bW_l}\mathcal{L}(\bw) + \tau_l^{-2}\bW_l$,
\[
\mathbb E_\pi\!\big[\langle\bW_l,\nabla_{\bW_l}\mathcal{L}(\bw)\rangle_F\big]
\;+\;
\frac{1}{\tau_l^2}\,\mathbb E_\pi\!\big[\|\bW_l\|_F^2\big]
\;=\;d_l.
\]
Apply the same identity with $l{+}1$ and subtract. By the homogeneity identity,
\[
\frac{1}{\tau_l^2}\,\mathbb E_\pi\!\big[\|\bW_l\|_F^2\big]
-\frac{1}{\tau_{l+1}^2}\,\mathbb E_\pi\!\big[\|\bW_{l+1}\|_F^2\big]
\;=\;
d_l-d_{l+1}.
\]
\end{proof}

\begin{corollary}\label{cor:balanced}
Under the assumptions of the Layer-balance theorem, define
\[
B_l \;:=\; \frac{1}{\tau_l^2}\,\mathbb E_\pi\!\big[\|\bW_l\|_F^2\big] \;-\; d_l, 
\qquad h=1,\dots,L.
\]
Then $B_l$ is constant across layers, i.e.,
\[
B_1 \;=\; B_2 \;=\; \cdots \;=\; B_L.
\]
In particular, if there exists an index $h_0$ such that $\mathbb E_\pi[\|\bW_{h_0}\|_F^2]=\tau_{h_0}^2 d_{h_0}$, then
\[
\mathbb E_\pi\!\big[\|\bW_l\|_F^2\big] \;=\; \tau_l^2 d_l
\quad\text{for all } h=1,\dots,L.
\]
\end{corollary}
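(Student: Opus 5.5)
The corollary follows from \cref{th:balance} by a telescoping argument; no new probabilistic input is needed. We use the theorem in rearranged form for each adjacent pair $(l,l+1)$, $l\in[L-1]$:
\[
\tau_l^{-2}\,\mathbb E_\pi\!\big[\|\bW_l\|_F^2\big]-d_l \;=\; \tau_{l+1}^{-2}\,\mathbb E_\pi\!\big[\|\bW_{l+1}\|_F^2\big]-d_{l+1},
\]
which reads $B_l=B_{l+1}$ with $B_l$ as in the statement. Chaining these $L-1$ equalities by induction on $l$ gives $B_1=B_2=\cdots=B_L$. This is the first claim. (The index written as $h$ in the statement is the layer index $l$.)

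For the second claim, suppose some $h_0$ satisfies $\mathbb E_\pi[\|\bW_{h_0}\|_F^2]=\tau_{h_0}^2 d_{h_0}$. Then $B_{h_0}=\tau_{h_0}^{-2}\tau_{h_0}^2 d_{h_0}-d_{h_0}=0$. Since all $B_l$ coincide, $B_l=0$ for every $l$. Solving $\tau_l^{-2}\mathbb E_\pi[\|\bW_l\|_F^2]-d_l=0$ gives $\mathbb E_\pi[\|\bW_l\|_F^2]=\tau_l^2 d_l$ for all $l\in[L]$.

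There is no real obstacle here. The only care needed concerns hypotheses: all expectations $\mathbb E_\pi[\|\bW_l\|_F^2]$ must be finite, and the Stein identity and stationarity conditions behind \cref{th:balance} must hold for every layer simultaneously. These are exactly the assumptions inherited from the theorem, so we state them once at the outset and note that the result applies to every adjacent pair.
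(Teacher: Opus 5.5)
Your proposal is correct and is essentially the paper's own argument: rearrange Theorem~\ref{th:balance} into $B_l=B_{l+1}$ for each adjacent pair, conclude $B_1=\cdots=B_L$ by transitivity, and then use $B_{h_0}=0$ to force $B_l=0$, i.e., $\mathbb E_\pi[\|\bW_l\|_F^2]=\tau_l^2 d_l$, for every layer. You are right to read the stray index $h$ as the layer index $l$, and your remark that the finiteness and Stein/stationarity conditions are inherited from the theorem is appropriate.
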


\subsection{Proof of \cref{cor:balanced}}

\begin{proof}
By the Layer-balance theorem, for each adjacent pair $(l,l{+}1)$,
\[
\frac{1}{\tau_l^2}\,\mathbb E_\pi\!\big[\|\bW_l\|_F^2\big]
-\frac{1}{\tau_{l+1}^2}\,\mathbb E_\pi\!\big[\|\bW_{l+1}\|_F^2\big]
\;=\;
d_l-d_{l+1}.
\]
Rearranging gives
\(
\big(\tau_l^{-2}\mathbb E_\pi\|\bW_l\|_F^2-d_l\big)
=
\big(\tau_{l+1}^{-2}\mathbb E_\pi\|\bW_{l+1}\|_F^2-d_{l+1}\big),
\)
hence $B_l=B_{l+1}$ for all $l$. Transitivity yields $B_1=\cdots=B_L$. If for some $l_0$ we have $\mathbb E_\pi\|\bW_{l_0}\|_F^2=\tau_{l_0}^2 d_{l_0}$, then $B_{l_0}=0$ and thus $B_l=0$ for all $l$, which implies $\mathbb E_\pi\|\bW_l\|_F^2=\tau_l^2 d_l$ for all $l$.
\end{proof}

\begin{proposition}
    Assume a homogeneous network $f(\bw)$ that admits a rescaling symmetry between layer weights of layer $l$ and $l+1$ and a layer-specific penalty $\mathcal{R}(\bw) = \ldots + \lambda_l ||\bW_l||_F^2 + \lambda_{l+1} ||\bW_{l+1}||_F^2 + \ldots$. Along any rescaling that preserves the represented function, the penalty is minimized precisely when
$$
\frac{\|\bW_l\|_F}{\|\bW_{l+1}\|_F}\;=\;\sqrt{\frac{\lambda_{l+1}}{\lambda_l}}.
$$
\end{proposition}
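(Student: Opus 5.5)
The proof reduces the problem to a one-dimensional minimization along the rescaling orbit. Fix a parameter $\bw$ and consider the function-preserving layerwise rescaling $\bW_l \mapsto c\,\bW_l$, $\bW_{l+1}\mapsto c^{-1}\bW_{l+1}$ for $c>0$. For a positively homogeneous (ReLU) network this preserves $f$, since $\phi(c\,t)=c\,\phi(t)$ for $c\ge 0$, exactly as in the proof of \cref{lem:simplex}. All penalty terms for layers other than $l,l+1$ are unchanged along this orbit. It therefore suffices to minimize
\[
g(c) := \lambda_l c^2 \|\bW_l\|_F^2 + \lambda_{l+1} c^{-2}\|\bW_{l+1}\|_F^2,\qquad c>0,
\]
assuming $\bW_l,\bW_{l+1}\neq \bm 0$. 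Otherwise the orbit is degenerate and the ratio is undefined or trivially $0$.

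Next I would apply the AM-GM inequality, as in the $M=1$ balancedness example of \cref{sec:background}. Writing $a=\lambda_l\|\bW_l\|_F^2$ and $b=\lambda_{l+1}\|\bW_{l+1}\|_F^2$, we get
\[
g(c)=a c^2 + b c^{-2} \ge 2\sqrt{ab}.
\]
Equality holds if and only if $a c^2 = b c^{-2}$, i.e.\ $c^4=b/a$. Equivalently, one can set $g'(c)=2ac-2bc^{-3}=0$ and observe that $g$ is strictly convex in $c^2$, which gives a unique minimizer $c^\star=(b/a)^{1/4}$. At the minimizer, the rescaled norms $\|c^\star\bW_l\|_F$ and $\|\bW_{l+1}/c^\star\|_F$ satisfy
\[
\lambda_l\|c^\star\bW_l\|_F^2=\lambda_{l+1}\|\bW_{l+1}/c^\star\|_F^2 .
\]
Rearranging gives $\|\bW_l\|_F/\|\bW_{l+1}\|_F=\sqrt{\lambda_{l+1}/\lambda_l}$ for the rescaled weights. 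Conversely, any point on the orbit with this ratio satisfies the equality condition of AM-GM and is therefore the minimizer. This establishes the "precisely when" direction.

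The main subtlety is what "any rescaling that preserves the represented function" covers. The general symmetry is neuronwise, with a separate $c_m>0$ for each hidden unit $m$. A layer-uniform scalar gives exactly the stated Frobenius-norm ratio. With neuronwise scalings, the same AM-GM argument applied per neuron yields the finer condition
\[
\lambda_l\|(\bW_l)_{[m,:]}\|_2^2=\lambda_{l+1}\|(\bW_{l+1})_{[:,m]}\|_2^2 \quad\text{for all } m.
\]
Summing over $m$ recovers the Frobenius identity. So I would first prove the scalar-orbit statement as the main claim, then remark that per-neuron optimality implies it by summation. That makes the statement hold along either class of rescalings: the per-neuron minimizer also satisfies the displayed ratio.
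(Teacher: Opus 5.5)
Your proof is correct and follows the paper's route: both restrict to the one-parameter orbit $(c\,\bW_l, c^{-1}\bW_{l+1})$ and minimize $\lambda_l c^2\|\bW_l\|_F^2+\lambda_{l+1}c^{-2}\|\bW_{l+1}\|_F^2$, the paper via the derivative and strict convexity in $c^2$, you via AM--GM, which also gives the converse (equality case) directly. One caution on your closing remark: for neuronwise rescalings only the ``only if'' direction holds, because the Frobenius ratio can be satisfied without per-neuron balance (e.g.\ $\lambda_l=\lambda_{l+1}$ and two hidden units with incoming/outgoing norms $(2,1)$ and $(1,2)$, where rebalancing each unit lowers the penalty from $10$ to $8$), so the ``precisely when'' claim does not extend to that larger class.
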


\begin{proof}
    By positive 1-homogeneity, for any $a>0$ the transformation

$$
\bW_l' = a\,\bW_l,\qquad \bW_{l+1}' = a^{-1}\bW_{l+1},
$$

leaves $f(\bw)$ unchanged. Hence $\mathcal L(\bw)$ is invariant along this 1-dimensional rescaling orbit, and minimizing $\mathcal L_\lambda$ on the orbit reduces to minimizing

$$
\mathcal{R}_{l,l+1}(a)\;=\;\lambda_l \| \bW_l' \|_F^2 + \lambda_{l+1}\| \bW_{l+1}' \|_F^2
\;=\;\lambda_l a^2 x^2 + \lambda_{l+1} a^{-2} y^2,
$$

where $x=\|\bW_l\|_F$, $y=\|\bW_{l+1}\|_F$. The function $\mathcal{R}_{l,l+1}(a)$ is strictly convex in $a^2$ and coercive, so it has a unique minimizer. Differentiating gives

$$
\mathcal{R}_{l,l+1}'(a)=2\lambda_l a x^2 - 2\lambda_{l+1} a^{-3} y^2=0
\;\;\Longleftrightarrow\;\;
\lambda_l a^4 x^2=\lambda_{l+1} y^2.
$$

At the minimizer,

$$
\frac{\|\bW_l'\|_F}{\|\bW_{l+1}'\|_F}
=\frac{a x}{a^{-1} y}
=a^2 \frac{x}{y}
=\sqrt{\frac{\lambda_{l+1}}{\lambda_l}}.
$$
\end{proof}

\begin{remark}
    The following results are based on the balancedness property induced by the Gaussian prior. For simplicity, we assume $\tau_l \equiv \tau \,\forall l\in[L]$, but note that the following results, \cref{th:simplex} and \cref{cor:simplexprob} could be analogously derived using layer-specific priors.
\end{remark}

\begin{corollary}\label{cor:inout}
Under the assumptions of \cref{th:balance}, let $\ba_{l,j}$ denote the incoming weight vector of hidden unit $j$ in layer $l$ 
and $\bv_{l,j}$ its outgoing weight vector. Following the same logic as in \cref{th:balance} by transferring  \citet{du2018algorithmic} to our sampling setup, we have
$$\mathbb E_\pi\!\big[\ba_{l,j}^\top\nabla_{\ba_{l,j}}\mathcal{L}(\bw)\big]\;=\;\mathbb E_\pi\!\big[\bv_{l,j}^\top\nabla_{\bv_{l,j}}\mathcal{L}(\bw)\big].$$
Then for every hidden unit $j$ in layer $l$,
\[
\frac{1}{\tau_l^2}\,\mathbb E_\pi\!\big[\|\ba_{l,j}\|_2^2\big]
\;-\;
\frac{1}{\tau_{l+1}^2}\,\mathbb E_\pi\!\big[\|\bv_{l,j}\|_2^2\big]
\;=\;
d^{\mathrm{in}}_{l,j}\;-\;d^{\mathrm{out}}_{l,j},
\]
where $d^{\mathrm{in}}_{l,j}$ and $d^{\mathrm{out}}_{l,j}$ are the input and output dimensions of neuron $(l,j)$, respectively.
\end{corollary}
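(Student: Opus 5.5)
The plan mirrors the proof of \cref{th:balance}, replacing the layerwise blocks $\bW_l,\bW_{l+1}$ with the unit-level blocks $\ba_{l,j}$ (the $j$-th row of $\bW_l$, of dimension $d^{\mathrm{in}}_{l,j}$) and $\bv_{l,j}$ (the $j$-th column of $\bW_{l+1}$, of dimension $d^{\mathrm{out}}_{l,j}$).

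The first step is the pointwise identity. For fixed $\bw$ and $c>0$, set $\ba_{l,j}\mapsto c\,\ba_{l,j}$ and $\bv_{l,j}\mapsto c^{-1}\bv_{l,j}$. By positive 1-homogeneity of ReLU, unit $j$'s post-activation scales by $c$ and the outgoing weights undo this, so $f$, and hence $\mathcal{L}$, is invariant for all $c>0$. Differentiating $\mathcal{L}(c\ba_{l,j},c^{-1}\bv_{l,j},\ldots)$ at $c=1$ gives
\[
\ba_{l,j}^\top\nabla_{\ba_{l,j}}\mathcal{L}(\bw)-\bv_{l,j}^\top\nabla_{\bv_{l,j}}\mathcal{L}(\bw)=0
\]
wherever $\mathcal{L}$ is differentiable. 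The set where it is not differentiable (some pre-activation exactly zero on a training point) is a finite union of hypersurfaces, so it has Lebesgue, hence $\pi$-, measure zero. Taking $\pi$-expectations then yields the stated equality of expectations, assuming integrability as in \cref{th:balance}.

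Next, apply Stein's identity blockwise. For the vector field $\bw\mapsto(0,\ldots,\ba_{l,j},\ldots,0)$, integration by parts against $\pi\propto\exp(-\mathcal{L}_{\bm\tau})$ gives
\[
\mathbb E_\pi\big[\ba_{l,j}^\top\nabla_{\ba_{l,j}}\mathcal{L}_{\bm\tau}(\bw)\big]=\mathbb E_\pi[\operatorname{div}_{\ba_{l,j}}\ba_{l,j}]=d^{\mathrm{in}}_{l,j}.
\]
This step needs the boundary terms to vanish, which the Gaussian tails of the prior ensure, plus $\mathcal{L}$ being locally Lipschitz so that integration by parts holds in the Sobolev sense despite ReLU kinks. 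Since $\nabla_{\ba_{l,j}}\mathcal{L}_{\bm\tau}=\nabla_{\ba_{l,j}}\mathcal{L}+\tau_l^{-2}\ba_{l,j}$, this becomes
\[
\mathbb E_\pi[\ba_{l,j}^\top\nabla_{\ba_{l,j}}\mathcal{L}]+\tau_l^{-2}\mathbb E_\pi\|\ba_{l,j}\|_2^2=d^{\mathrm{in}}_{l,j}.
\]
The same computation for $\bv_{l,j}$, whose weights belong to layer $l+1$ and so carry prior variance $\tau_{l+1}^2$, gives
\[
\mathbb E_\pi[\bv_{l,j}^\top\nabla_{\bv_{l,j}}\mathcal{L}]+\tau_{l+1}^{-2}\mathbb E_\pi\|\bv_{l,j}\|_2^2=d^{\mathrm{out}}_{l,j}.
\]

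Finally, subtract the two equations. The likelihood terms cancel by the expected homogeneity identity, leaving exactly the claimed relation.

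The main obstacle is making the Stein step rigorous under ReLU non-smoothness. I would handle it by noting that $\mathcal{L}_{\bm\tau}$ is piecewise smooth and globally Lipschitz on compacts, so its weak gradient coincides with the a.e. gradient, which justifies the divergence identity. The same conditions are already implicitly invoked for \cref{th:balance}. Biases are excluded, so homogeneity holds exactly.
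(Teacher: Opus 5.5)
Your proposal is correct and follows the paper's proof. Apply Stein's identity blockwise to $\ba_{l,j}$ and $\bv_{l,j}$, with prior variances $\tau_l^2$ and $\tau_{l+1}^2$, then subtract and let the likelihood terms cancel via the neuron-wise homogeneity identity. You derive that identity by differentiating along the rescaling orbit $c\mapsto(c\,\ba_{l,j},c^{-1}\bv_{l,j})$, whereas the paper takes it from \citet{du2018algorithmic}.

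One small imprecision: the set where some pre-activation vanishes on a training point can have positive measure, for example when all units of the preceding layer are dead. The loss is smooth on such a set, so a.e.\ differentiability is better justified by Rademacher's theorem for the locally Lipschitz $\mathcal{L}$; this does not affect your argument.
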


\begin{remark}
Summing the identities of Corollary~\ref{cor:inout} over all neurons $j$ in a given layer $h$ recovers the layer-level balance of Theorem~\ref{th:balance}.
\end{remark}

\subsection{Proof of \cref{cor:inout}}

\begin{proof}
Apply Stein’s identity \citep[see, e.g.,][]{liu2016stein} to the test function $f(\bW)=\ba_{l,j}$ to obtain
\[
\mathbb E_\pi\!\big[\ba_{l,j}^\top\nabla_{\ba_{l,j}}\mathcal{L}(\bw)\big]
\;+\;\frac{1}{\tau_l^2}\,\mathbb E_\pi\!\big[\|\ba_{l,j}\|_2^2\big]
\;=\; d^{\mathrm{in}}_{l,j}.
\]
Similarly, for $f(\bW)=\bv_{l,j}$,
\[
\mathbb E_\pi\!\big[\bv_{l,j}^\top\nabla_{\bv_{l,j}}\mathcal{L}(\bw)\big]
\;+\;\frac{1}{\tau_{l+1}^2}\,\mathbb E_\pi\!\big[\|\bv_{l,j}\|_2^2\big]
\;=\; d^{\mathrm{out}}_{l,j}.
\]
Subtracting these two equalities and invoking the neuron-wise backprop homogeneity identity cancels the likelihood terms, which yields the claim.
\end{proof}

\subsection{Assumptions \& Details for \cref{sec:deep}: Overparametrization and Prior Conformity}

The approximation $\bH^\star \approx \bJ^\top \bm\Upsilon \bJ + 2\lambda \bI$ is the Gauss-Newton approximation of the true Hessian of the negative log-posterior. This approximation is valid under the conditions that $\bw^\text{ref}$ represents a good model fit, i.e., the gradients of the loss w.r.t.\ the outputs are close to zero, as well as the network output function being nearly linear w.r.t.\ the weights $\bw$ around $\bw^\text{ref}$ \citep{martens2020new}.

The statement $\ker(\bJ)\cap \mathcal{W} \;\neq\;\{\bm{0}\}$ follows directly from the definition of an overparametrized layer in \cref{app:defs}.

We now show that $(\bH^\star)^{-1}\big|_{\ker(\bJ)} \;=\; (2\lambda)^{-1} \bI = \tau^2 \bI$, so that the matrix $(\bH^\star)^{-1}$, when restricted to the subspace $\ker(\bJ)$, acts as a scalar multiplication by $(2\lambda)^{-1}.$

Let $\bm{v} \in \ker(\bJ)$ be an arbitrary non-zero vector. By definition, this means $\bJ \bm{v} = \bm{0}$. Then applying the Hessian approximation $\bH^\star$ yields:

\[
\bH^\star \bm{v} = (\bJ^\top \bm\Upsilon \bJ + 2\lambda \bI) \bm{v} = \bJ^\top \bm\Upsilon \bJ \bm{v} + 2 \lambda(\bI \bm{v}) = 2 \lambda \bm{v}.
\]

We obtain $(\bH^\star)^{-1} \bm{v} = (2 \lambda)^{-1} \bm{v}$ and it follows that $(\bH^\star)^{-1}\big|_{\ker(\bJ)} \;=\; (2\lambda)^{-1} \bI = \tau^2 \bI$.

\section{EXPERIMENTAL DETAILS AND FURTHER ANALYSES} \label{app:additional_results}

\subsection{Experimental Setup}\label{app:exp_setup}

\paragraph{Software}
For sampling-based inference results, we implement our analysis in Python and mainly rely on the \texttt{jax} \citep{jax2018github} and \texttt{BlackJAX} \citep{cabezas2024blackjax} libraries. For some comparisons, we used the \texttt{posteriors} package \citep{duffield2025scalable}.
Our code is available at \url{https://github.com/EmanuelSommer/bnn_interplay_priors_overparam}.

\paragraph{Computing Environment} The experiments were conducted on two NVIDIA RTX A6000 GPUs and an AMD Ryzen™ Threadripper™ PRO 5000WX/3000WX CPU with 64 cores. For most experiments, 10 chains were sampled in parallel on the CPU, enabling efficient parallelization and allowing multiple experiments to run concurrently. For larger-scale experiments involving thousands of chains, 50 chains were sampled in parallel to maximize resource utilization. For larger CNNs, we used parallel training on GPUs.

\paragraph{Datasets} \cref{tab:dataoverview} summarizes the benchmark datasets utilized in our experiments. For all tabular benchmarks, if not specified otherwise, we use a 70\% train, 10\% validation, and 20\% test split as well as a fully connected model architecture with $3$ hidden layers, 16 neurons per layer. For all image classification benchmarks, we use the suggested train/test split and CNNs of varying size.

\begin{table}[h]
\begin{small}
\begin{center}
\caption{Benchmark datasets overview.} \label{tab:dataoverview}
\vskip 0.1in
\resizebox{0.5\columnwidth}{!}{%
\begin{tabular}{lrrl}
Dataset  & Size & Features & Source \\ \hline 
Airfoil & 1503 & 5 &  \citet{Dua.2019}  \\
Bikesharing & 17379 & 13 & \citet{misc_bike_sharing_dataset_275}  \\
Concrete & 1030 & 8 &  \citet{Yeh.1998} \\
Energy & 768 & 8 &  \citet{Tsanas.2012} \\
Ionosphere & 351 & 34 & \citet{sigillito1989ionosphere} \\
F(ashion)-MNIST & 60000 & 28x28 & \citet{xiao2017/online} \\
CIFAR-10 & 60000 & 28x28 & \citet{krizhevsky2009learning} \\
\hline
\end{tabular}
}
\end{center}
\end{small}
\end{table}

\paragraph{Performance Evaluation}

To quantify the quality of the posterior predictive approximation and thus the UQ capabilities of the models we use the log posterior predictive density (LPPD) \citep{gelman2014a, wiese2023towards, MILE} over a test set $\mathcal{D}_{\text{test}}$, defined as
\begin{align} \label{eq:lppd}
    \text{LPPD} = \frac{1}{n_{\text{test}}} \sum_{(\bm{y}^\ast, \bm{x}^\ast) \in \mathcal{D}_{\text{test}}} \log \left(\frac{1}{K \cdot S} \sum_{k=1}^K \sum_{s=1}^{S} p \left(\bm{y}^\ast | \bm{\theta}^{(k,s)}(\bm{x}^\ast)\right)\right).
\end{align}
Here, $K$ denotes the number of chains, $S$ the number of samples per chain, and $\bm{\theta}^{(k,s)}$ the parameters from the $s$-th sample of the $k$-th chain. Intuitively, the LPPD quantifies how well the predictive distribution aligns with the observed labels, with higher values indicating higher density coverage, i.e., improved UQ performance.

In addition, we employ the root mean squared error (RMSE) for regression and the accuracy for classification tasks to check for the accuracy of point predictions.

\subsection{Experimental Details for \cref{fig:kerJ}}

Below, let data be $(x_i,y_i)_{i=1}^n$, noise $\varepsilon_i\sim\mathcal N(0,\sigma^2)$ with known $\sigma^2$, and
$$
f(x)=a\,\mathrm{ReLU}(b x)+c\,\mathrm{ReLU}(d x),\qquad a,b,c,d\stackrel{\text{i.i.d.}}{\sim}\mathcal N(0,1).
$$
Define $\phi_{1,i}(b)=\mathrm{ReLU}(b x_i)$, $\phi_{2,i}(d)=\mathrm{ReLU}(d x_i)$, the design
$$
\Phi(b,d)=
\begin{bmatrix}
\phi_{1,1}(b) & \phi_{2,1}(d)\\
\vdots & \vdots\\
\phi_{1,n}(b) & \phi_{2,n}(d)
\end{bmatrix}\in\mathbb R^{n\times 2},\quad
\theta=\begin{bmatrix}a\\ c\end{bmatrix},\quad y=(y_1,\dots,y_n)^\top.
$$
The likelihood and prior are given by
$$
p(y\mid \theta,b,d)\;=\;\mathcal N\!\big(y;\,\Phi(b,d)\theta,\;\sigma^2 I_n\big),\qquad
p(\theta,b,d)\;=\;\mathcal N(\theta;0,I_2)\,\mathcal N(b;0,1)\,\mathcal N(d;0,1).
$$
Then, the kernel of the joint posterior is
$$
p(\theta,b,d\mid y)\;\propto\;
\exp\!\Big(-\tfrac{1}{2\sigma^2}\|y-\Phi(b,d)\theta\|^2-\tfrac12\|\theta\|^2-\tfrac12 b^2-\tfrac12 d^2\Big).
$$
Because the model is linear in $(a,c)$ given $(b,d)$, we get conjugacy for $\theta\mid b,d,y$ and a conditional posterior for $a,c$ given $b,d$ given by
$$
\theta\mid b,d,y\;\sim\;\mathcal N\!\big(\mu_{ac}(b,d),\,\Sigma_{ac}(b,d)\big),
$$
with
$$
\Sigma_{ac}(b,d)\;=\;\Big(I_2+\tfrac{1}{\sigma^2}\Phi(b,d)^\top \Phi(b,d)\Big)^{-1},\qquad
\mu_{ac}(b,d)\;=\;\Sigma_{ac}(b,d)\,\tfrac{1}{\sigma^2}\Phi(b,d)^\top y.
$$
Therefore, the marginal likelihood for $b,d$ and their posterior (by integrating out $(a,c)$) is
$$
p(y\mid b,d)\;=\;\mathcal N\!\big(y;\,0,\;\sigma^2 I_n+\Phi(b,d)\Phi(b,d)^\top\big),
$$
hence
$$
p(b,d\mid y)\;\propto\; \mathcal N\!\big(y;\,0,\;\sigma^2 I_n+\Phi(b,d)\Phi(b,d)^\top\big)\,\mathcal N(b;0,1)\,\mathcal N(d;0,1).
$$
This pair has no closed form. It can be explored by MAP optimization of $\log p(b,d\mid y)$ or by MCMC. Further, the full posterior factorization is given by
$$
p(a,b,c,d\mid y)\;=\;p(a,c\mid b,d,y)\;p(b,d\mid y),
$$
with $p(a,c\mid b,d,y)$ Gaussian as above and $p(b,d\mid y)$ given up to a normalizing constant.

\subsection{The Role of the Bias}

Extending the exposition about the role of the bias in \cref{sec:deep}, we provide an illustrative marginal bias distribution across layers of a ReLU network in \cref{fig:marginal_bias}.

\begin{figure}[!htb]
    \centering
    \includegraphics[width=\linewidth]{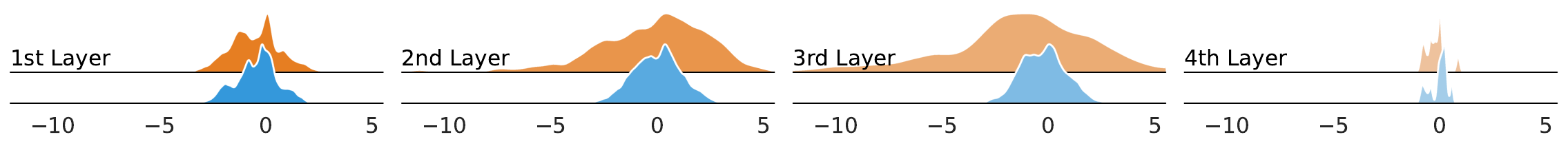}
    \caption{Empirical marginal distribution of the bias of a four-hidden-layer MLP with a \textcolor{snsblue}{normal prior on the bias} as well as a \textcolor{snsorange}{uniform prior on the bias}. }
    \label{fig:marginal_bias}
\end{figure}

In frequentist training of ReLU neural networks, it is common practice not to regularize the bias terms, as they primarily shift the activation thresholds and are not prone to uncontrolled growth like weights. In contrast, BNNs explicitly place priors over all parameters, including biases, which induces a form of regularization. \cref{fig:marginal_bias} illustrates this difference: when a uniform prior is imposed, the distributions appear wider, reflecting greater posterior uncertainty, whereas introducing isotropic Gaussian priors yields narrower marginals. Importantly, both cases exhibit similar shapes in their high-density regions near the mode, indicating that regularization primarily reduces tail spread without altering the local geometry of the posterior around the most probable values. 

\subsection{Experimental Details of \cref{fig:function_space_sensitivity}}

For each posterior sample, we identify clusters of neurons with similar activation patterns. We compute the cosine similarity matrix $\tilde{\bm{\Xi}}^\top \tilde{\bm{\Xi}}$ of the standardized feature columns (excluding dead or constant units) and form clusters as connected components of the thresholded similarity graph, where the threshold is calibrated via a Bonferroni correction under the null hypothesis of independent activation columns. For each cluster with indices $j$, we compute
$$
  S=\min_{\substack{\bm{v}^\top\mathbf{1}=0\\ \|\bm{v}\|_2=1}}\|\bm{\Xi}_{:,j} \bm{v}\|_2.
$$
Here $\bm{v}$ represents a zero-sum reweighting of outgoing weights within the cluster. We use a zero-sum reweighting to redistribute weight among neurons within a given cluster. Small $S$ indicates that such reweightings barely affect the network output, which is a likelihood-flat direction. We use unstandardized features $\bm{\Xi}$ (not $\tilde{\bm{\Xi}}$) for this computation to ensure $S$ reflects the magnitude of the actual output change under reweighting, not just the collinearity of activations.

\subsection{Exploring the Limits of BDEs}\label{app:limitsbde}

We extend the analysis of \citet{sommer2024connecting}, who only consider 12 to 10k chains of 1k samples each. We also use a more than twice as large fully-connected neural network (4 hidden layers of 16 neurons each) to perform distributional regression. For this, we use the recently proposed MILE approach \citep{MILE} and configure it exactly as suggested by the authors. Due to the
immense computational load of sampling this amount of chains and also evaluating the posterior samples (compressing the samples roughly amounts to 100GB for a single experiment), we carefully select four benchmark datasets, namely, \texttt{airfoil} and \texttt{bikesharing} for distributional regression, \texttt{ionosphere} for tabular classification, and \texttt{Fashion-MNIST} for image classification. For the latter, we do not use MILE but rather employ scale-adapted SGHMC \citep{adasghmc} for computational efficiency. In our analysis, we focus on two major aspects. First, we analyze how the performance of the model develops when adding chains to the Bayesian Model Average (BMA). Second, in the spirit of \citet{sommer2024connecting} we take a closer look at bivariate margins of the empirical posterior derived from SAI.

\paragraph{Performance Evolution}
The cumulative performance, which we---focusing on UQ quality---measure with the LPPD, of adding chains to the BMA obviously depends on the order in which chains are added. Thus, we consider 5 different orderings and report means and standard deviations of the cumulative LPPD over chains in \cref{fig:cum_wasser} and \ref{fig:fireball_cum_lppd_fmnist}. The results suggest that with even a rather small number of chains, the performance saturates quite fast, but slowly increases further until exhibiting a very strong performance for 10k chains. Parallelizing 10-20 chains on modern hardware is very easy and comes with no considerable cost overhead over single-chain sampling. This also has very positive implications on memory requirements and inference time, rendering the approach practically feasible.

\begin{figure}
    \centering
    \includegraphics[width=0.35\linewidth]{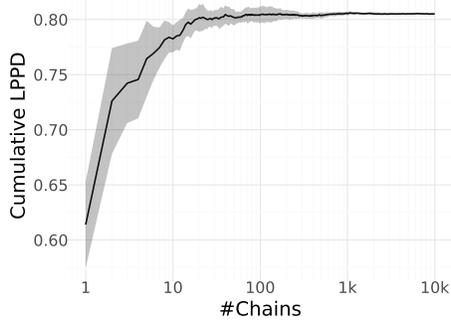}
    \caption{Cumulative LPPD over the number of chains (standard deviation across 5 random chain orderings) on the \texttt{airfoil} dataset.}
    \label{fig:cum_wasser}
\end{figure}

\begin{figure}[h]
    \centering
    \begin{subfigure}[b]{0.45\textwidth}
        \centering
        \includegraphics[width=0.8\textwidth]{imgs/cumulative_lppd_fmnist.pdf}
        \caption{\texttt{CNNv1} on Fashion-MNIST with SG-MCMC.}
        \label{fig:subfig1_app}
    \end{subfigure}
    \hfill
    \begin{subfigure}[b]{0.45\textwidth}
        \centering
        \includegraphics[width=0.8\textwidth]{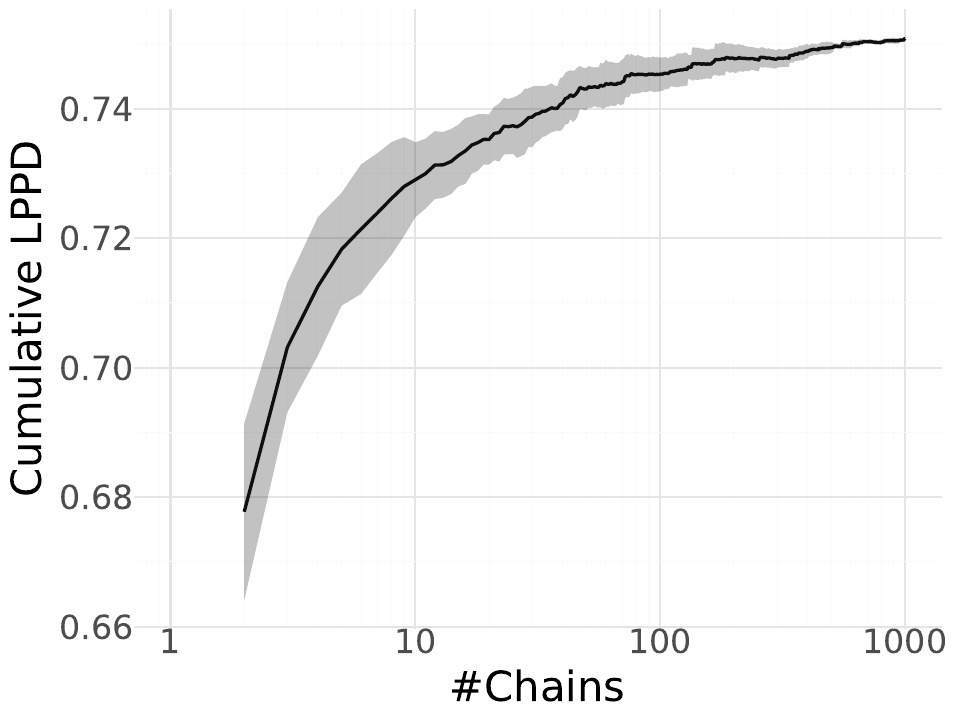}
        \caption{FCN on \texttt{bikesharing} with MILE.}
        \label{fig:subfig2_app}
    \end{subfigure}
    \caption{Cumulative LPPD over the number of chains (standard deviation across 5 random chain orderings) for the same \texttt{CNNv1} setting on Fashion-MNIST as in \cref{fig:fmnistfireballgrid} (left) and the same fully-connected neural network (FCN) as the one in \cref{fig:cum_wasser} but fitted on the larger \texttt{bikesharing} dataset. As a reference, the DE with 1k members only achieves an LPPD of -0.3627 for \texttt{CNNv1} and 0.4935 for the FCN. Also, the predictive accuracy of the BDE (1k) exceeds that of the DE (1k) in both settings (left: accuracy $0.8715 > 0.8706$, right: RMSE $0.2258 < 0.2508$).}
    \label{fig:fireball_cum_lppd_fmnist}
    \vspace{-0.2in}
\end{figure}

\begin{figure}[h]
    \centering
    \setlength{\tabcolsep}{0pt}
    \renewcommand{\arraystretch}{0}
    \begin{tabular}{c@{}c@{}c@{}c@{}c@{}c}
        \rotatebox{90}{\;\;\;\quad\quad   \footnotesize Kernel} &
        \includegraphics[width=0.18\linewidth]{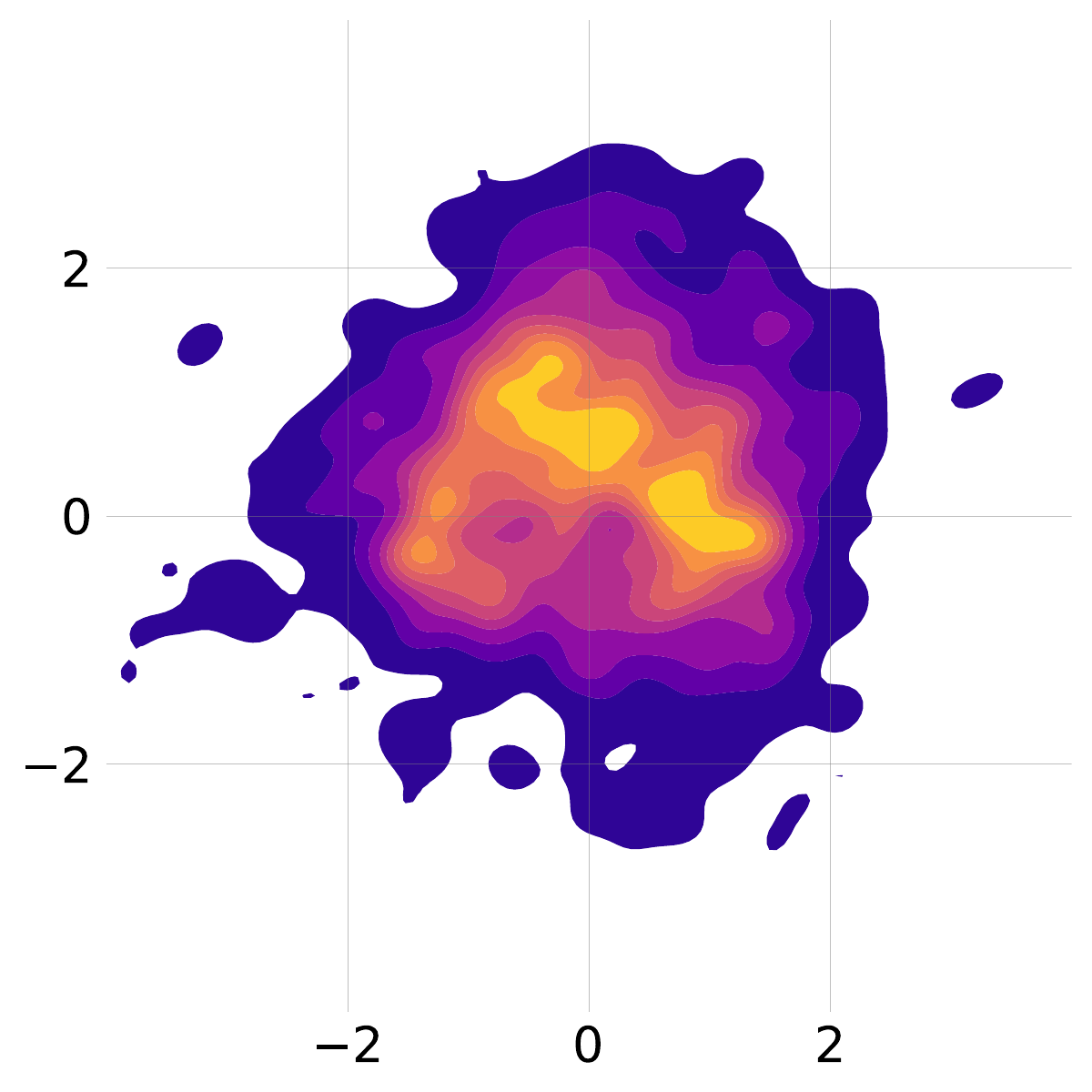} &
        \includegraphics[width=0.18\linewidth]{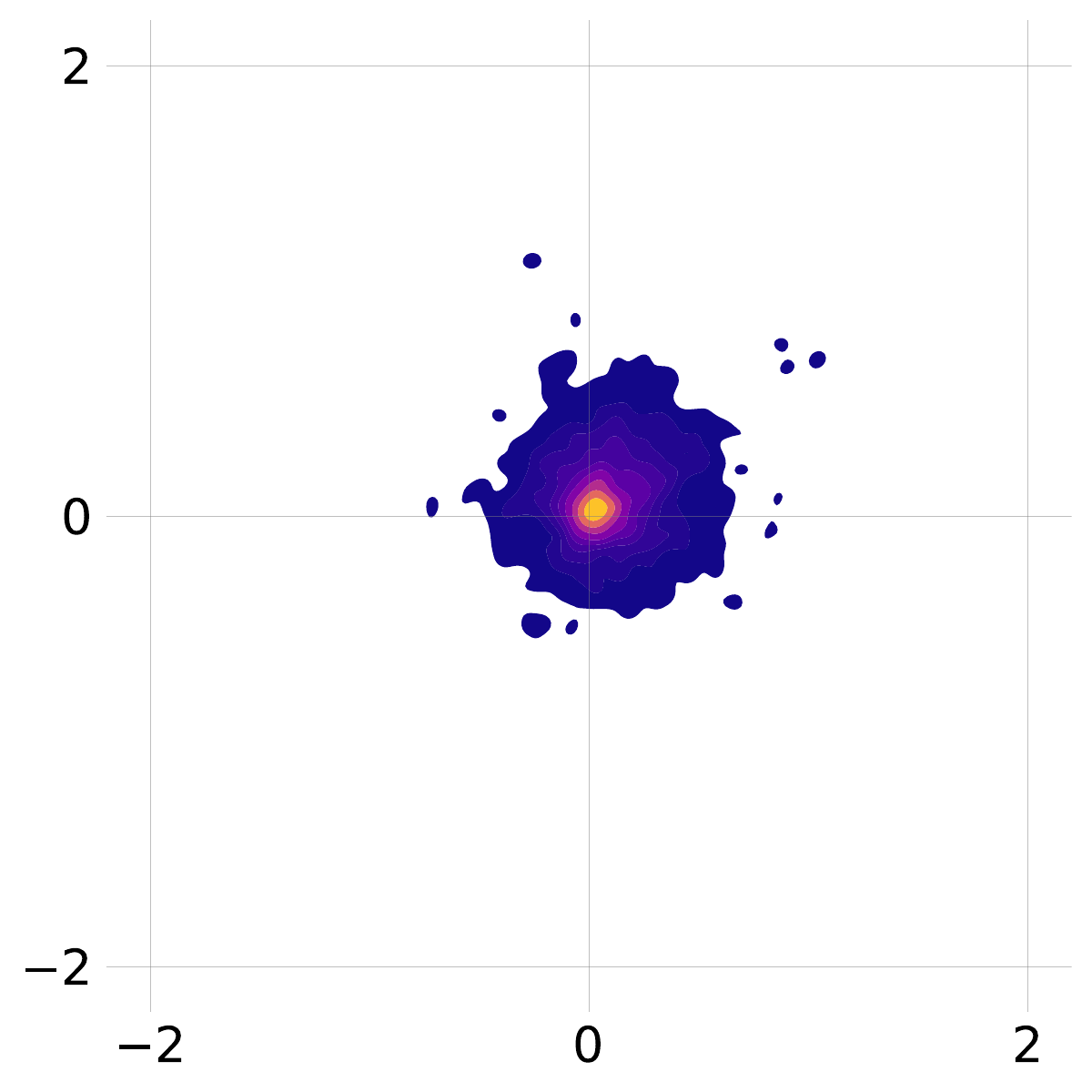} &
        \includegraphics[width=0.18\linewidth]{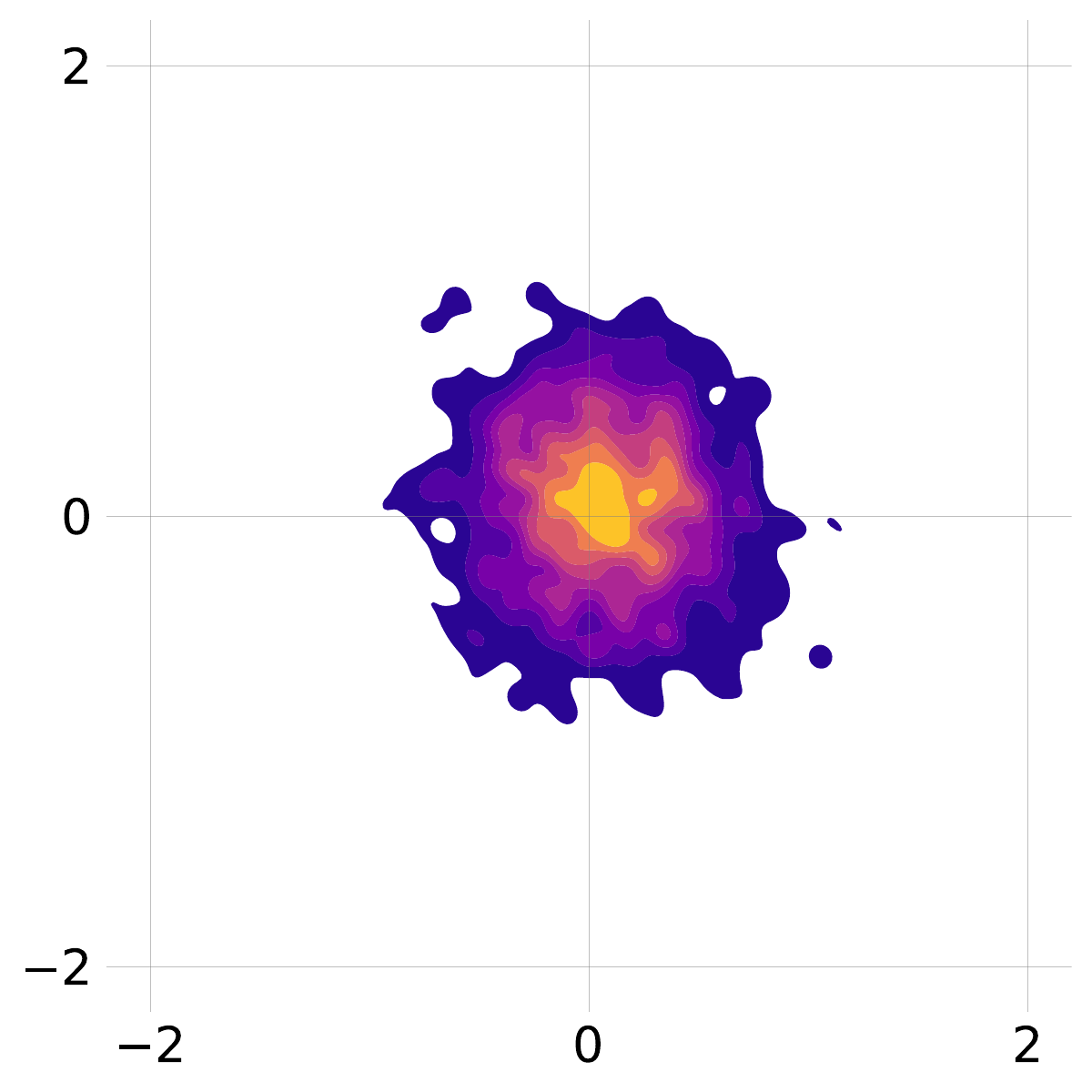} &
        \includegraphics[width=0.18\linewidth]{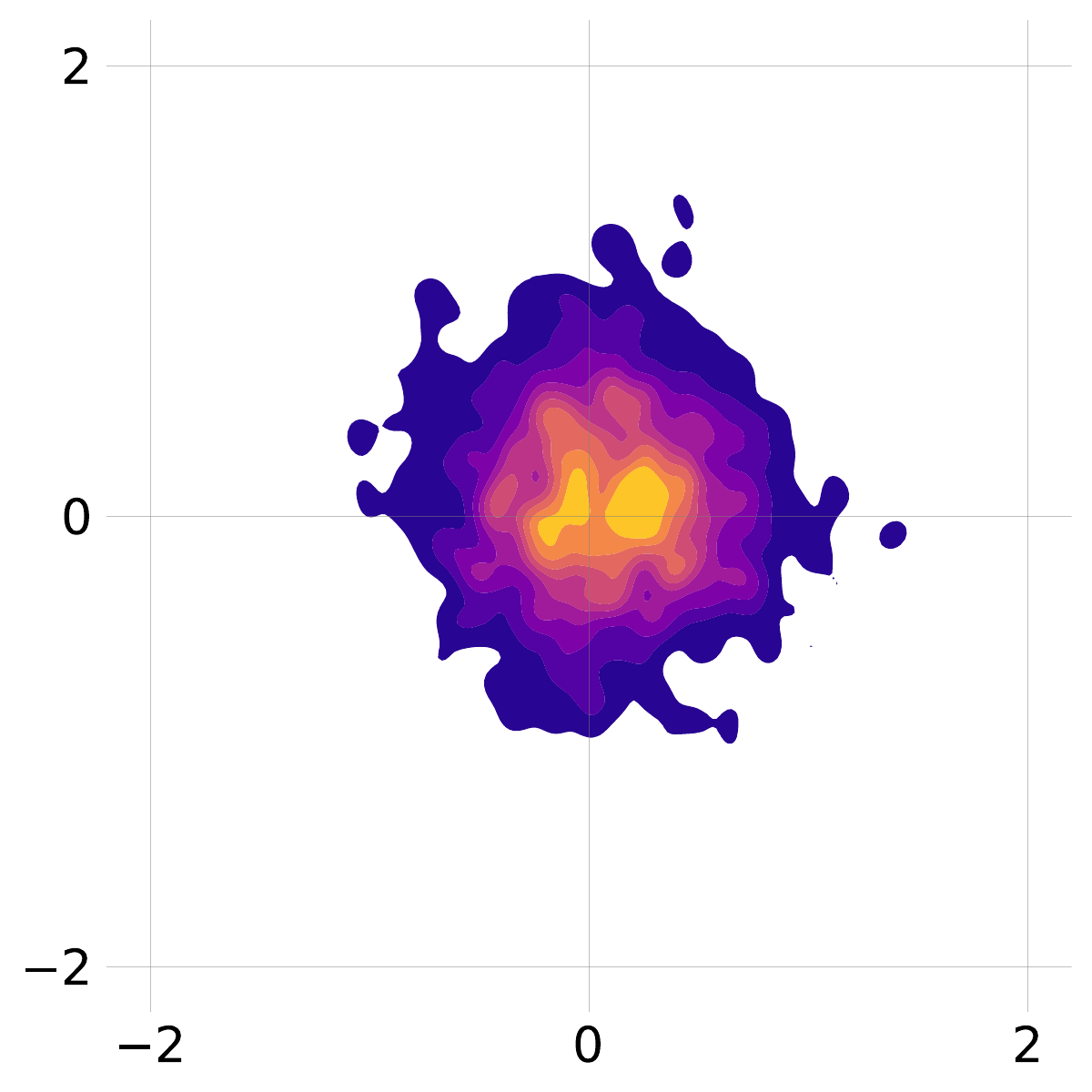} &
        \includegraphics[width=0.18\linewidth]{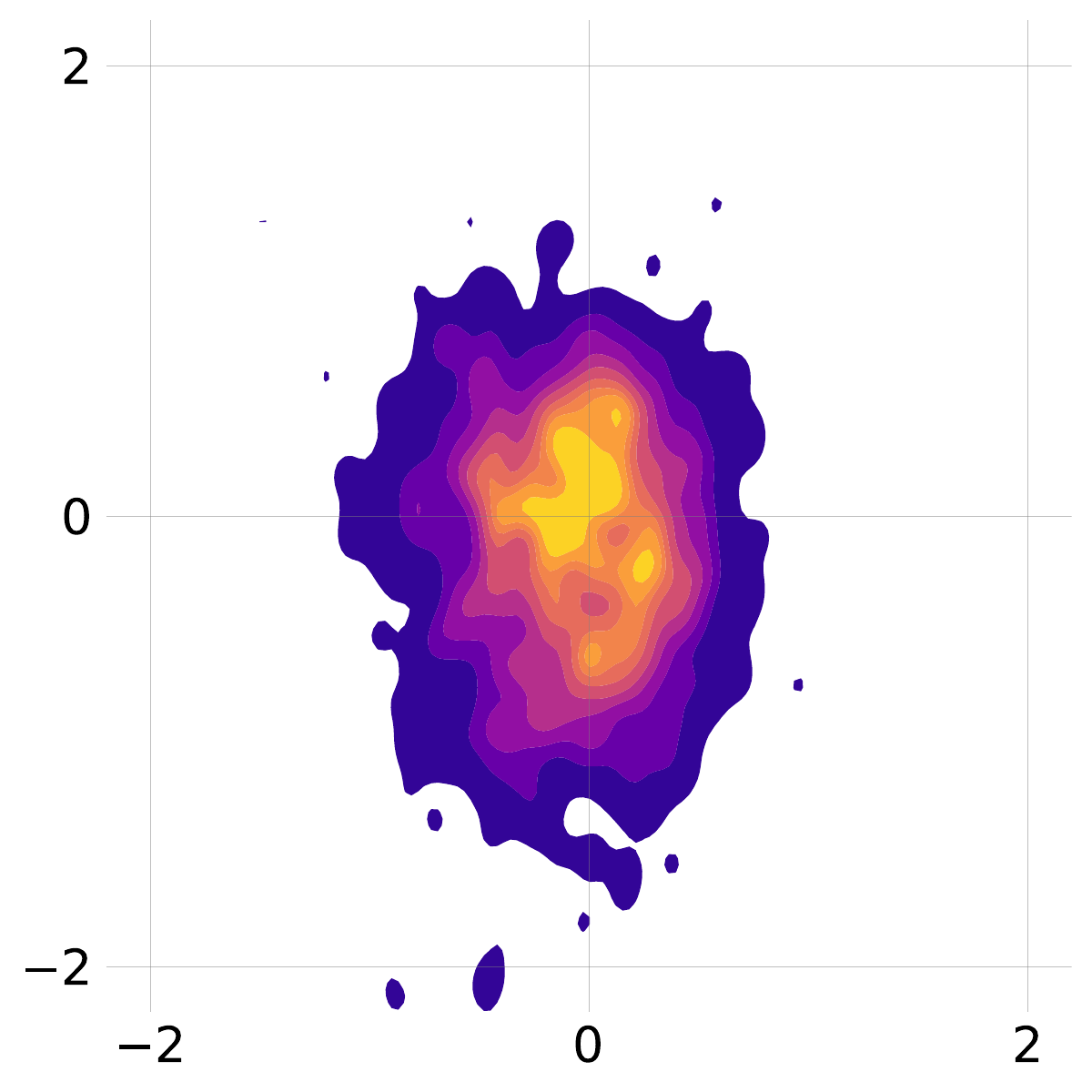} \\

        \rotatebox{90}{\;\;\;\quad\quad  \footnotesize Bias} &
        \includegraphics[width=0.18\linewidth]{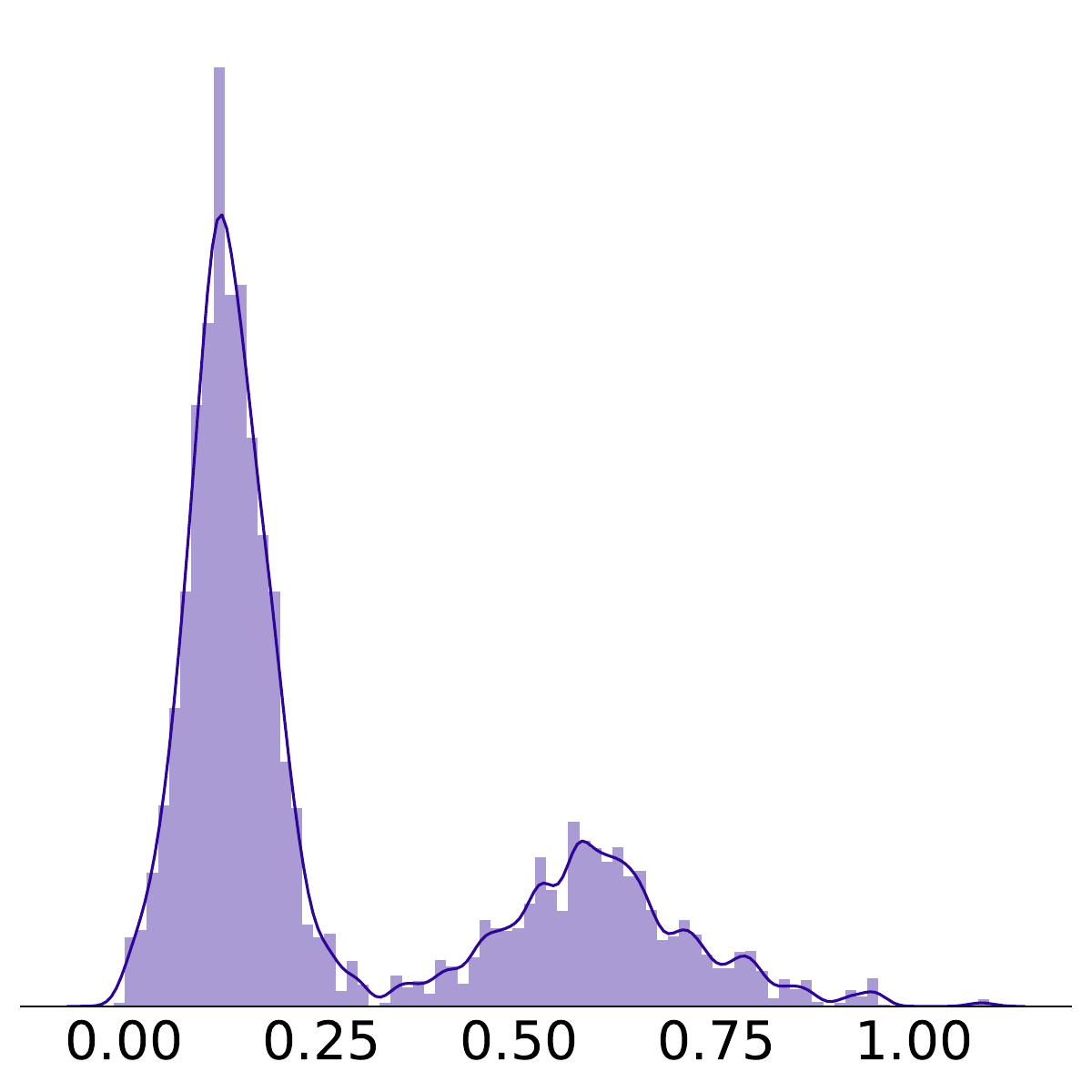} &
        \includegraphics[width=0.18\linewidth]{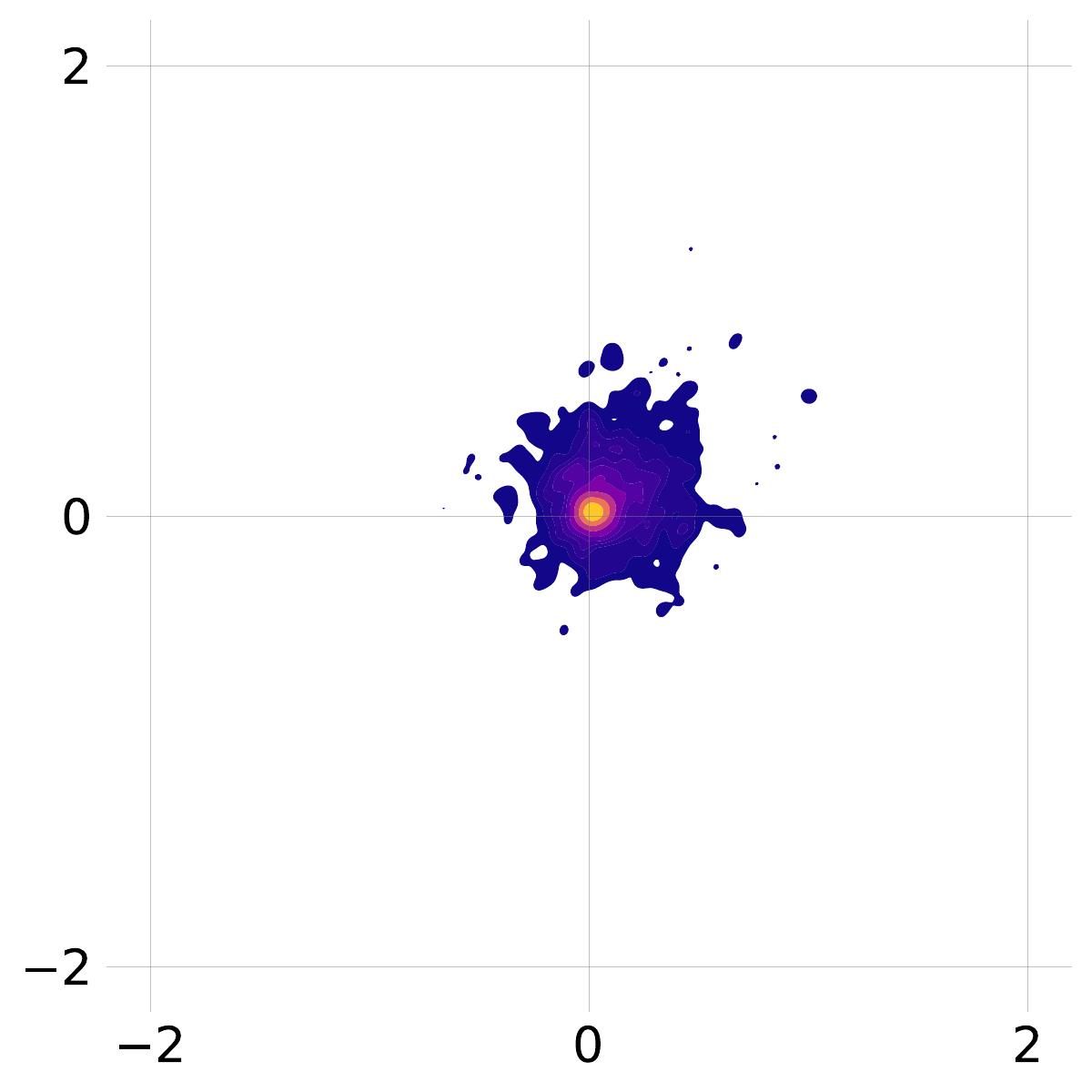} &
        \includegraphics[width=0.18\linewidth]{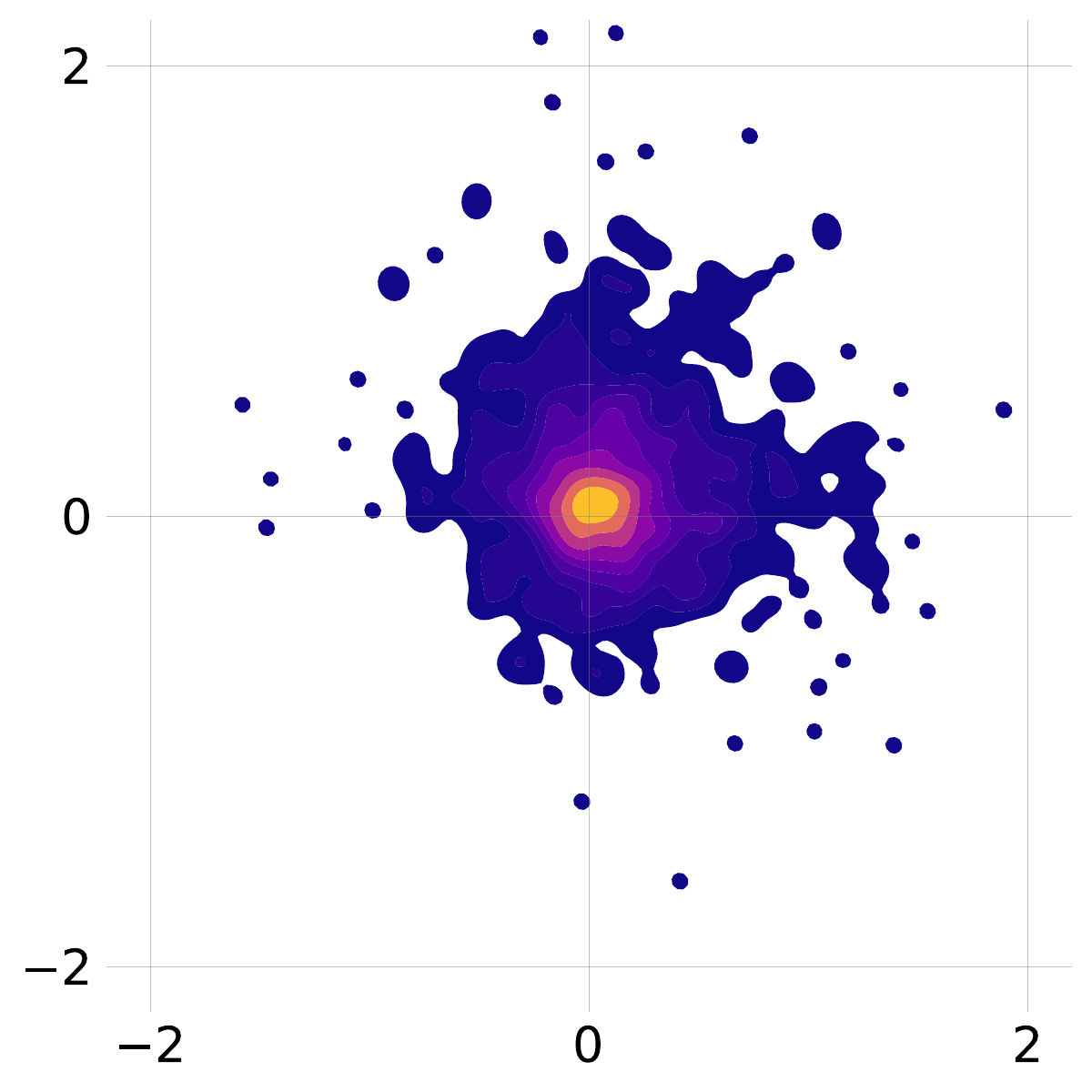} &
        \includegraphics[width=0.18\linewidth]{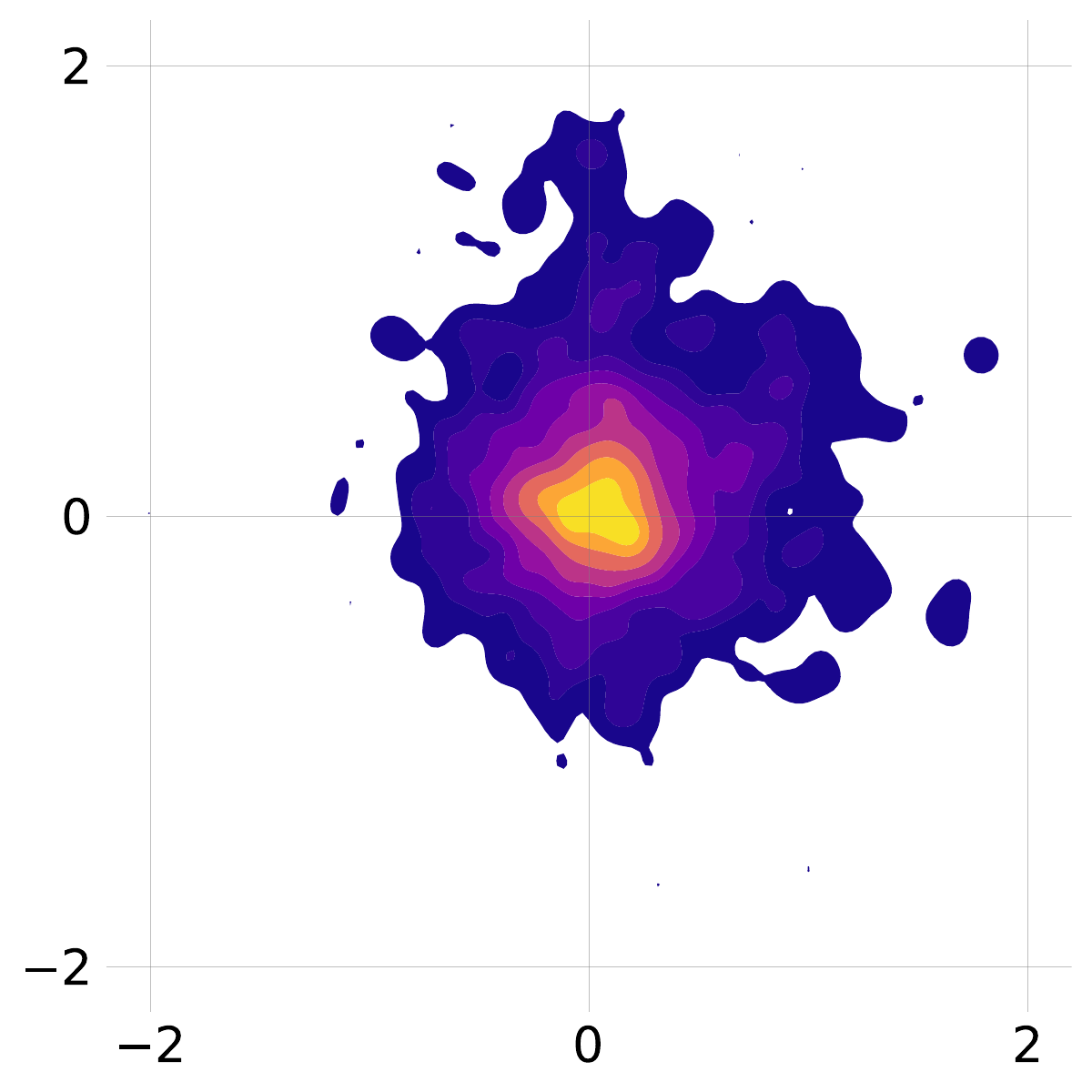} &
        \includegraphics[width=0.18\linewidth]{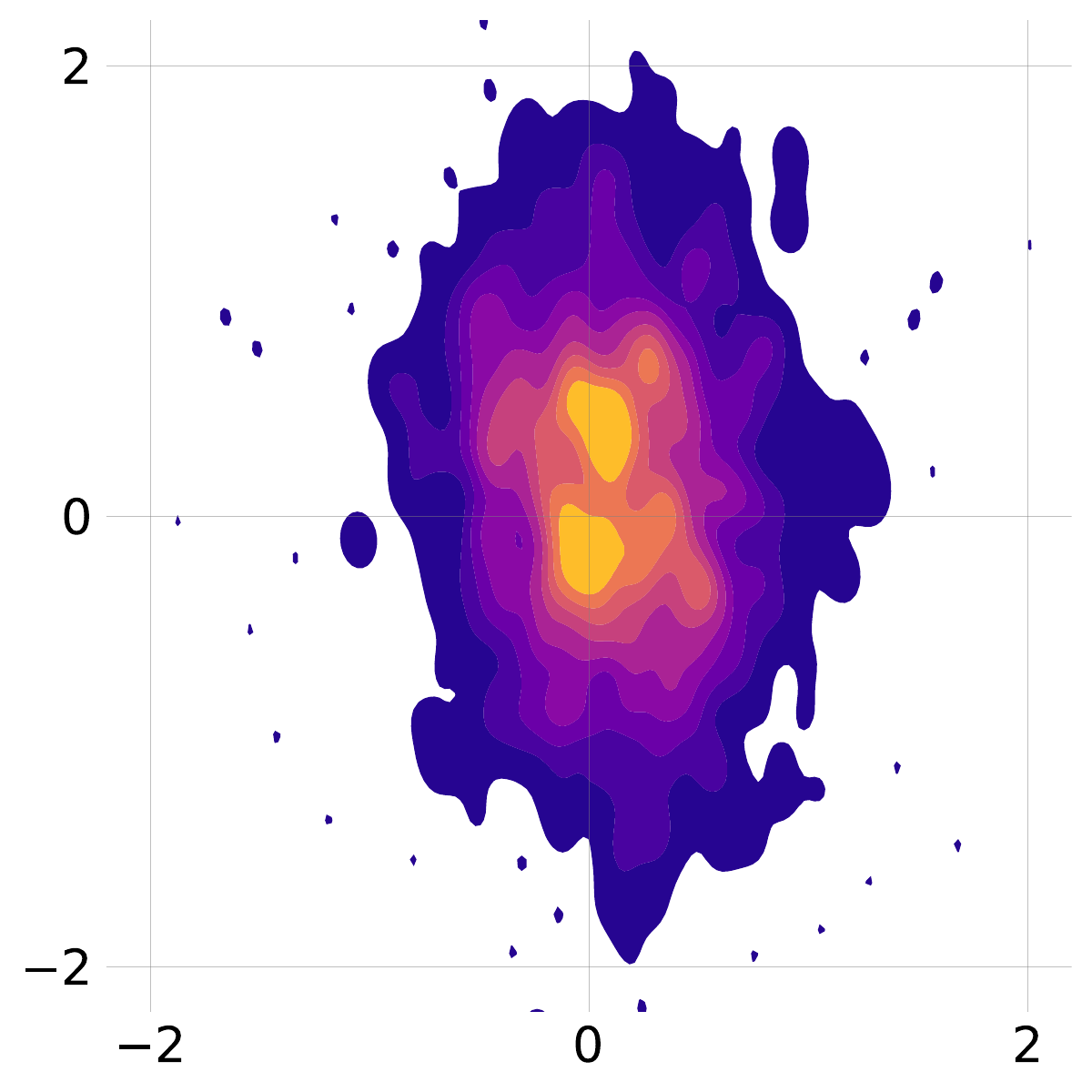} \\
        &  \footnotesize 3x3 Conv Filter &  \footnotesize Early Dense &   \footnotesize Midpoint Dense &   \footnotesize Later Dense &   \footnotesize Last Dense
    \end{tabular}
    \caption{(Bivariate) marginal posterior densities of a small 4-hidden layer convolutional BNN fitted on the Fashion-MNIST dataset (\texttt{CNNv1} of \citealp{MILE}). The grid visualizes the empirical densities of 1M posterior samples obtained from 1k independent chains. The rows and columns (Conv, three hidden, and output weights) display representative densities of randomly chosen weights of the network. We employ scale-adapted SGHMC \citep{adasghmc} for sampling, use an ensemble of 10 with 10k warmup steps, 10k sampling steps, thinning of 10, step size 0.001, momentum decay 0.05, batch size 256, and standard normal isotropic priors.}
    \label{fig:fmnistfireballgrid}
\end{figure}

\begin{figure}[h]
    \centering
    \setlength{\tabcolsep}{0pt}
    \renewcommand{\arraystretch}{0}
    \begin{tabular}{c@{}c@{}c@{}c@{}c@{}c}
        \rotatebox{90}{\;\quad\quad\quad   \footnotesize Kernel} &
        \includegraphics[width=0.2\linewidth]{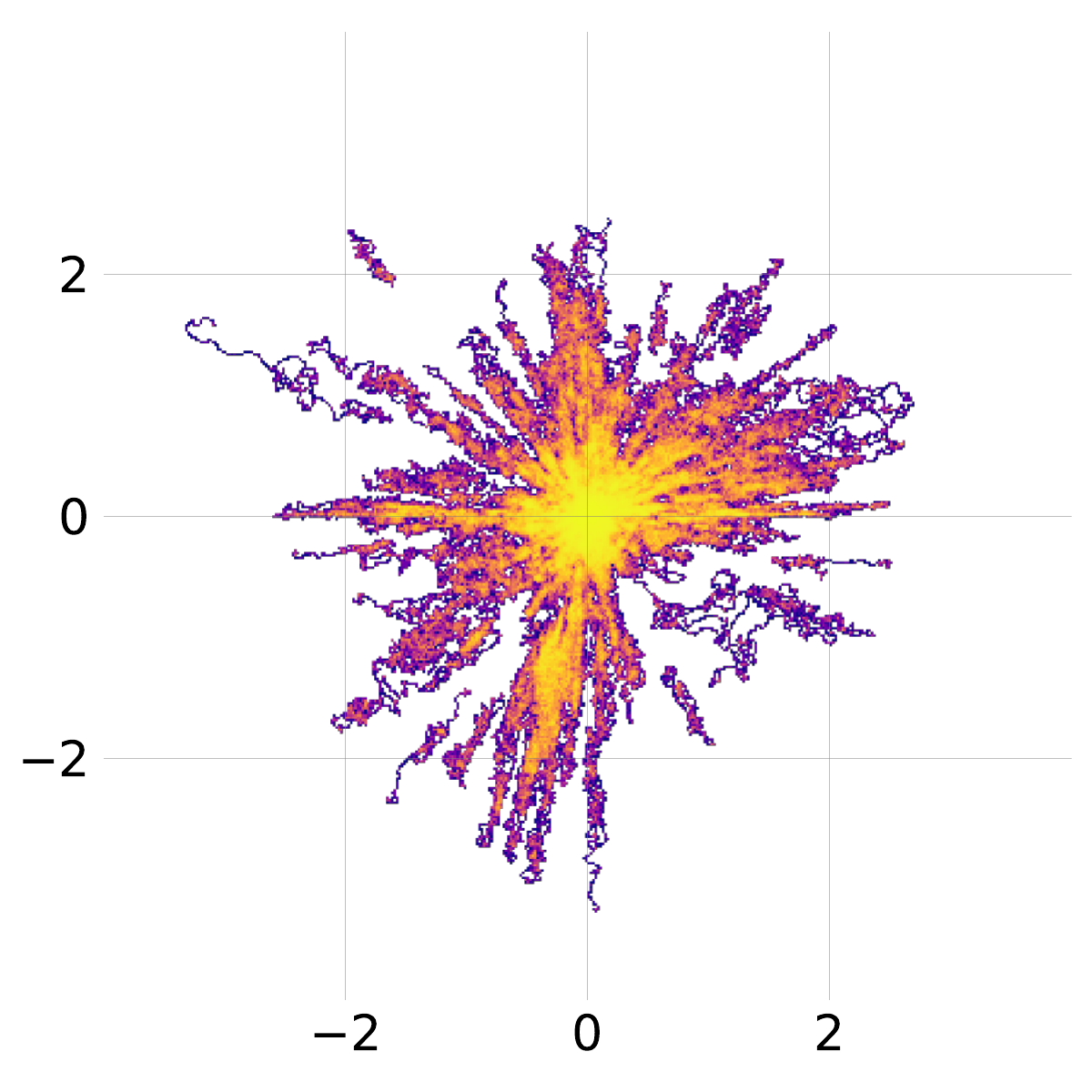} &
        \includegraphics[width=0.2\linewidth]{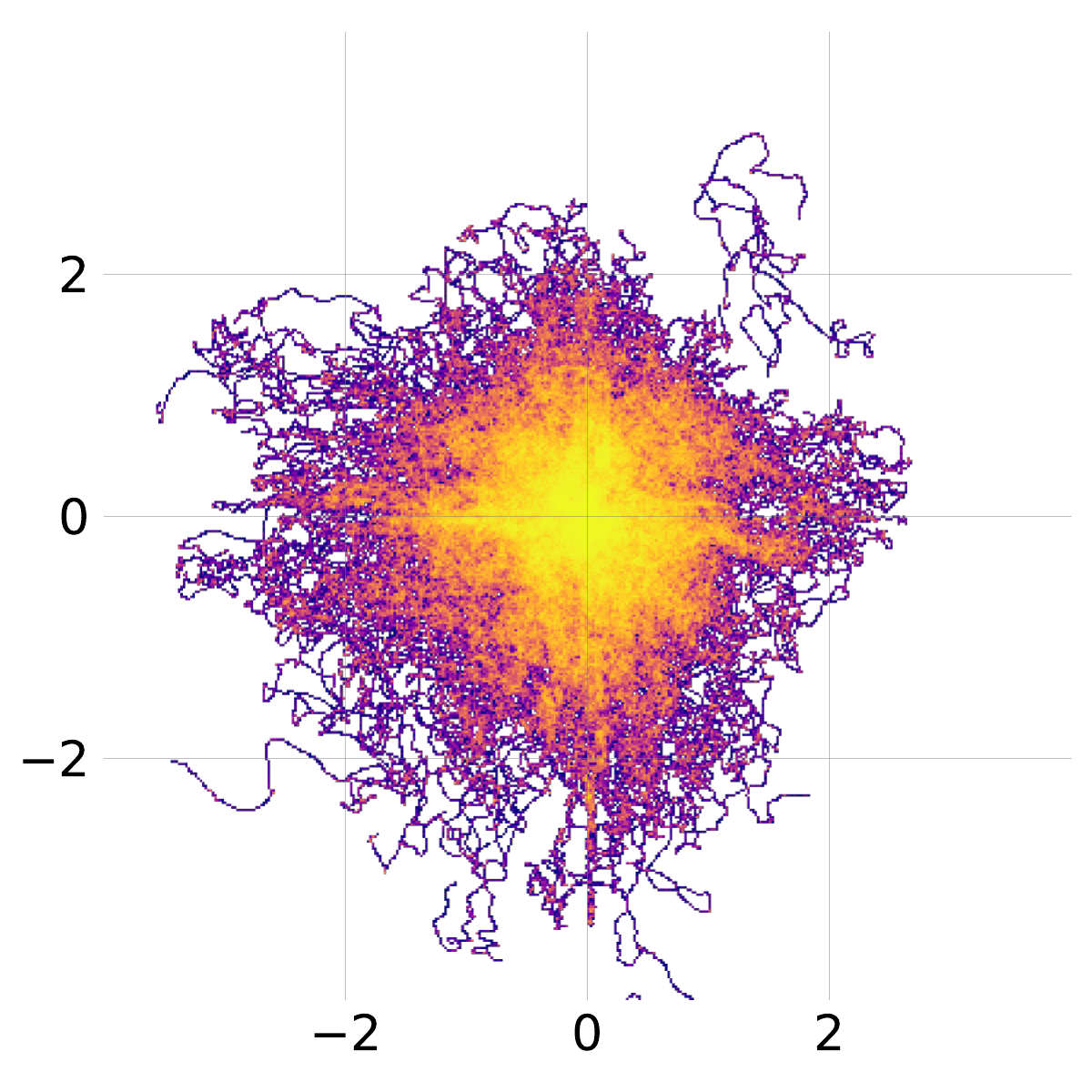} &
        \includegraphics[width=0.2\linewidth]{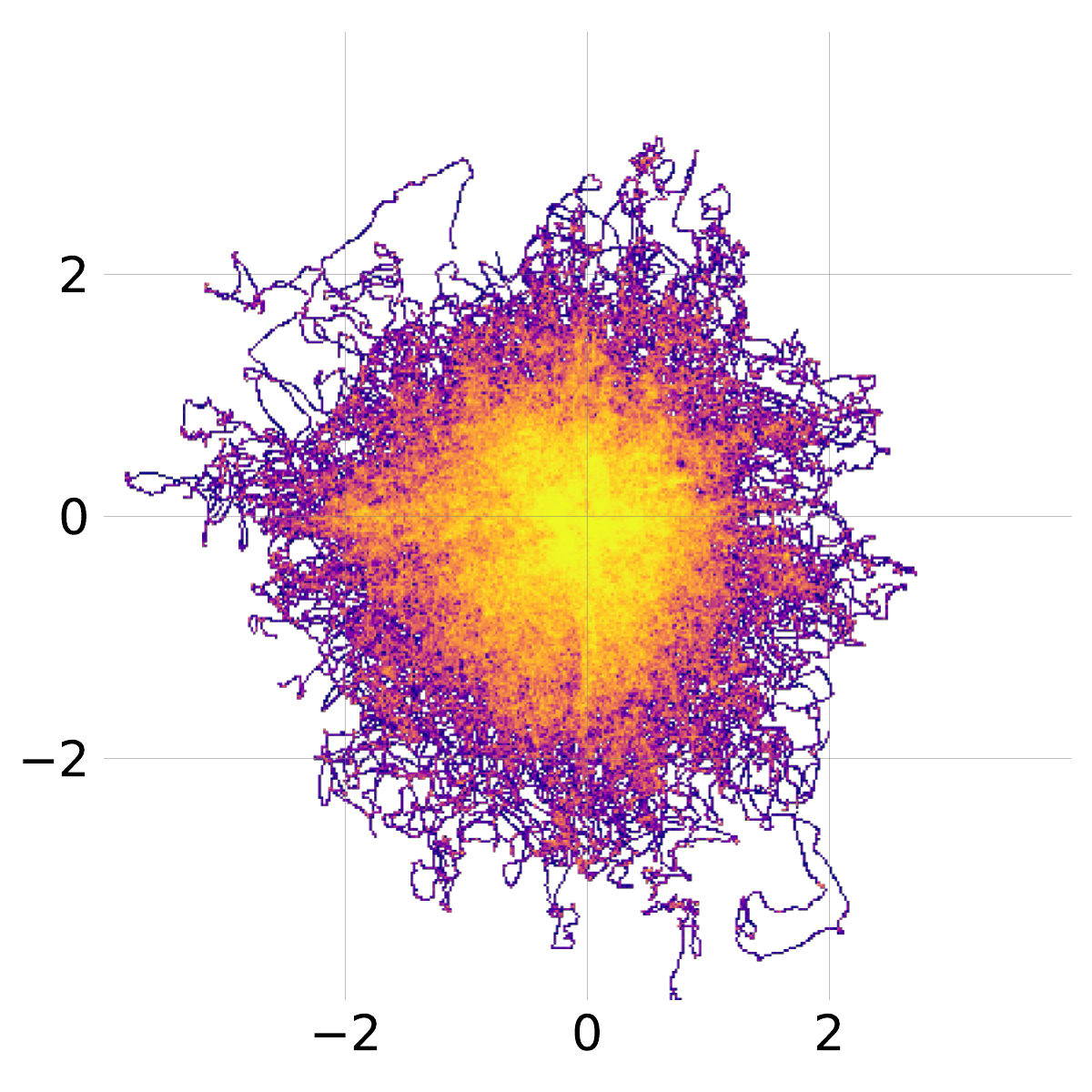} &
        \includegraphics[width=0.2\linewidth]{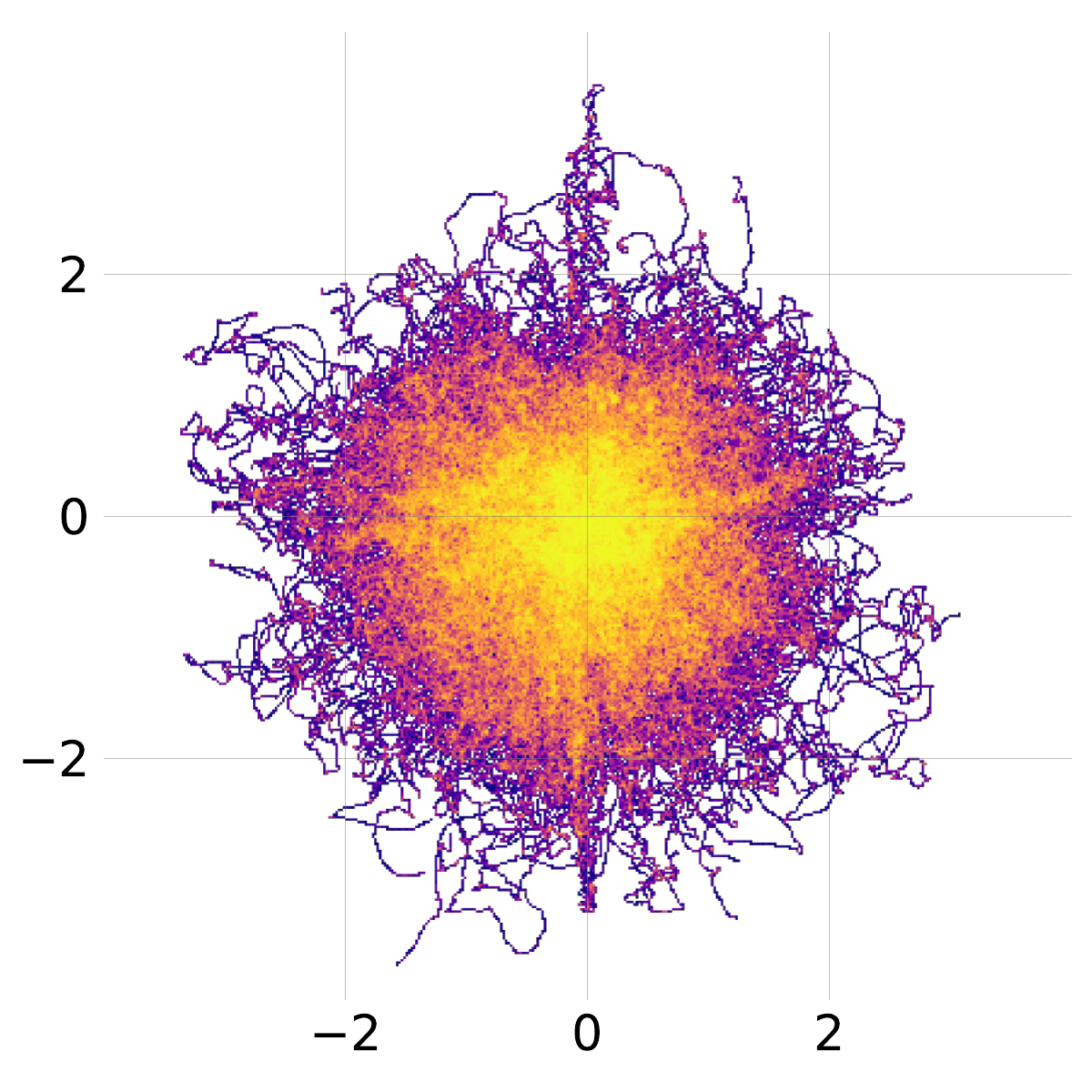} &
        \includegraphics[width=0.2\linewidth]{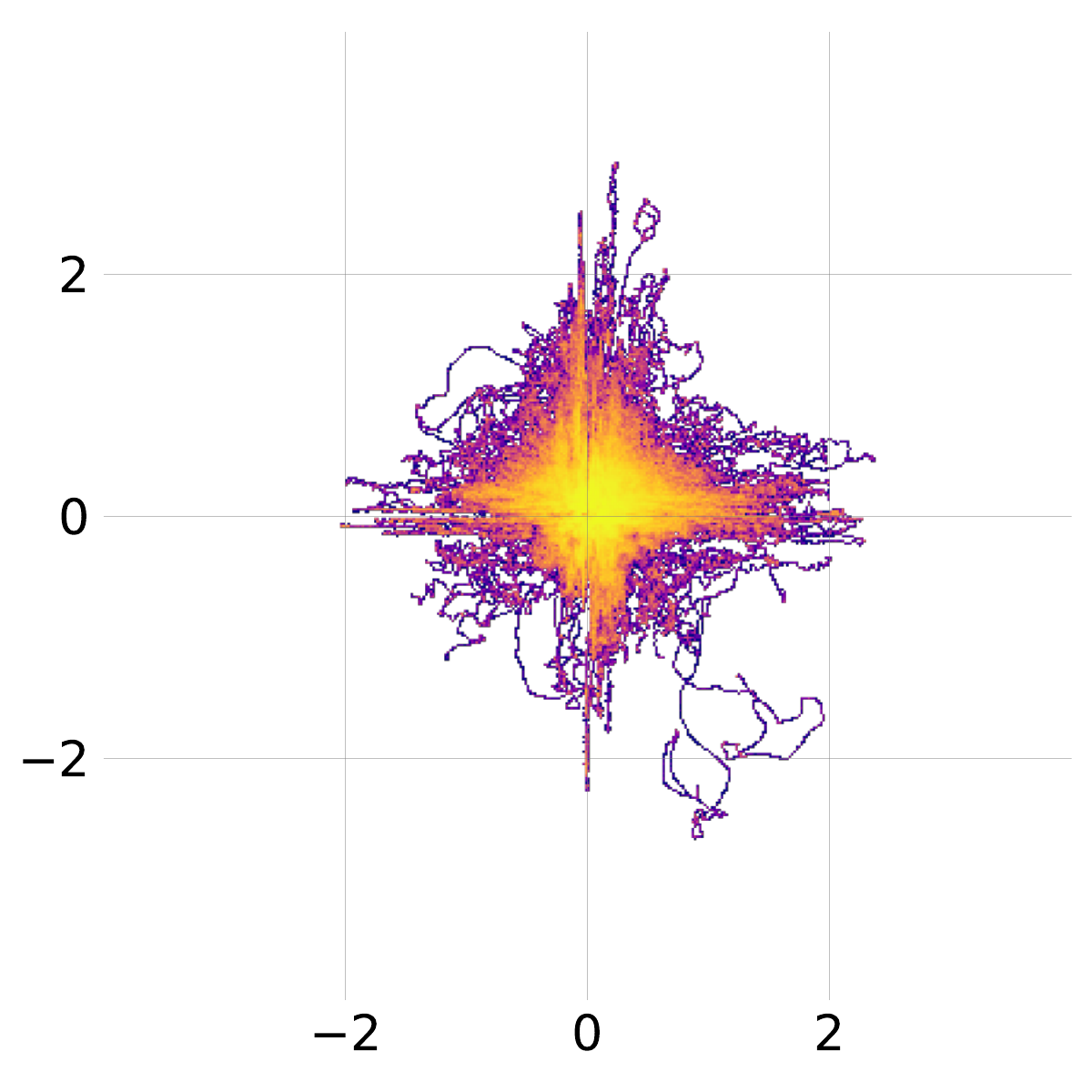} \\

        \rotatebox{90}{\;\quad\quad\quad  \footnotesize Bias} &
        \includegraphics[width=0.2\linewidth]{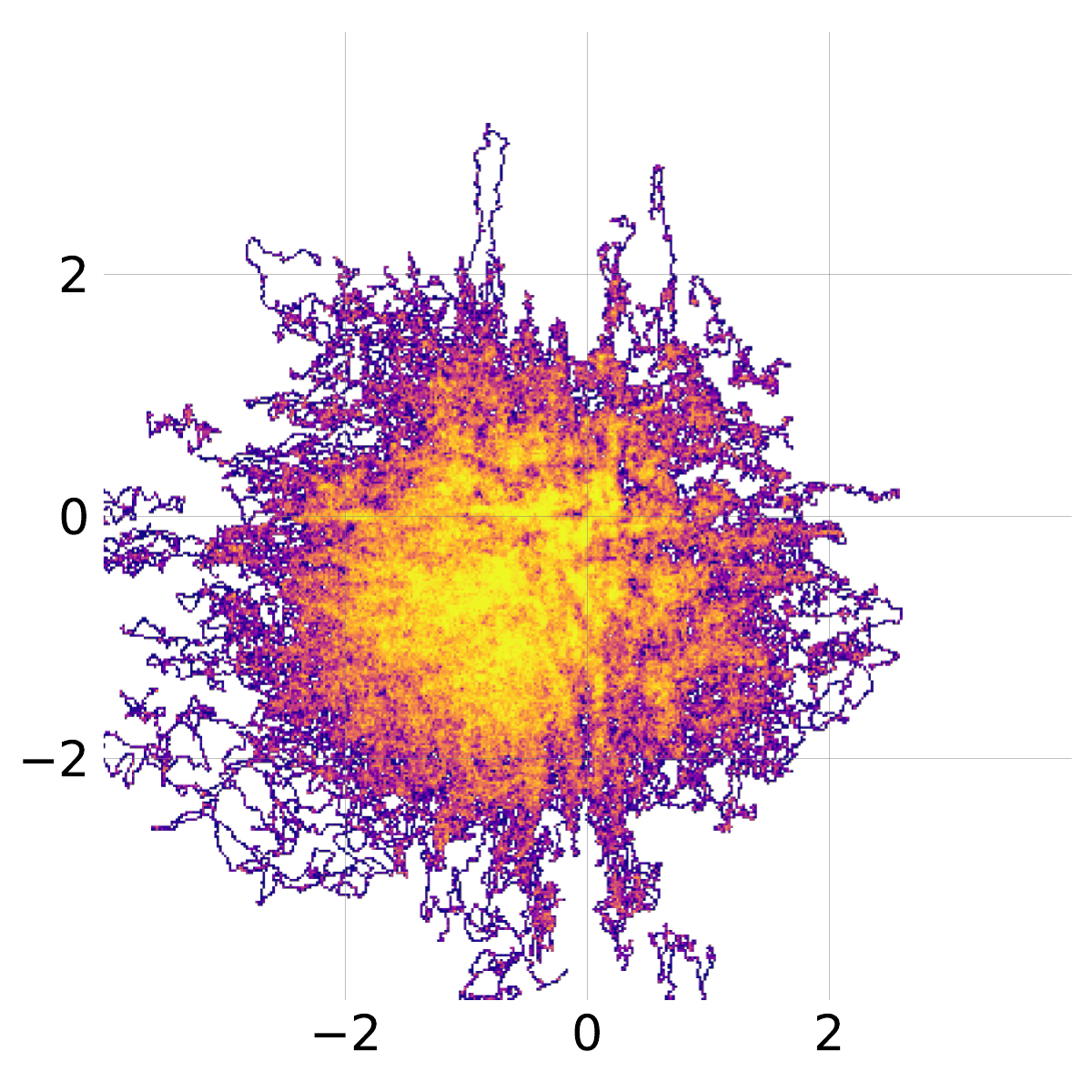} &
        \includegraphics[width=0.2\linewidth]{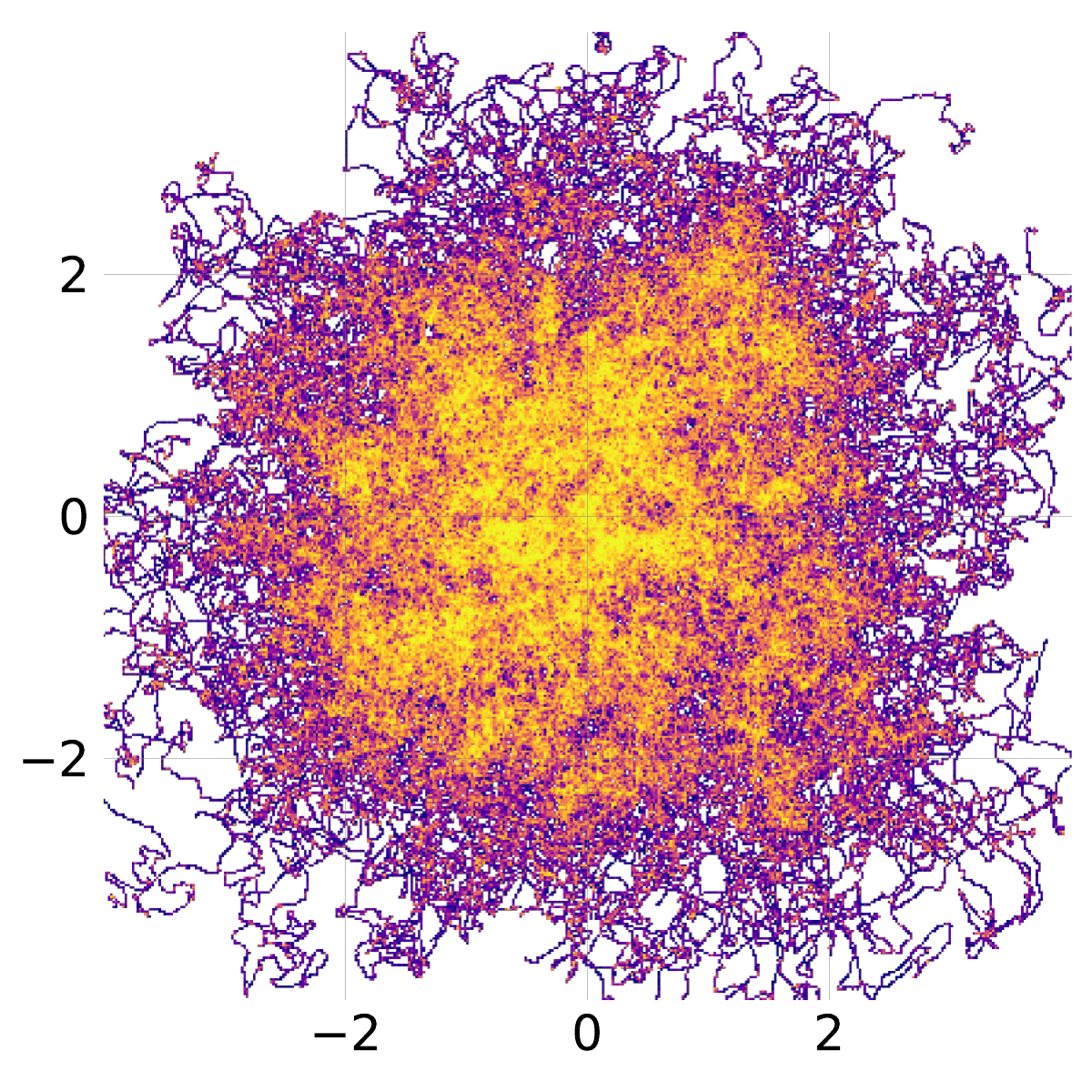} &
        \includegraphics[width=0.2\linewidth]{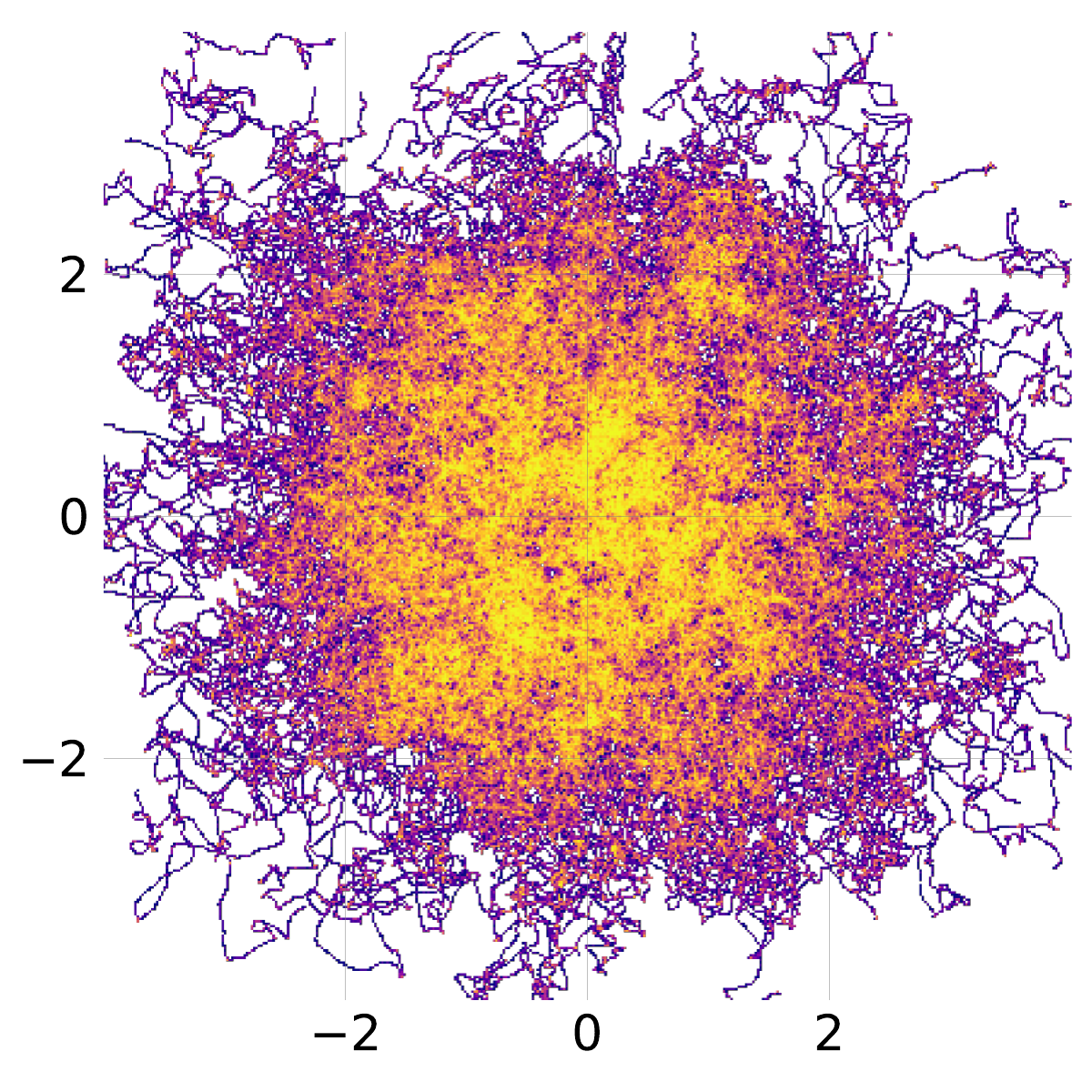} &
        \includegraphics[width=0.2\linewidth]{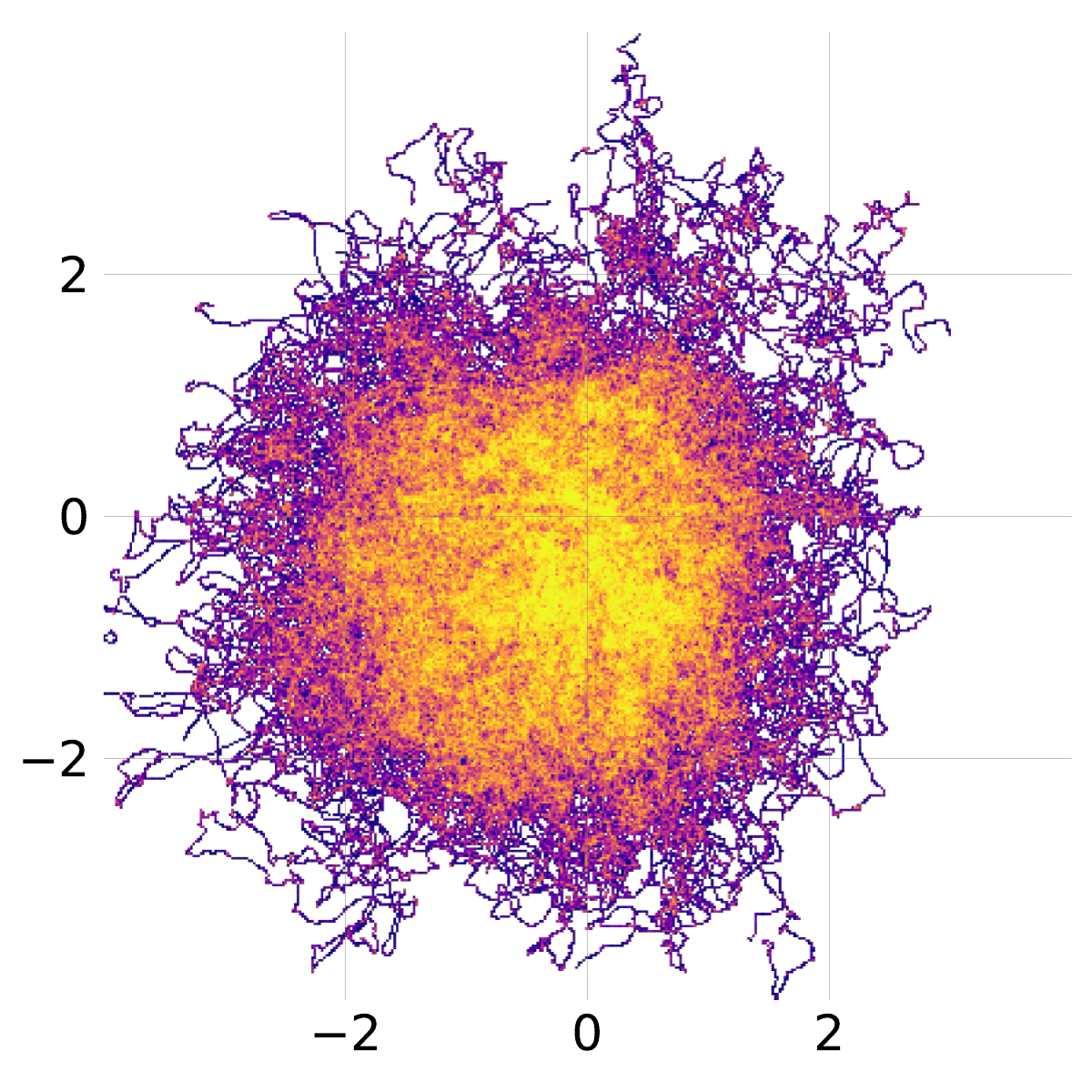} &
        \includegraphics[width=0.2\linewidth]{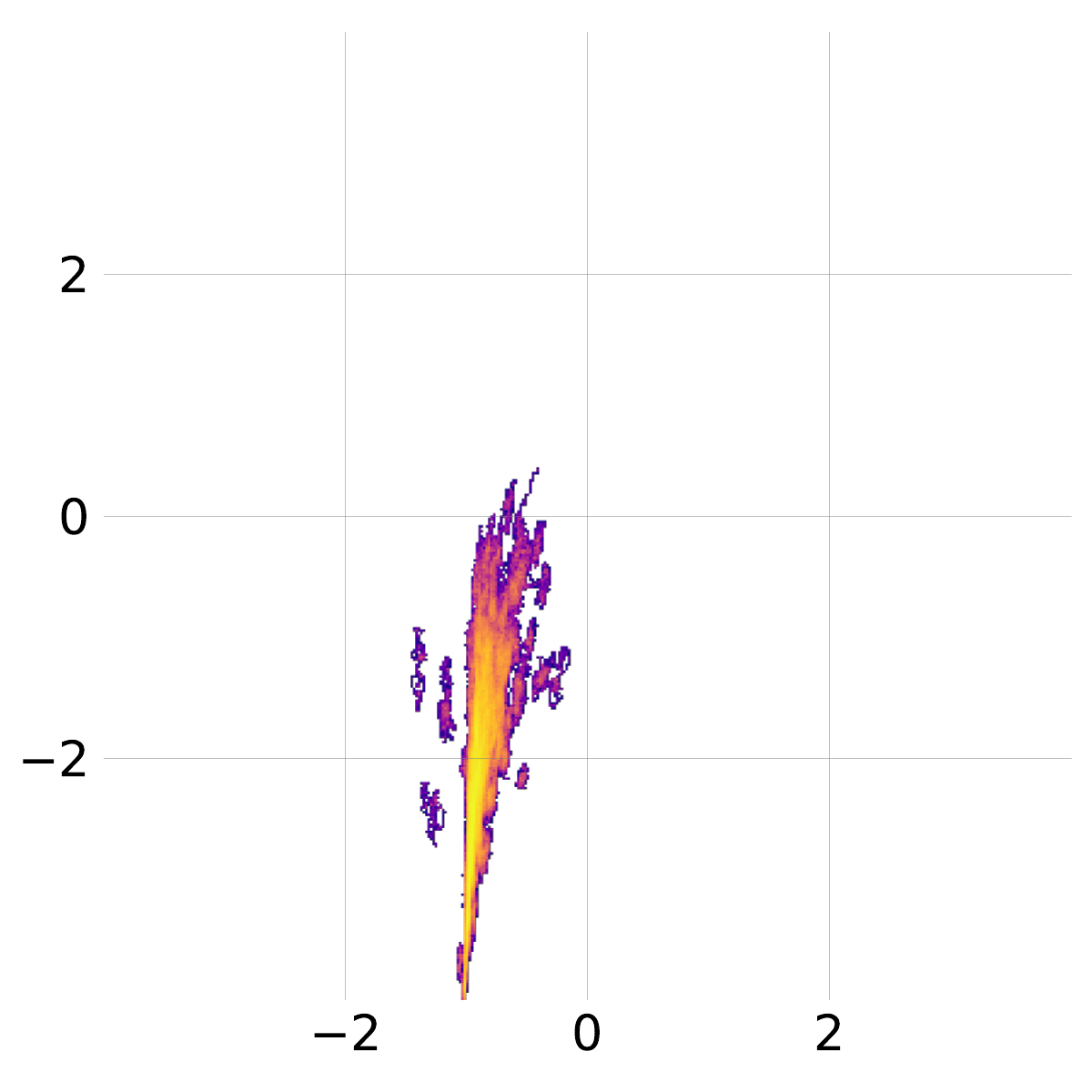} \\
        &   \footnotesize First &  \footnotesize Early Hidden &   \footnotesize Midpoint Hidden &   \footnotesize Later Hidden &   \footnotesize Last
    \end{tabular}
    \caption{(Bivariate) marginal posterior densities of a 4-hidden layer fully-connected BNN fitted on the \texttt{bikesharing} dataset (regression task, and same architecture as in \cref{fig:maingrid}). The grid visualizes the empirical densities of 1M posterior samples obtained from 1k independent chains. The rows and columns (input, three hidden, and output weights) display representative densities of randomly chosen weights of the network.}
    \label{fig:BIKEfireballgrid}
\end{figure}

\begin{figure}[h]
    \centering
    \setlength{\tabcolsep}{0pt}
    \renewcommand{\arraystretch}{0}
    \begin{tabular}{c@{}c@{}c@{}c@{}c@{}c}
        \rotatebox{90}{\;\;\quad\quad\quad   \footnotesize Kernel} &
        \includegraphics[width=0.2\linewidth]{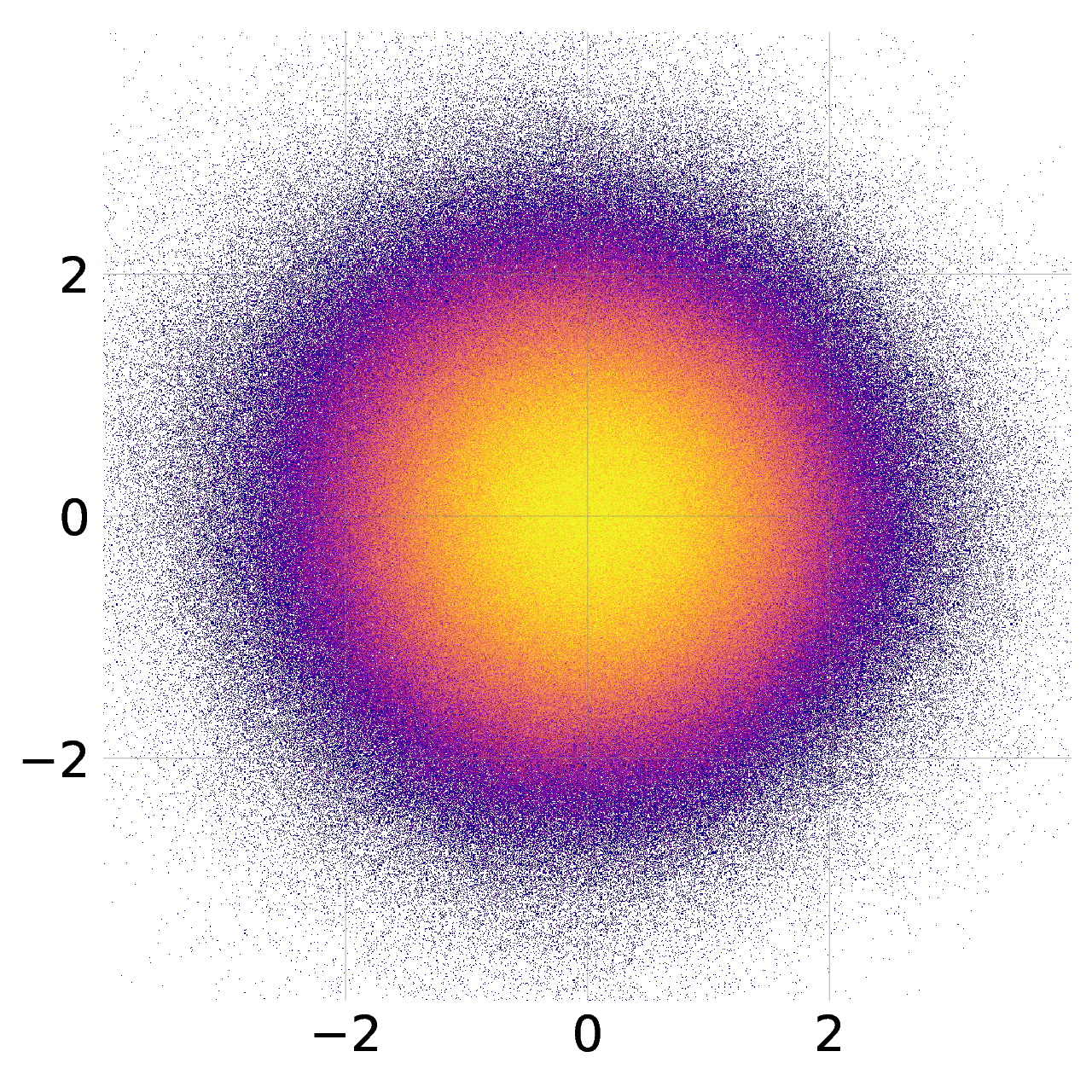} &
        \includegraphics[width=0.2\linewidth]{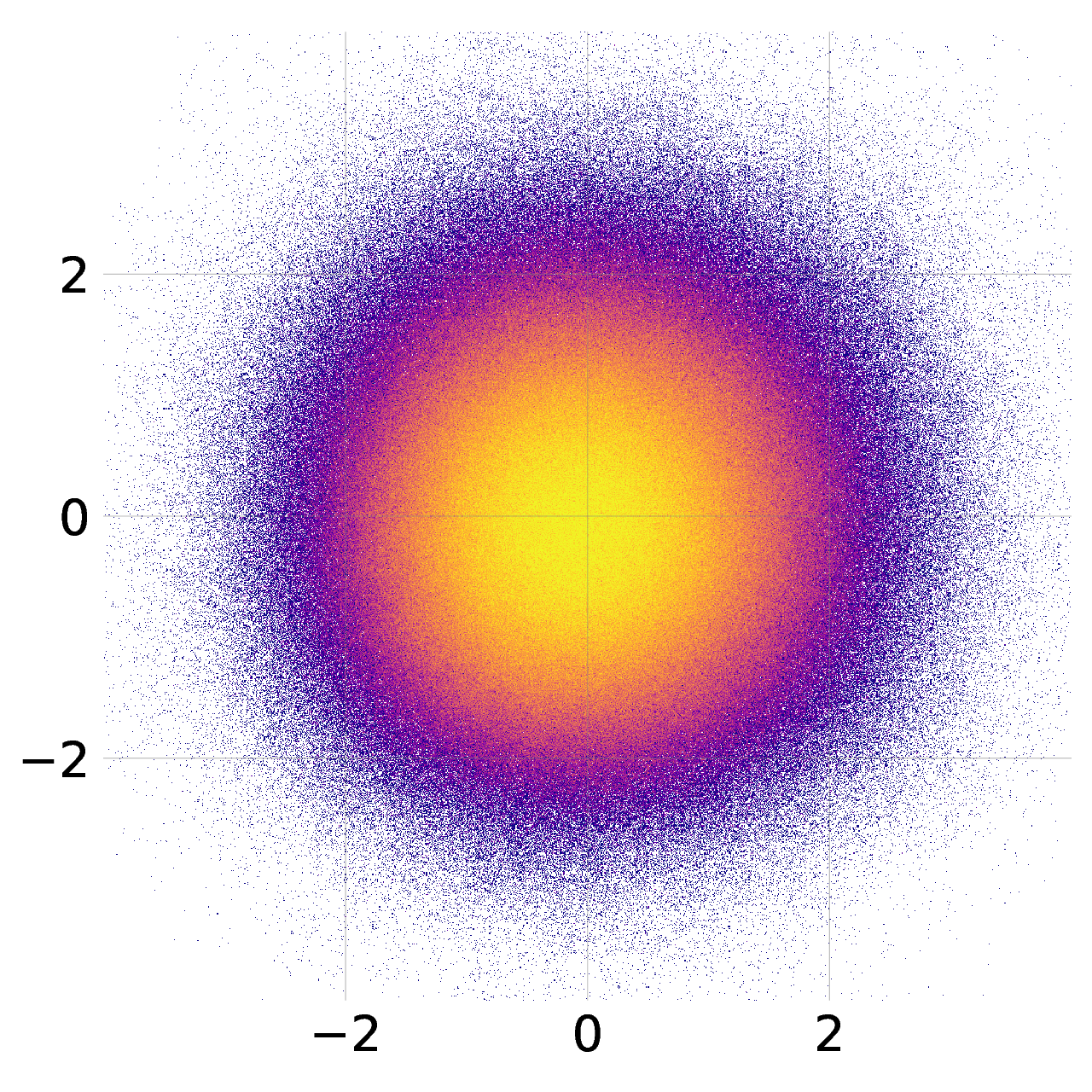} &
        \includegraphics[width=0.2\linewidth]{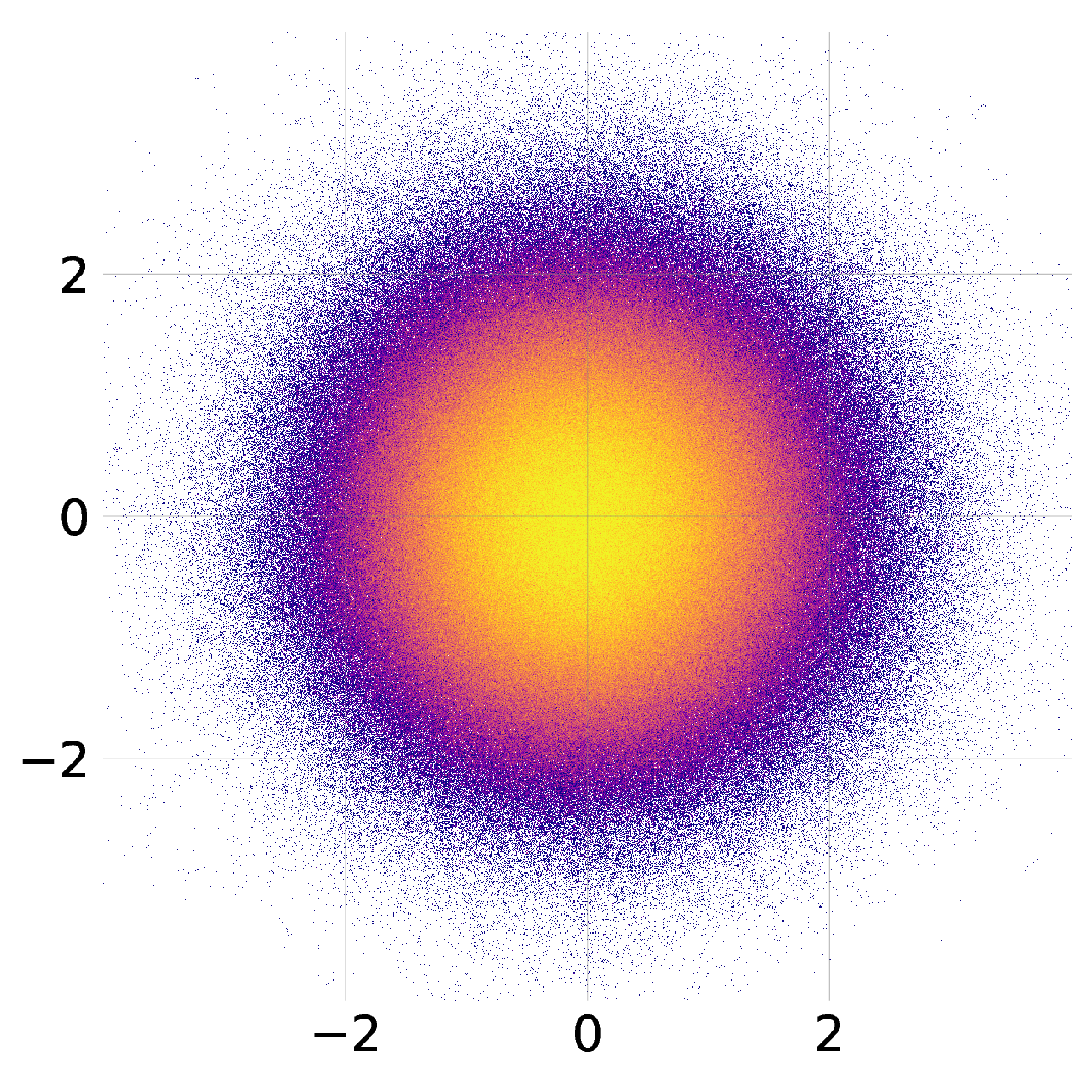} &
        \includegraphics[width=0.2\linewidth]{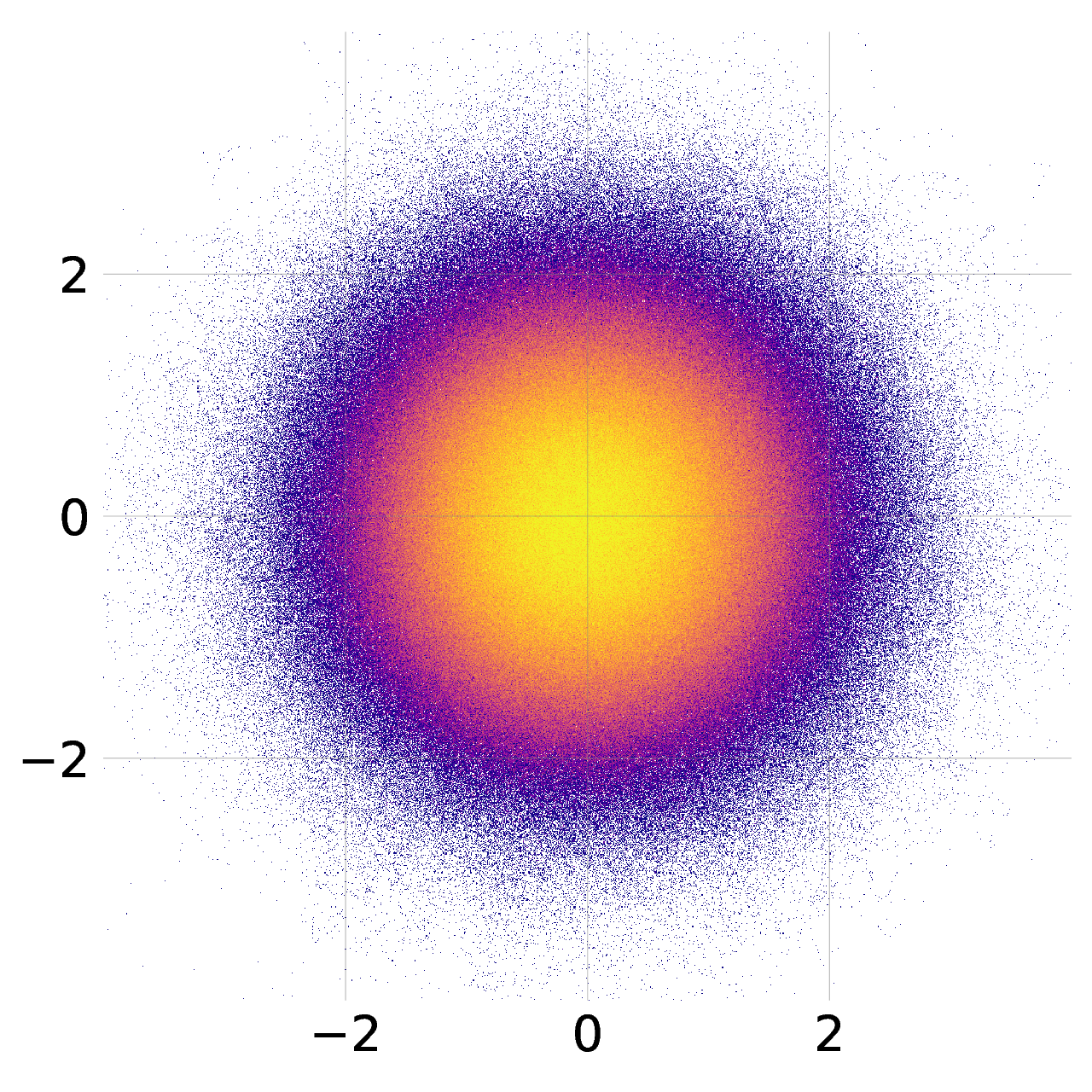} &
        \includegraphics[width=0.2\linewidth]{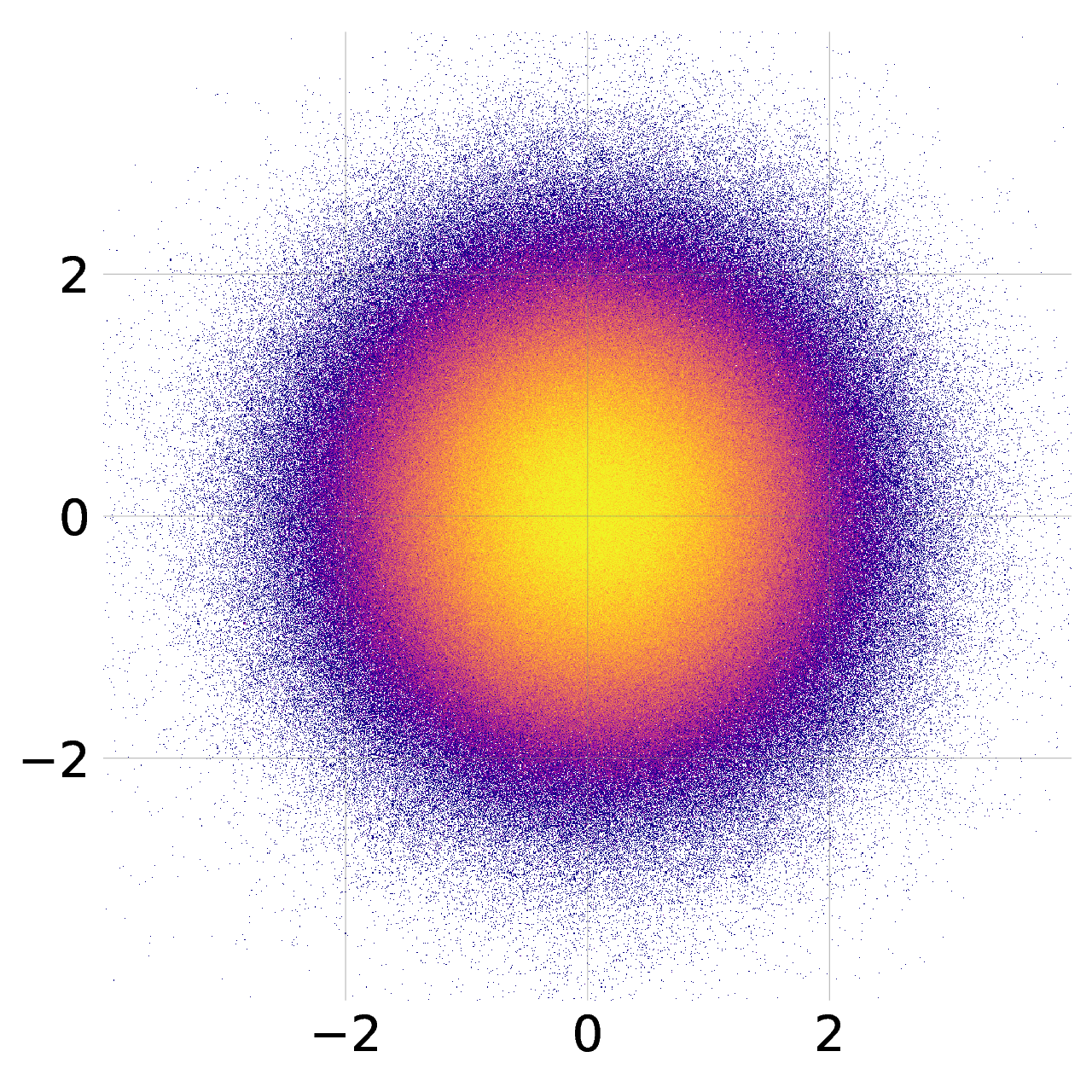} \\

        \rotatebox{90}{\;\;\quad\quad\quad  \footnotesize Bias} &
        \includegraphics[width=0.2\linewidth]{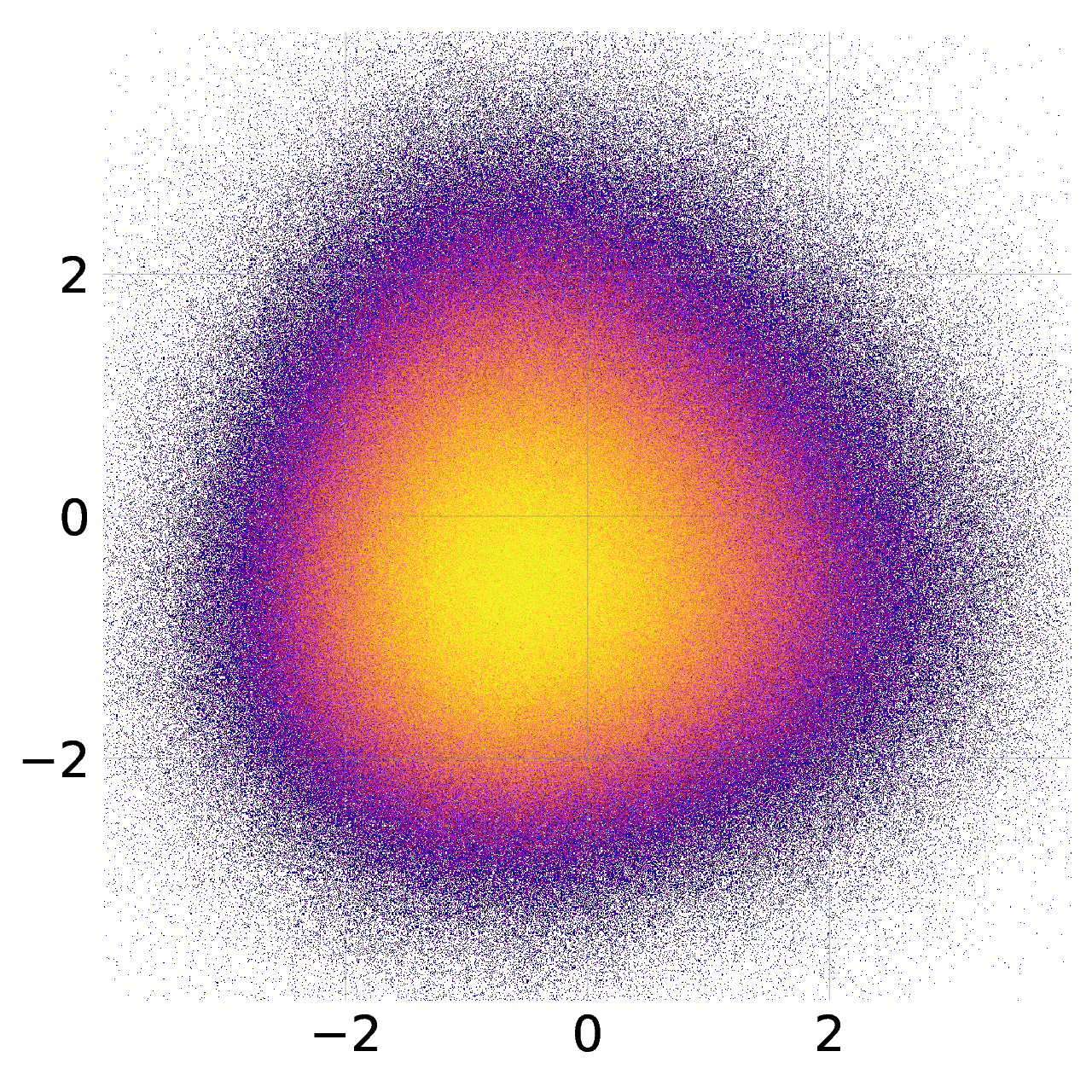} &
        \includegraphics[width=0.2\linewidth]{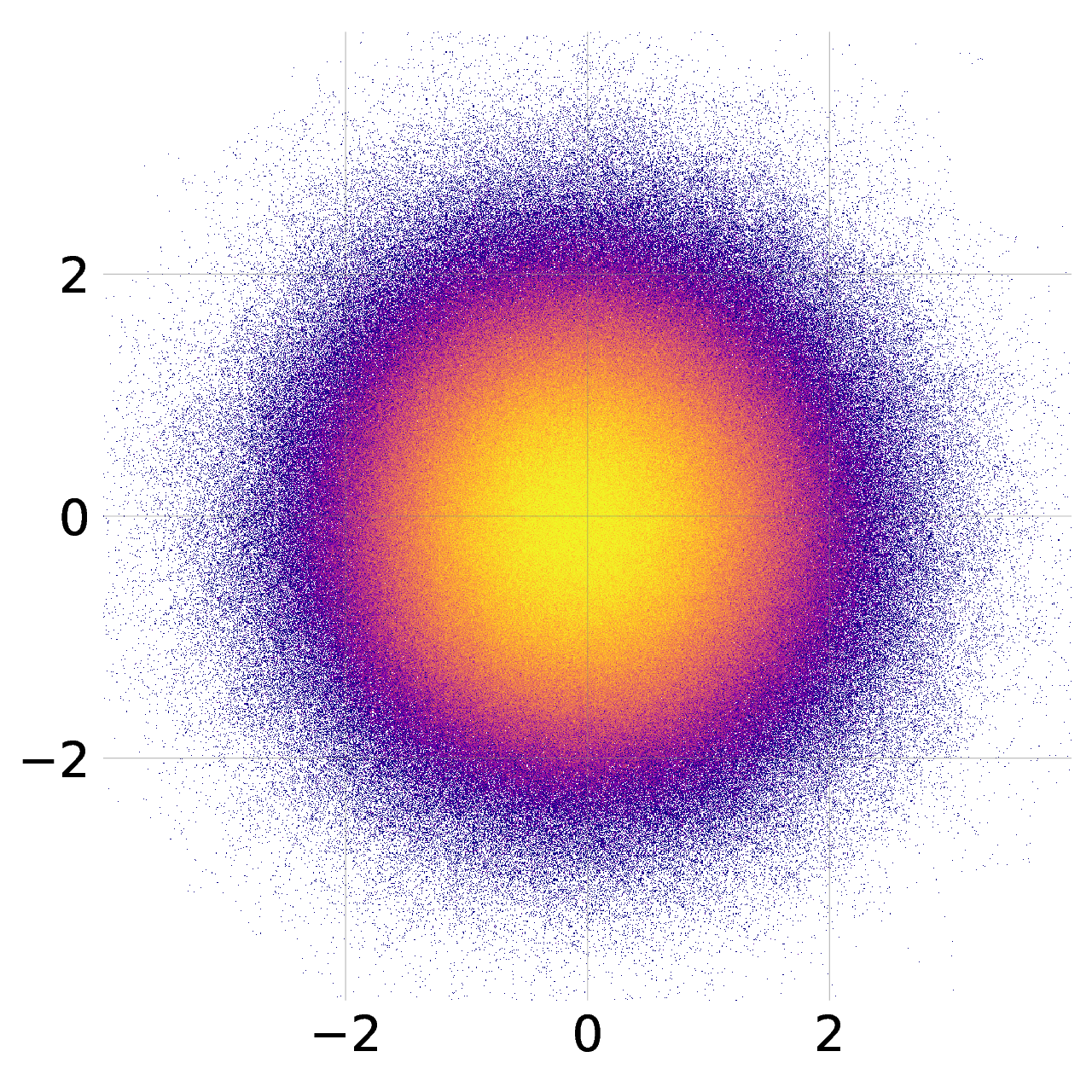} &
        \includegraphics[width=0.2\linewidth]{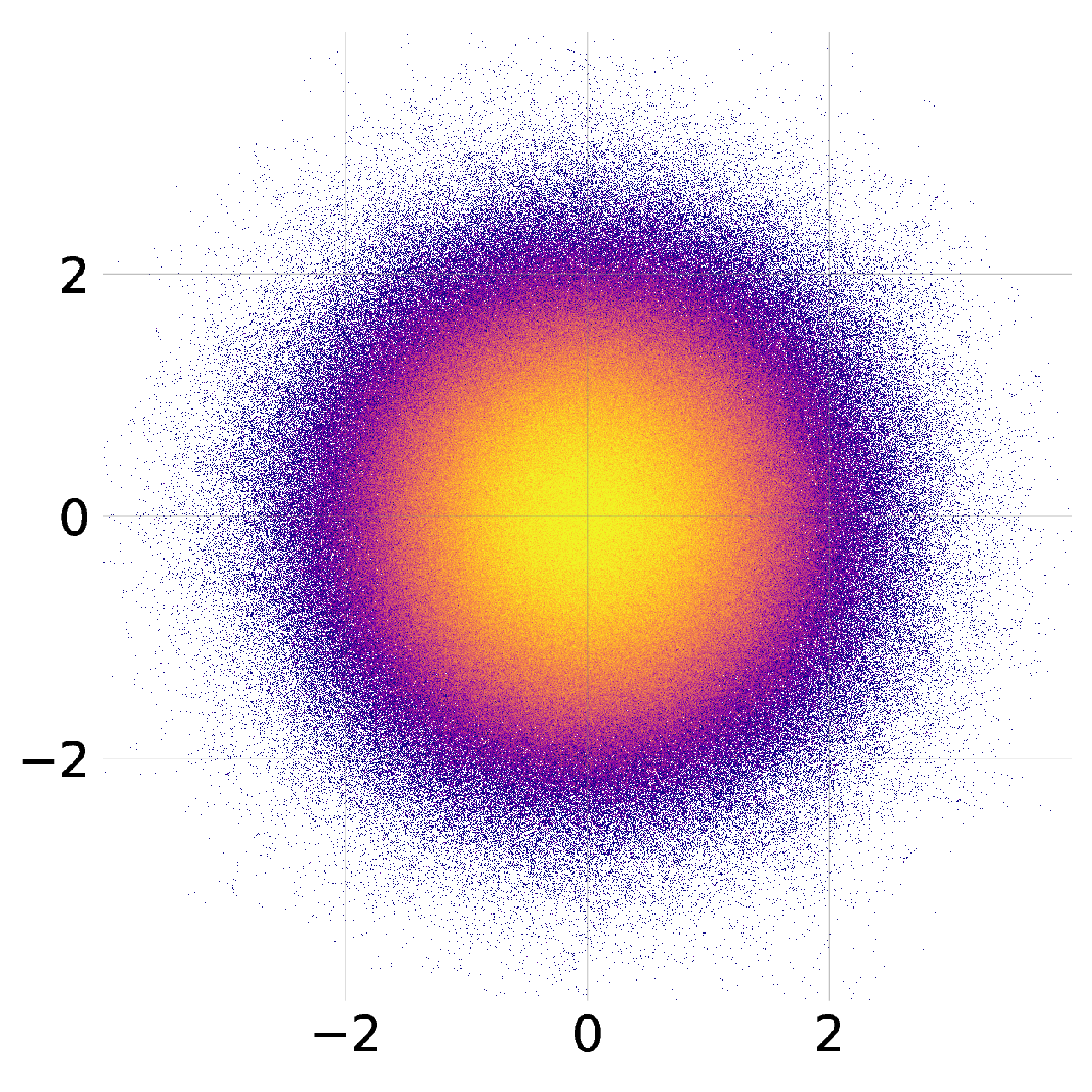} &
        \includegraphics[width=0.2\linewidth]{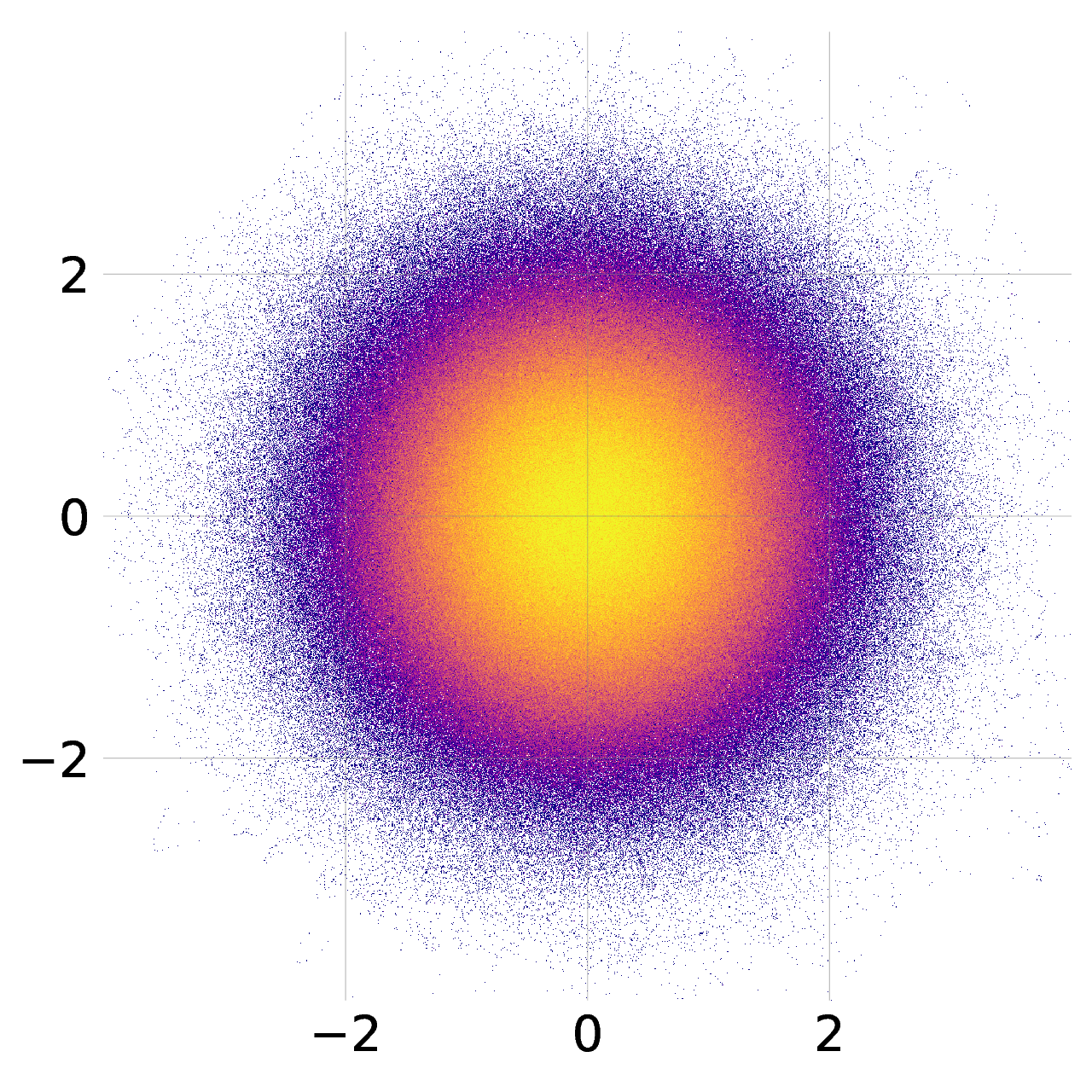} &
        \includegraphics[width=0.2\linewidth]{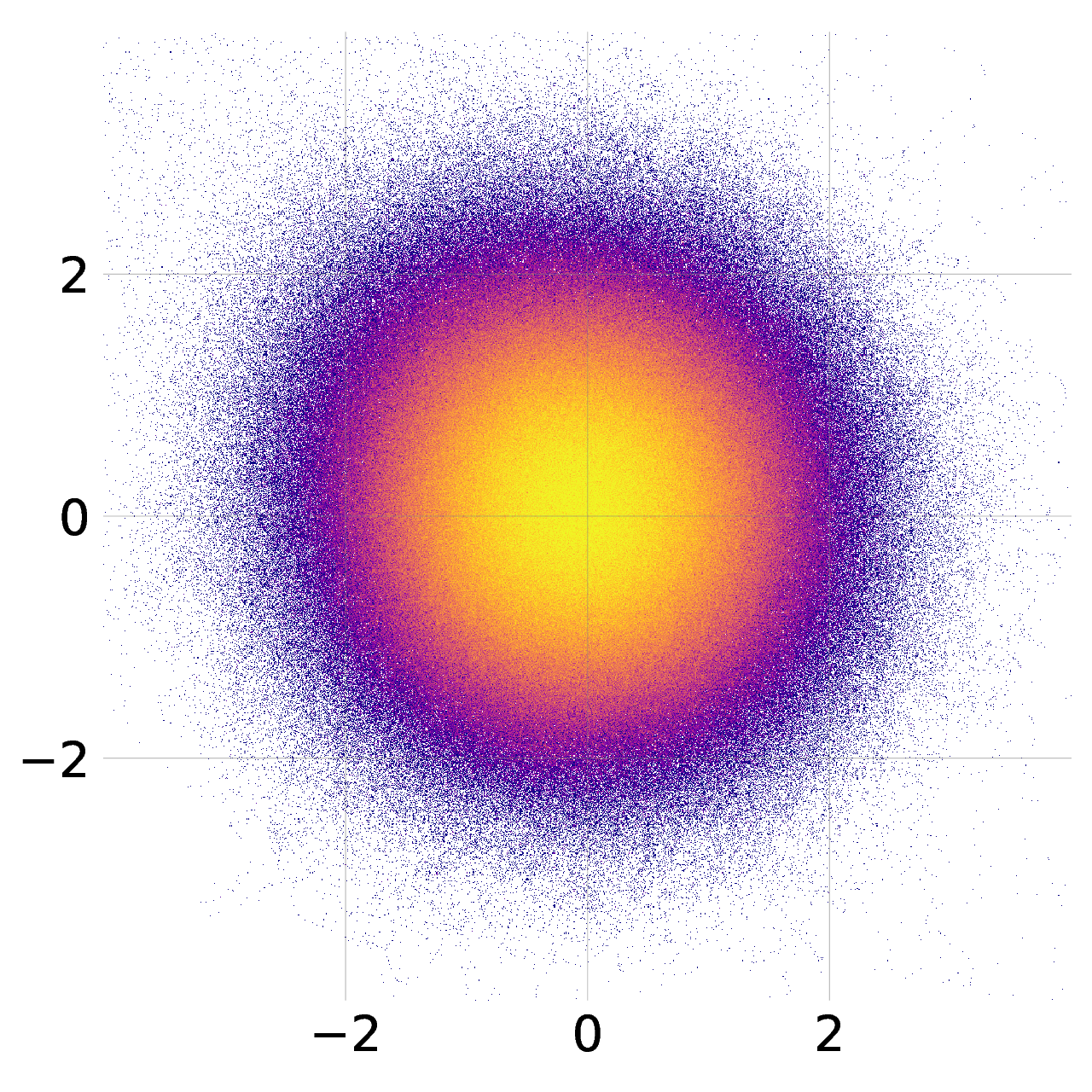} \\
        &   \footnotesize First & \footnotesize  Early Hidden &   \footnotesize Midpoint Hidden &   \footnotesize Later Hidden &   \footnotesize Last
    \end{tabular}
    \caption{(Bivariate) marginal posterior densities of a 4-hidden layer fully-connected BNN fitted on the \texttt{ionosphere} dataset (binary classification task, and same architecture as in \cref{fig:maingrid}). The grid visualizes the empirical densities of 8M posterior samples obtained from 8k independent chains. The rows and columns (input, three hidden, and output weights) display representative densities of randomly chosen weights of the network.}
    \label{fig:ionospherefireballgrid}
\end{figure}

\paragraph{Underparametrized Model} In order to contrast the bivariate marginal posterior densities of the above-considered overparametrized models with previously analyzed underparametrized models, we also considered the small $f_1$ architecture of \citet{wiese2023towards} and just as in their work, fitted it on the \texttt{airfoil} dataset, but now with 8M posterior samples and the MILE sampler. \cref{fig:airunderfireballgrid} displays the obtained marginal densities and shows clearly that in the underparametrized setting, very distinct symmetry patterns emerge in the margins of the sampled posterior. One of these marginal plots is also featured in \cref{fig:rohrschach}.

\begin{figure}[h!]
    \centering
    \setlength{\tabcolsep}{0pt}
    \renewcommand{\arraystretch}{0}
    \begin{tabular}{c@{}c@{}c@{}c@{}c@{}c}
        \rotatebox{90}{\;\;\;\quad\quad\quad\quad\quad\quad   \footnotesize Kernel} &
        \includegraphics[width=0.3\linewidth]{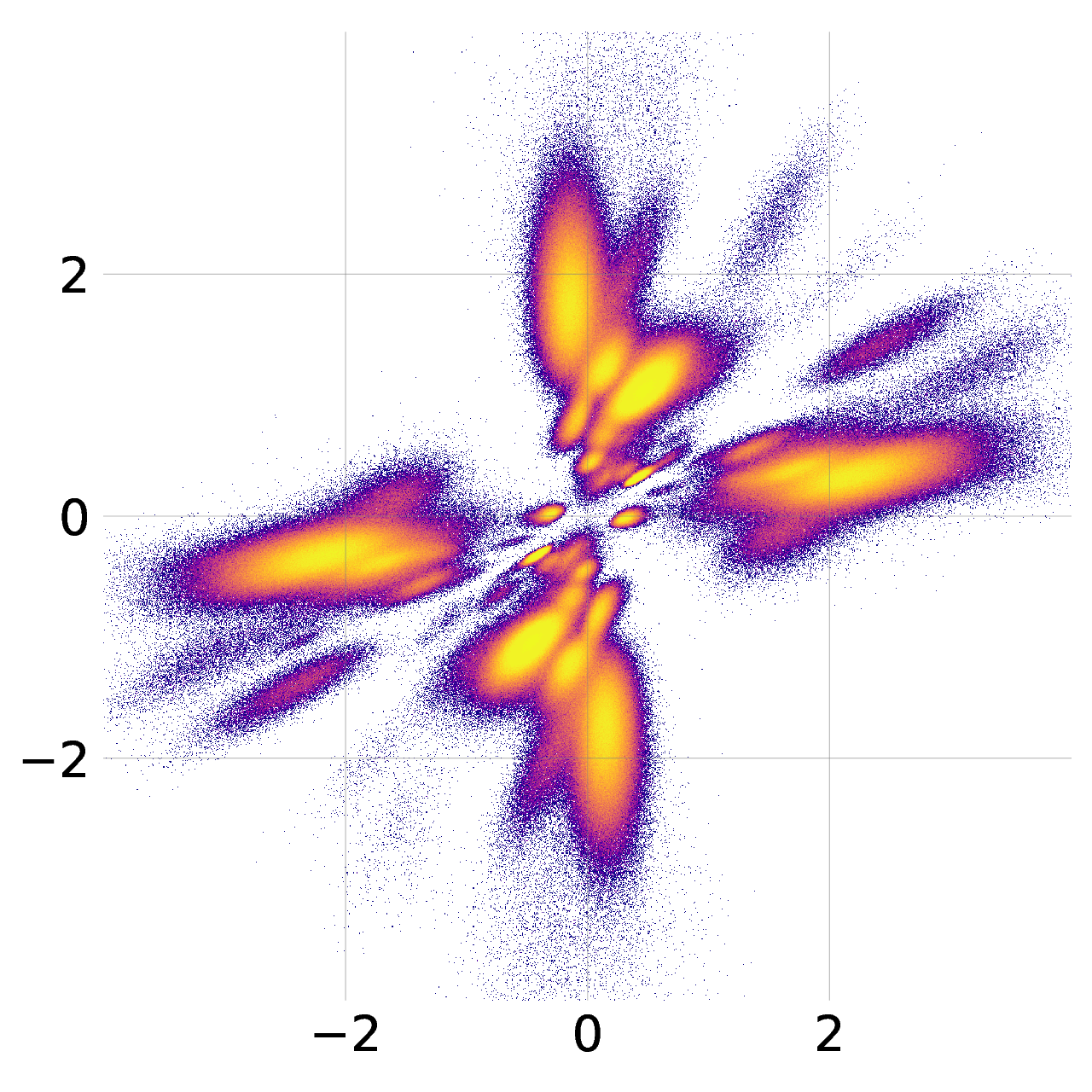} &
        \includegraphics[width=0.3\linewidth]{imgs/fireballs/air_under_full_mile_layer1_kernel_med_res.png} \\

        \rotatebox{90}{\;\;\;\quad\quad\quad\quad\quad\quad  \footnotesize  Bias} &
        \includegraphics[width=0.3\linewidth]{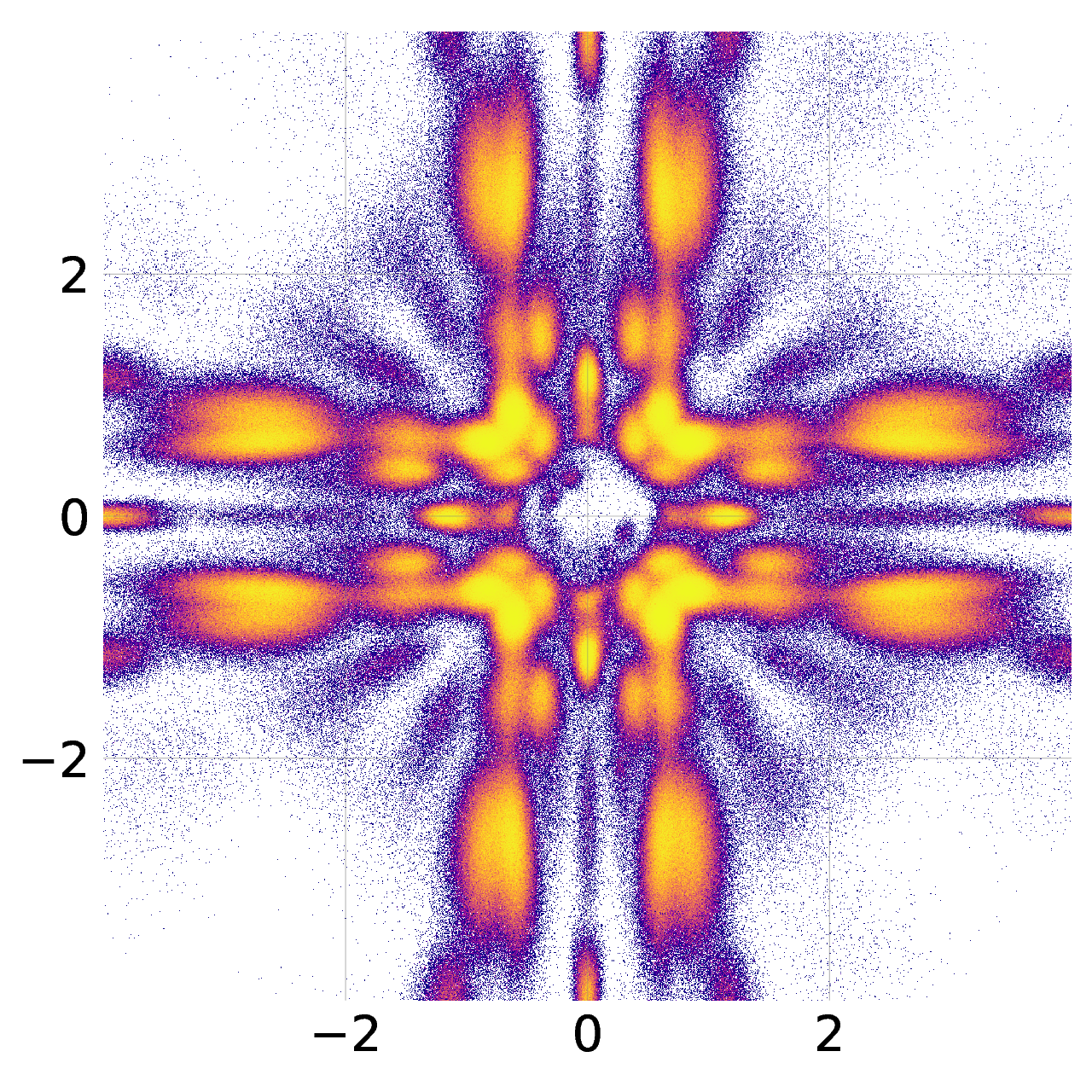} &
        \includegraphics[width=0.3\linewidth]{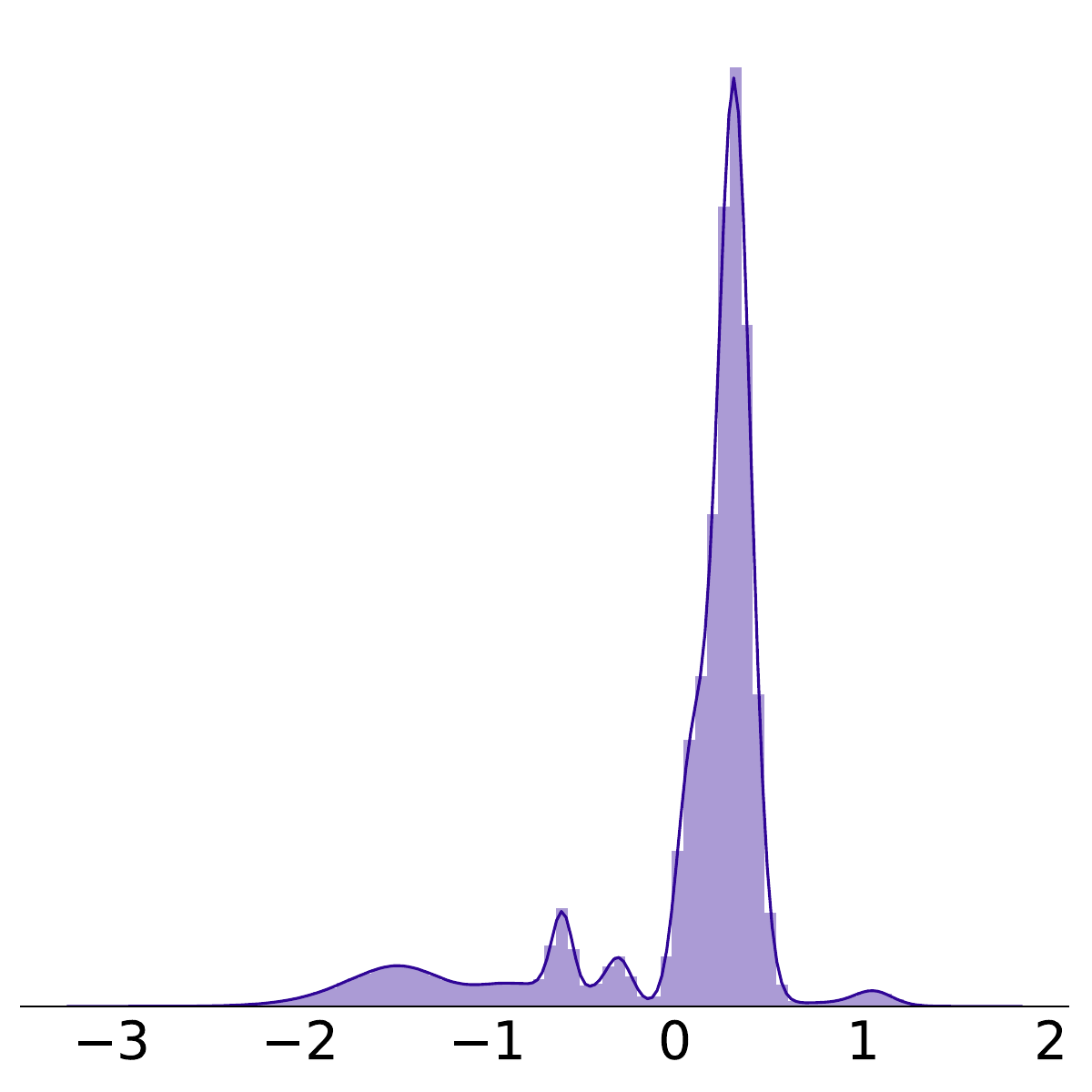} \\
        &  \footnotesize  Input to Hidden &   \footnotesize Hidden to Output
    \end{tabular}
    \caption{(Bivariate) marginal posterior densities of the underparametrized one-hidden layer fully-connected BNN fitted on the \texttt{airfoil} dataset (following the small $f_1$ architecture of \citet{wiese2023towards}). The grid visualizes the empirical densities of 8M posterior samples obtained from 8k independent chains. The rows and columns (input and output weights) display representative densities of randomly chosen weights of the network.}
    \label{fig:airunderfireballgrid}
\end{figure}

\clearpage

\section{BENCHMARKS}\label{app:uci_benchmark}

In this section, we provide benchmarks of sampling-based inference, both tabular regression with MLPs as well as image classification with CNN architectures. Overall, we find that sampling-based inference outperforms approximate Bayesian inference methods, delivering superior predictive performance and uncertainty quantification. We first describe the experimental setup in detail before showing the respective results. 

\paragraph{UCI Benchmark}
For the UCI benchmark presented in \cref{tab:uci_benchmark}, we fit classical mean regression to the different tasks corresponding to the datasets described in \cref{tab:dataoverview}. In the process, we always use a fully-connected feed-forward neural network with 3 hidden layers of size 16 each, resulting in about 700 total model parameters. If sampling from the posterior is done, we use 1000 samples per ensemble member (chain). We describe the configuration of the employed methods one by one:

\begin{itemize}
    \item For the \textbf{Laplace approximation} (LA), we utilize a JAX-based implementation to first train MAP solutions using the Adam optimizer with decoupled weight decay \citep{loshchilov2018decoupled} for 10000 epochs with a learning rate of $0.005$ to then carry out last-layer LA with a generalized Gauss Newton Hessian approximation and closed-form predictive approximation as detailed in \citet{daxberger_2021_LaplaceReduxa}. The variance of the predictive distribution is calculated according to \citet{daxberger_2021_LaplaceReduxa} with a small additional noise variance term.
    \item For \textbf{mean-field variational inference} (MFVI), we utilize a Gaussian posterior approximation with independence assumption. We optimize the evidence lower bound (ELBO) for 5000 epochs with the Adam optimizer and a learning rate of $0.005$. The variance of the predictive distribution is calculated as the variance over the predictions made with $100$ samples from the fitted approximate posterior with a small additional observation noise term.  
    \item As the recently proposed \textbf{Microcanonical Langevin Ensemble} (MILE) approach provides both an optimized \textbf{Deep Ensemble} (DE) and a \textbf{Bayesian Deep Ensemble} (BDE), we follow the suggested setup of \citet{MILE}, i.e., the DE is optimized with the Adam optimizer with decoupled weight decay with (memberwise) early stopping and the sampling then uses the proposed auto-tuning strategy of MILE comprising 50k steps before then providing 1k samples (after the thinning of 10k samples).
\end{itemize}
Each method is evaluated using three distinct train-test splits to assess the robustness of its performance.

\begin{table*}[!htb]
\caption{Mean RMSE ($\downarrow$) and LPPD ($\uparrow$) results ($\pm$ standard deviation across 3 train-test splits) for a 3 hidden-layer fully-connected neural network on regression tasks. Numbers in brackets indicate the number of ensemble members/chains.}
\label{tab:uci_benchmark}
\centering
\resizebox{0.7\textwidth}{!}{
{
\begin{tabular}{l|l|c|c|c|c}
 & \textbf{Dataset} & \textbf{Laplace} & \textbf{MFVI} & \textbf{\phantom{-}DE (10)} & \textbf{\phantom{-}MCMC}\\
\hline
\multirow{4}{*}{\rotatebox{90}{LPPD}} & Airfoil & $-1.056 \pm 0.003 $ & $-0.975 \pm 0.004 $ & $-0.293 \pm 0.096$ & $\mathbf{\phantom{-}0.016 \pm 0.293}$ \\
& Bikesharing & $ -1.046 \pm 0.001 $ & $ -0.990 \pm 0.005 $ & $-0.223 \pm 0.181$ & $\mathbf{-0.060 \pm 0.096}$ \\
& Concrete & $ -1.131 \pm 0.036 $ & $ -0.998 \pm 0.007 $ & $-0.510 \pm 0.189$ & $\mathbf{\phantom{-}0.042 \pm 0.056}$ \\
& Energy & $ -1.046 \pm 0.004 $ & $ -0.945 \pm 0.002 $ & $\phantom{-}1.561 \pm 0.101$ & $\mathbf{\phantom{-}1.947 \pm 0.047}$ \\
\hline
\multirow{4}{*}{\rotatebox{90}{RMSE}} & Airfoil & $\phantom{-}0.237 \pm 0.013 $ & $ \phantom{-}0.276 \pm 0.009 $ & $\phantom{-}0.269 \pm 0.016$& $ \mathbf{\phantom{-}0.184 \pm 0.016}$ \\
& Bikesharing & $\phantom{-}0.252 \pm 0.006 $ & $ \phantom{-}0.318 \pm 0.018 $  & $\mathbf{\phantom{-}0.253 \pm 0.015}$ & $ \phantom{-}0.262 \pm 0.018$ \\
& Concrete & $\phantom{-}0.482 \pm 0.100 $ & $ \phantom{-}0.350 \pm 0.025 $ & $\phantom{-}0.297 \pm 0.032$& $\mathbf{\phantom{-}0.270 \pm 0.034}$ \\
& Energy & $\phantom{-}0.065 \pm 0.008 $ & $ \phantom{-}0.126 \pm 0.007 $ & $\phantom{-}0.050 \pm 0.001$ & $\mathbf{\phantom{-}0.041 \pm 0.003}$ \\
\end{tabular}
}}
\end{table*}

\paragraph{Image Classification on CIFAR-10 Using SGHMC}
We extend the tabular UCI benchmarks of \cref{tab:uci_benchmark} to an image classification task on CIFAR-10 using a small ResNet with 76106 parameters. We employ scale-adapted SGHMC \citep{adasghmc} for sampling, use an ensemble of 10 with 5k warmup steps, 25k sampling steps, thinning of 250, step size 0.001, momentum decay 0.05, batch size 256, and standard normal isotropic priors. Notably, compared to DE optimization, \textbf{sampling required only 25\% additional compute}. \cref{tab:cifar10_results} displays both predictive and UQ performance as well as wallclock time in comparison with optimization-based approximate inference methods implemented via the novel \texttt{posteriors} package \citep{duffield2025scalable}. Again, the sampling approach provides the best predictive UQ and precision at a negligible cost in wall-clock time. Due to the non-trivial optimization of some VI methods (for example, caused by noisy gradients), sampling can be faster and more robust to fit even in these larger-scale settings.

\begin{table*}[h!]
\caption{Image classification task on CIFAR-10 using a small ResNet with 76106 parameters. We employ scale-adapted SGHMC for sampling, use an ensemble of 10 with 5k warmup steps, 25k sampling steps, thinning of 250, step size 0.001, momentum decay 0.05, batch size 256, and standard normal isotropic priors. Mean accuracy ($\uparrow$) and LPPD ($\uparrow$) results ($\pm$ standard deviation) for CIFAR10 classification are reported. Numbers in brackets indicate ensemble members/chains.}
\label{tab:cifar10_results}
\vspace{0.1cm}
\centering
\resizebox{0.9\textwidth}{!}{
\begin{tabular}{r|c|c|c|c} & \textbf{Laplace} & \textbf{MFVI} & \textbf{DE (10)} & \textbf{BDE (10)} \\
\hline
Accuracy ($\uparrow$) & $\phantom{-}0.7851 \pm 0.0067$ & $\phantom{-}0.7133 \pm 0.0102$ & $\phantom{-}0.8222 \pm 0.0013$ & $\mathbf{\phantom{-}0.8273 \pm 0.0020}$ \\
LPPD ($\uparrow$) & $-1.6731 \pm 0.0571$ & $-0.8410 \pm 0.0193$ & $-0.5341 \pm 0.0050$ & $\mathbf{-0.5149 \pm 0.0030}$ \\
Wallclock Time in Minutes ($\downarrow$) & $ \phantom{.}12.92 \pm 0.41$ & $ \phantom{.}15.78 \pm 0.27 $ & \phantom{.}$\mathbf{11.59 \pm 0.34}$ & \phantom{.}$14.41 \pm 0.02$ \\
\end{tabular}
}
\end{table*}

\paragraph{Computational Cost Comparison Between DE and BDE}

To assess the trade-off between computational cost and predictive performance, we compare Deep Ensembles and Bayesian Deep Ensembles on the \texttt{airfoil} and \texttt{ionosphere} datasets. While BDE for the same number of members is significantly more expensive---taking 12.22 and 8.25 times more wall-clock time per member on \texttt{airfoil} and \texttt{ionosphere}, respectively---it consistently outperforms DE in terms of LPPD. The results of the experiments are given in \cref{tab:compbudgetcomp}. For instance, BDE(1) on \texttt{ionosphere} achieves an LPPD of $-0.1544 \pm 0.0119$ compared to DE(8000)'s $-0.2923$, despite the latter using nearly 1000 times more compute. For the \texttt{airfoil} dataset to ensure a fair comparison in terms of runtime, we compare BDE(10) to DE(125) over 3 replications and again observe superior sampling performance for the same computational effort.

\begin{table*}[h!]
\caption{Computational cost and predictive performance comparison between DE and BDE on the \texttt{airfoil} and \texttt{ionosphere} datasets.}

\label{tab:compbudgetcomp}
\vspace{0.1cm}
\centering
\resizebox{0.7\textwidth}{!}{
\begin{tabular}{l|c|c|c|c} \textbf{Model} & \textbf{Ensemble Members} & \textbf{Dataset} & \textbf{LPPD} ($\uparrow$) & \textbf{RMSE/Accuracy} \\
\hline
DE & 125 & \texttt{airfoil} & $0.0285 \pm 0.0266$ & $0.2870 \pm 0.0240$  \\
BDE & 10 & \texttt{airfoil} & $\mathbf{0.7660 \pm 0.0075}$ & $\mathbf{0.1415 \pm 0.0079}$  \\
\hline
DE & 8000 & \texttt{ionosphere} & $-0.2923$ & $0.9154$  \\
BDE & 1 & \texttt{ionosphere} & $\mathbf{-0.1544}$ & $\mathbf{0.9436}$  
\end{tabular}
}
\end{table*}

\end{document}